\newtheorem{proposition}{Proposition}
\newtheorem{theorem}{Theorem}
\newtheorem{lemma}{Lemma}
\newtheorem{definition}{Definition}
\newtheorem{assumption}{Assumption}
\newtheorem{corollary}{Corollary}
\newtheorem{remark}{Remark}
\begin{document}

\title{Quantitative Error Feedback for Quantization Noise Reduction of Filtering over Graphs}

\author{
\IEEEauthorblockN{Xue Xian Zheng,~\IEEEmembership{Graduate Student Member,~IEEE}, Weihang Liu, Xin Lou,~\IEEEmembership{Senior Member,~IEEE}, \\Stefan Vlaski,~\IEEEmembership{Member,~IEEE}, and Tareq Y. Al-Naffouri,~\IEEEmembership{Fellow,~IEEE}}
        % <-this % stops a space
\thanks{An earlier version \cite{Zhengconf} of this paper was presented in part at the 50th IEEE International Conference on Acoustics, Speech and Signal Processing (ICASSP), 2025, Hyderabad, India.
%(\textit{Corresponding author:.})
}
\thanks{Xue Xian Zheng, Tareq Y. Al-Naffouri are with the Division of Computer, Electrical and Mathematical Sciences and Engineering (CEMSE), King Abdullah University Of Science And Technology (KAUST), Thuwal
23955-6900, Kingdom of Saudi Arabia.}
\thanks{Weihang Liu, Xin Lou are with the School of Information Science and
Technology, ShanghaiTech University, Shanghai 201210, China.}
\thanks{Stefan Vlaski is with the Department of Electrical and Electronic Engineering, Imperial College London, SW7 2AZ London, U.K.}}

%
% The paper headers
\markboth{Accepted by IEEE Transactions on Signal Processing
}%
{Shell \MakeLowercase{\textit{et al.}}: A Sample Article Using IEEEtran.cls for IEEE Journals}
\maketitle

\begin{abstract}
This paper introduces an innovative error feedback framework designed to mitigate quantization noise in distributed graph filtering, where communications are constrained to quantized messages. It draws from
error spectrum shaping techniques from state-space digital filters, and
therefore establishes connections between quantized filtering processes
over different domains. In contrast to existing error compensation methods, our framework quantitatively feeds back the quantization noise for exact compensation. We examine the framework under three key scenarios: (i) deterministic graph filtering, (ii) graph filtering over random graphs, and (iii) graph filtering with random node-asynchronous updates. Rigorous theoretical analysis demonstrates that the proposed framework significantly reduces the effect of quantization noise, and we provide closed-form solutions for the optimal error feedback coefficients. Moreover, this quantitative error feedback mechanism can be seamlessly integrated into communication-efficient decentralized optimization frameworks, enabling lower error floors. Numerical experiments validate the theoretical results, consistently showing that our method outperforms conventional quantization strategies in terms of both accuracy and robustness.
\end{abstract}

\begin{IEEEkeywords}
Graph signal processing, distributed graph filtering, quantization, error
feedback, stochastic linear system, decentralized
optimization.
\end{IEEEkeywords}

\section{Introduction}
\IEEEPARstart{T}{he} theory of graph filtering has seen substantial progress in recent years \cite{Zhengconf,Pierre1,Mourasig,Pierre2}, emerging as a cornerstone of modern signal processing and machine learning on networked data. A rich array of graph filters \cite{Elvinoverview}---spanning spectral methods to spatial aggregation schemes---has been developed, providing foundational insight into the analysis, modeling, and transformation of signals defined over irregular domains. These tools have proven invaluable across a wide range of applications, including graph signal denoising \cite{denoising}, clustering \cite{Pierreclustering}, compression \cite{compression}, and reconstruction \cite{reconstruction}, as well as network topology learning and inference \cite{topology}. Furthermore, graph filters form the backbone of graph neural networks \cite{defferrard2016convolutional,kipf2016semi,luo2024classic,arroyo2025vanishing}, empowering advances in neuroscience \cite{neuroscience}, finance \cite{stock}, drug discovery \cite{drug}, and beyond. 

Despite their success, practical implementations often face critical challenges when scaling to large-scale graphs and large data volumes. Centralized deployments \cite{gama2020graphs} could become computationally prohibitive, as they require simultaneous access to and processing of the entire graph structure and all associated data attributes. Distributed implementations \cite{vlaski2023networked} address this by allowing each node to locally process the data and share its information with its immediate neighbors, thus improving the scalability of the computation. For instance, finite impulse response (FIR) graph filters \cite{Mario1} employ iterative aggregation of multi-hop neighbor information for localized computations, while infinite impulse response (IIR) graph filters \cite{Elvin_iir} leverage continuous neighbor communication to perform recursive node updates. Nevertheless, these benefits of distributed graph filtering are accompanied by significant communication overhead due to frequent message exchanges. 

While low-bit quantization \cite{Rouladiff} can reduce transmission costs, the resulting quantization noise can accumulate and propagate across the entire network, complicating the analysis and quantification of performance degradation. This issue is further exacerbated by the inherent randomness of link losses \cite{Elvinrandom} and asynchronous operations \cite{Tekeas,Coutino} in distributed systems, making noise management particularly challenging.

In this paper, an innovative error feedback framework is proposed to address these challenges. The key insight lies in drawing the analogy between the problem of quantized message passing in graph filtering and the classical round-off noise problem in state-space digital filtering. By reimagining error spectrum shaping (ESS) techniques, the framework enables exact compensation of quantization errors, achieving accurate and communication-efficient filtering processes over graphs. Furthermore, it can be extended from filtering to optimization over graphs, improving optimization accuracy under communication constraints. 

\subsection{Related Works}
\subsubsection{Noise Reduction in Quantized Graph Filtering}
Early research efforts primarily examined quantization effects in consensus algorithms---essentially DC-graph filters---where nodes iteratively converge to global averages under communication constraints \cite{Rabbat, Hongbin, Gossip, Soummya, Dorina}. More recent work expanded these insights to general graph filters, addressing quantization challenges through diverse strategies. For example, \cite{Alejandro} analyzes the impact of fixed-stepsize quantization on FIR graph filters, while \cite{Nobre} introduces an adaptive quantization scheme that optimizes bit allocation to minimize errors. Another line of work in \cite{Dorina2} approximates graph spectral dictionaries via polynomials of the graph Laplacian, enhancing robustness to signal quantization through learned polynomial representations. The state-of-the-art approach in \cite{Elvinqq} dynamically reduces quantization step sizes while co-optimizing filter designs. Nevertheless, these techniques do not incorporate post-quantization mitigation strategies explicitly designed to actively compensate for quantization distortions after occurrence, limiting their potential effectiveness.
\subsubsection{Error Feedback in Quantized VLSI Digital Filtering} Error feedback mechanisms have long been pivotal for mitigating quantization noise in digital signal processing, especially within fixed-point very-large-scale integration (VLSI) systems \cite{parhi2007vlsi}. Early innovations, such as ESS \cite{ESS} for pulse-code modulation (PCM) signal quantization, laid foundational principles for noise reduction in recursive digital filters \cite{Bede}. Subsequent developments incorporated error feedback modules into state-space digital filters via simple circuit implementations \cite{Vaidyanathan, Munson, Hinamoto, Lu}, enabling detailed control of quantization artifacts. Parallel advancements in $\Delta$-$\Sigma$ modulation have utilized noise-shaping techniques, leveraging specialized feedback filters to reposition quantization noise outside critical signal bands \cite{ERRORD1, ERRORD2, ERRORD3}. Despite substantial progress, these methodologies remain restricted to classical digital filtering and circuit designs. To date, no research has effectively translated these error feedback concepts into graph filtering frameworks to achieve precise compensation for communication noises.
\subsubsection{Error Feedback for Decentralized Optimization under Communication Constraints} Error feedback strategies have recently become prominent in enhancing communication efficiency in distributed optimization scenarios, particularly addressing gradient quantization artifacts \cite{seide14_interspeech, stich2018sparsified, ICML}. Approaches proposed by \cite{richtarik2021ef21, richtarikMomentum, beznosikov2023biased} effectively employ quantization residuals, reinjecting them into subsequent gradient compression to preserve stable convergence. However, these techniques predominantly focus on centralized parameter-server architectures, overlooking the unique constraints of decentralized, graph-based systems. Recent works \cite{deff,def} have demonstrated impressive progress in integrating error feedback into diffusion-based decentralized optimization with differential quantization. Nevertheless, certain challenges remain—particularly in the tuning of noise compensation hyperparameters, which may not consistently lead to optimal performance. Moreover, error feedback mechanisms for non-gradient compensation remain unexplored.

\subsection{Paper Contributions}
This paper makes the following key contributions:
\begin{enumerate}
    \item We introduce a state-space perspective for graph filtering as a bridge to classical state-space digital filtering. Leveraging this perspective, we reimagine ESS—a technique from VLSI digital filtering—for graph domains, modeling the noise from quantized communications as round-off errors. By deriving noise propagation models and optimizing the noise gain, our method achieves the exact compensation of quantization noise.
    
    \item We conduct a thorough analysis on quantization noise of graph filtering processes in deterministic settings, over random graphs, and with random node-asynchronous updates. Moreover, we provide closed-form solutions for error feedback coefficients in these scenarios, and rigorously show that exact compensation can be achieved regardless of whether filtering is performed over deterministic or random processes.
    
    \item We demonstrate that our proposed approach can be seamlessly integrated into communication-efficient decentralized optimization frameworks. Specifically, we extend our framework to address the challenges of regression over networks under quantized communication. Our results show that our approach achieves significant reductions in quantization noise while improving optimization accuracy. These findings highlight its potential for robust and efficient decentralized optimization in resource-constrained environments.
\end{enumerate}

\subsection{Organization and Notation}
The remainder of this paper is organized as follows. Section II introduces the state-space perspective of graph filtering and reviews fundamental concepts in quantization for communication efficiency. Section III conducts a quantization noise analysis of deterministic graph filtering processes and proposes a quantitative error feedback mechanism to achieve exact compensation, enabling optimal noise reduction. Section IV extends this analysis to graph filtering under random graphs and node-asynchronous updates, demonstrating how exact error compensation via feedback remains effective despite these uncertainties. Section V corroborates our theoretical findings through numerical simulations and demonstrates the connection and framework's potential for decentralized optimization via a linear regression task over networks. Finally, Section VI concludes the paper and summarizes key contributions.

Throughout this paper, we use the following notations. Non-boldface lowercase and uppercase letters (e.g., \(x, X\)) denote scalars, boldface lowercase letters (e.g., \(\mathbf{x}\)) represent column vectors, and boldface uppercase letters (e.g., \(\mathbf{X}\)) signify matrices. The operators \((\cdot)^\top\), \((\cdot)^{*}\), \((\cdot)^\mathrm{H}\), and \((\cdot)^{-1}\) indicate the transpose, conjugate, Hermitian transpose, and inverse, respectively. We denote by \(\mathbb{R}\) the set of real numbers and  \(\mathbb{C}\)  the set of complex numbers, respectively. For a complex number \(z\), we use \(\Re[z]\), \(\Im[z]\) and \(\vert z\vert\) to represent its real part, imaginary part, and modulus, respectively. For a vector \(\mathbf{x}\), \([\mathbf{x}]_{i}\) denotes its \(i\)-th element. For a matrix \(\mathbf{X}\), \([\mathbf{X}]_{ij}\) denotes the element in the \(i\)-th row and \(j\)-th column. The trace of \(\mathbf{X}\) is denoted by \(\mathrm{tr}(\mathbf{X})\), while \(\rho(\mathbf{X})\) represents its spectral radius, defined as the largest eigenvalue in magnitude. The spectral norm of \(\mathbf{X}\) is denoted by \(\|\mathbf{X}\|_2\). $\mathrm{diag}(\cdot)$ returns a diagonal
matrix with its argument on the main diagonal. $\mathrm{blkdiag}(\cdot)$ returns a block diagonal
matrix. $\mathrm{vec}(\cdot)$ denotes the vectorization, which transforms a matrix into a single column vector by vertically stacking its columns. $\mathbf{1}$ and $\mathbf{I}$ are the all-one vector and identity matrix of appropriate size, respectively. We use $\mathbb{E}[\cdot]$ to denote the expectation. We use $\succ$ and $\succeq$ to represent the positive definite (PD) and positive semi-definite (PSD) orderings, respectively. The Hadamard and Kronecker product are represented by $\odot$ and $\otimes$, respectively, while $\prod^{\leftarrow}$ and $\prod^{\rightarrow}$ denote left and right continued matrix multiplications.

\section{State-Space Recursion of Graph Filtering and Quantization}
\label{s2}
Consider the following state-space recursion model
 \begin{equation}
\begin{aligned}
    \mathbf{w}_{t} & = \mathbf{\Phi}_{t-1} \mathbf{w}_{t-1} + \mathbf{\Gamma}_{t} \mathbf{u}_t,
\end{aligned}
\label{state-space}
\end{equation}
where \(\mathbf{u}_t \in \mathbb{C}^N\) represents the input vector and \(\mathbf{w}_t \in \mathbb{C}^{N}\) is the state vector containing cumulative and intermediate information from previous inputs. The model parameters \(\mathbf{\Phi}_{t-1}\in\mathbb{C}^{N\times N}\), \(\mathbf{\Gamma}_{t}\in\mathbb{C}^{N\times N}\) are the state transition matrix and the input mapping matrix, respectively. Depending on how \(\mathbf{\Phi}_{t-1}\) is parameterized, (\ref{state-space}) characterizes a range of fundamental dynamics. For instance, in state-space digital filters, \(\mathbf{\Phi}_{t-1}\) is derived from the desired frequency response, encoding pole information that can be implemented in the circuits; here, \(\mathbf{w}_t\) plays the role of intermediate computational nodes stored in hardware registers. In contrast, structured (neural) state-space sequence models \cite{gu2021efficiently,gu2024mamba,dao2024transformers} parameterize \(\mathbf{\Phi}_{t-1}\) using the HiPPO matrix \cite{gu2020hippo} or its diagonal variants, replacing attention mechanisms for capturing long-range dependencies; in these models, \(\mathbf{w}_t\) typically resides in on-chip memory to facilitate tensor-core computations. Graph filtering is another key parametrization of (\ref{state-space}), where \(\mathbf{\Phi}_{t-1}\) encodes the topology of a graph \(\mathcal{G} = (\mathcal{V}, \mathcal{E})\). Here node set \(\mathcal{V} = \{1, 2, \dots, N\}\) represents the graph's vertices, while edges in \(\mathcal{E} \subseteq \mathcal{V} \times \mathcal{V}\) with size \(|\mathcal{E}| = E\) define connections: an edge \((j, i) \in \mathcal{E}\) indicates a link from node \(j\) to node \(i\). In this context, both \(\mathbf{u}_t\) and \(\mathbf{w}_t\) are node-defined signals: \(\mathcal{V} \to \mathbb{C}^N\). Depending on the characteristics of the graph and the type of recursion adopted, the following filtering processes are considered:

\subsection{Deterministic Graph Filtering}

If \(\mathcal{G}\) is fixed throughout the aforementioned recursions, we have deterministic graph filtering processes. Let the graph shift operator $\mathbf{S} \in \mathbb{R}^{N \times N}$ encode the graph structure, which is commonly chosen as the adjacency matrix $\mathbf{A}$, the Laplacian $\mathbf{L} = \mathrm{diag}(\mathbf{A}\mathbf{1}) - \mathbf{A}$ or any of their normalized or translated variants. And let us assume it has a bounded spectral radius \(\rho(\mathbf{S})=\rho_{\star}\). We have the following filters to capture different propagation dynamics over the graph: 

\textbf{Deterministic FIR.} By \(\mathbf{\Phi}_{t-1}=\mathbf{S}\), \(\mathbf{u}_t=\mathbf{0}\) and \(\mathbf{w}_0=\mathbf{x}\) in (\ref{state-space}), the following recursion outputs $\mathbf{y}$:
 \begin{equation}
\begin{aligned}
    \mathbf{w}_{t} = \mathbf{S}\mathbf{w}_{t-1},  \quad \textnormal{and} \quad \mathbf{y} =   \sum_{t=0}^{T}\phi_{t}\mathbf{w}_{t}.
\end{aligned}
\label{dfir}
\end{equation}

\textbf{Deterministic IIR.} By \(\mathbf{\Phi}_{t-1}= \psi_k\mathbf{S}\) 
 (with \(|\psi_k \rho_{\star}| < 1\)), \(\mathbf{\Gamma}_{t}=\varphi_k\mathbf{I}\), and \(\mathbf{u}_t= \mathbf{x}\) in (\ref{state-space}), the dynamics and output $\mathbf{y}_t$ are given by:
  \begin{equation}
\begin{aligned}
\mathbf{w}^{(k)}_{t} &= \psi_k \mathbf{S}\mathbf{w}^{(k)}_{t-1} + \varphi_k\mathbf{x} \\
\mathbf{y}_{t} &= \sum_{k=1}^{K}\mathbf{w}^{(k)}_{t} \quad \text{for } t \geq 1.
\end{aligned}
\label{diir}
\end{equation}

 As we can see, (\ref{dfir}) truncates the recursion (\ref{state-space}) to a finite \(T\)-step process, assigning a scalar coefficient \(\phi_t\) to each intermediate state \(\mathbf{w}_t\). In contrast, (\ref{diir}) implements \(K\) parallel fixed-point iterations of (\ref{state-space}), which asymptotically converge to a steady state governed by coefficient pairs \((\psi_k, \varphi_k)\). The exact solutions of (\ref{dfir}) and (\ref{diir}) are given by $\mathbf{y} = \sum_{t=0}^{T} \phi_t \mathbf{S}^t \mathbf{x}$, and \(\mathbf{y}=\lim_{t\to\infty}\mathbf{y}_{t}=\sum_{k=1}^K\varphi_k(\mathbf{I}-\psi_k \mathbf{S})^{-1}\mathbf{x}\), respectively. 

The interpretations for the above solutions come from graph Fourier analysis, based on the following definition.
\begin{definition}[Graph Fourier transform]
\label{D1}
Let the graph shift operator \(\mathbf{S}\in \mathbb{R}^{N \times N}\) be diagonalizable with eigenvalue decomposition (EVD) \(
\mathbf{S} = \mathbf{U}\mathbf{\Lambda}\mathbf{U}^{-1}\), where eigenvectors \(\mathbf{U} = [\mathbf{u}_1, \mathbf{u}_2, \ldots, \mathbf{u}_N]\) represent the graph Fourier modes, and eigenvalues \(\mathbf{\Lambda} = \mathrm{diag}(\lambda_{1}, \dots, \lambda_{N})\) correspond to the graph frequencies (assume real-valued). For any signal \(\mathbf{x} \in \mathbb{R}^{N}\), its graph Fourier transform is defined as \(\hat{\mathbf{x}}=\mathbf{U}^{-1}\mathbf{x}\), and the corresponding inverse graph Fourier transform is given by \(\mathbf{x}=\mathbf{U}\hat{\mathbf{x}}\), where $\hat{\mathbf{x}}$ is the vector of graph Fourier coefficients. 
\end{definition}
Under Definition \ref{D1}, plugging \(
\mathbf{S} = \mathbf{U}\mathbf{\Lambda}\mathbf{U}^{-1}\) into the solutions of (\ref{dfir})-(\ref{diir}) shows FIR graph filtering modulates graph Fourier coefficients $\hat{\mathbf{x}}$ via polynomials \(h(\lambda)=\sum_{t=0}^T \phi_t \lambda^{t}\), while IIR employs rational polynomials of the form \(h(\lambda)=\sum_{k=1}^K \varphi_{k}/(1-\lambda\psi_{k})\), both over the graph frequency set \(\lambda \in \{\lambda_{1}, \dots, \lambda_{N}\}\). As \(h(\lambda)\in\mathbb{R}\), it requires \(\phi_t \in \mathbb{R}\) and conjugate pairs \((\psi_k, \varphi_k)\in \mathbb{C}\). This resembles the celebrated frequency response modulations of classical FIR and IIR digital filters.

\subsection{Graph Filtering over Random Graphs}
If the connectivity of \(\mathcal{G}\) changes randomly over the recursions, we have graph filtering processes over random graphs, which is often used to model graph filtering under random link failures. More concretely,

\begin{definition}[Random edge sampling \cite{Elvinrandom}]
\label{D2}
Let $\mathcal{G} = (\mathcal{V}, \mathcal{E})$ be the underlying graph with edge $(j,i) \in \mathcal{E}$ that may fail. Its realization $\mathcal{G}_t = (\mathcal{V}, \mathcal{E}_t)$ is formed with edge sampling $\text{Pr}[(i,j)\in \mathcal{E}_{t}] = p, 0 \leq p < 1 $ for all $(i,j)\in \mathcal{E}$ independently.
\end{definition}

Under Definition \ref{D2}, the graph shift operator here is random, generated using a random Bernoulli matrix $\mathbf{B}_{t}$ as a mask \cite{Zhan}: $\mathbf{S}_{t} = \mathbf{A}_{t} = \mathbf{B}_{t} \odot \mathbf{A}$ or $\mathbf{S}_{t} = \mathbf{L}_{t} = \mathrm{diag}(\mathbf{A}_{t}\mathbf{1}) - \mathbf{A}_{t}$, with $\overline{\mathbf{S}} = \mathbb{E}[\mathbf{S}_{t}] = p \mathbf{A}$ or $\overline{\mathbf{S}} = \mathbb{E}[\mathbf{S}_{t}] = \mathbb{E}[\mathrm{diag}(\mathbf{A}_{t}\mathbf{1})]- p \mathbf{A}$, respectively. Furthermore, we assume the underlying graph \(\mathcal{G}\) is undirected, which implies that $\mathbf{S}$ is symmetric, i.e., $\mathbf{S}=\mathbf{S}^\top$. These constructions guarantee the spectral norm property $\Vert \mathbf{S}_{t} \Vert_{2} \leq \rho_{\star} $. Then

\textbf{FIR over Random Graphs.} By \(\mathbf{S}=\mathbf{S}_{t-1}\) and \(\mathbf{w}_0= \mathbf{x}\) in (\ref{dfir}), we obtain
 \begin{equation}
\begin{aligned}
    \mathbf{w}_{t} = \mathbf{S}_{t-1}\mathbf{w}_{t-1},  \quad \textnormal{and} \quad \mathbf{y} =   \sum_{t=0}^{T}\phi_{t}\mathbf{w}_{t}.
\end{aligned}
\label{rfir}
\end{equation}

 \textbf{IIR over Random Graphs.} By \( \mathbf{S}= \mathbf{S}_{t-1}\) in (\ref{diir}), satisfying $\Vert \psi_k \mathbf{S}_{t-1} \Vert_{2} \leq |\psi_k\rho_{\star}|< 1$), we have 
  \begin{equation}
\begin{aligned}
\mathbf{w}^{(k)}_{t} &= \psi_k \mathbf{S}_{t-1}\mathbf{w}^{(k)}_{t-1} + \varphi_k\mathbf{x} \\
\mathbf{y}_{t} &= \sum_{k=1}^{K}\mathbf{w}^{(k)}_{t} \quad \text{for } t \geq 1.
\end{aligned}
\label{riir}
\end{equation}

In contrast to prior results, the outputs \(\mathbf{y}\) from (\ref{rfir}) and \(\mathbf{y}_{t}\) from (\ref{riir}) are now random vectors, 
with the filtering process specifically encapsulated in their means. By taking expectations in (\ref{rfir}) and on both sides of (\ref{riir}), and examining the asymptotic behavior, it can be derived that \(\overline{\mathbf{y}}=\mathbb{E}[\mathbf{y}]=\sum_{t=0}^{T} \phi_t\overline{\mathbf{S}}^{t}\mathbf{x}\)
and \(\overline{\mathbf{y}}=\lim_{t\to \infty}{\mathbb{E}[\mathbf{y}_{t}]}=\sum_{k=1}^K\varphi_k(\mathbf{I}-\psi_k \overline{\mathbf{S}})^{-1}\mathbf{x}\). Compared with the solutions of (\ref{dfir})-(\ref{diir}), these results essentially replace \(\mathbf{S}\) with \(\overline{\mathbf{S}}\), which can be interpreted as filtering on the expected graph.
In addition to \(\overline{\mathbf{y}}\), the whole process also introduces notable variances \(Var[\mathbf{y}]\) induced by graph randomness, which are upper bounded by a quantity determined by the edge sampling probability \(p\) \cite[Proposition 3 and Theorem 3]{Elvinrandom}.

\subsection{Graph Filtering with Random Node-Asynchronous Updates}
If the nodes of \(\mathcal{G}\) are randomly updated throughout the recursions, the resulting process constitutes asynchronous graph filtering. This is established for dealing with large-scale graphs, where global synchronization becomes an important limitation. Specifically,  
\begin{definition}[Random node selection\cite{Tekeas}]
\label{D3}
Let $\mathcal{V}=\{1, \ldots, N\}$ be the complete node set of the underlying graph $\mathcal{G}$ with each node that may be selected for a subset. Its subset realization $\mathcal{T}_{t}$ can form an index-selection matrix $\mathbf{P}_{\mathcal{T}_{t}} \in \mathbb{R}^{N \times N}$, which is a diagonal matrix that has value $1$ only at the indices specified by the set $\mathcal{T}_{t}$. That is,
  \begin{equation}
\begin{aligned}
\mathbf{P}_{\mathcal{T}_{t}} = \sum_{i \in \mathcal{T}_{t}} \mathbf{e}_i \mathbf{e}_i^\top, \quad \textnormal{and} \quad \mathrm{tr}(\mathbf{P}_{\mathcal{T}_{t}}) = |\mathcal{T}_{t}|,
\end{aligned}
\label{def_rns1}
\end{equation}
where $\mathbf{e}_i \in \mathbb{R}^N$ is the $i$-th standard vector that has $1$ at the $i$th index and $0$ elsewhere, and $|\mathcal{T}_{t}|$ is the size of $\mathcal{T}_{t}$. And the distribution of $\mathbf{P}_{\mathcal{T}_{t}}$ satisfies 
  \begin{equation}
\begin{aligned}
\mathbb{E}[\mathbf{P}_{\mathcal{T}_t}] = \mathbf{P} \quad \forall t,  \quad \textnormal{and} \quad \mathbf{0} \prec \mathbf{P} \preceq \mathbf{I}.
\end{aligned}
\label{def_rns2}
\end{equation}
Here $\mathbf{P}\in\mathbb{R}^{N \times N}$ is the average index (node) selection matrix, which is a deterministic and diagonal matrix.
\end{definition}

Under Definition \ref{D3}, $\mathcal{T}_t$ identifies the nodes to be updated. Incorporating $\mathcal{T}_t$ into the recursions enables randomized asynchronous variations, where only indices from $\mathcal{T}_t$ are updated at each step, while the remaining indices remain unchanged. Specifically, we have the filter in the following:

\textbf{IIR with Random Node-Asynchronous Updates.} The randomized (\ref{diir}) updates $[\mathbf{w}^{(k)}_{t}]_i$ as $[\psi_k \mathbf{S} \mathbf{w}^{(k)}_{t-1}]_i + [\varphi_k \mathbf{x}]_i$ for $i \in \mathcal{T}_t$, while it remains $[\mathbf{w}^{(k)}_{t-1}]_i$ for $i \notin \mathcal{T}_t$, which is
  \begin{equation}
\begin{aligned}
\mathbf{w}^{(k)}_{t} & = \big(\mathbf{I}+\mathbf{P}_{\mathcal{T}_t} (\psi_k\mathbf{S} - \mathbf{I}) \big)\mathbf{w}^{(k)}_{t-1} + \mathbf{P}_{\mathcal{T}_t} (\varphi_k\mathbf{x})\\
\mathbf{y}_{t} &= \sum_{k=1}^{K}\mathbf{w}^{(k)}_{t} \quad \text{for } t \geq 1.
\end{aligned}
\label{aiir}
\end{equation}
as $\mathbf{w}^{(k)}_{t} = \sum_{i \notin \mathcal{T}_{t}} \mathbf{e}_i \mathbf{e}_i^\top \mathbf{w}^{(k)}_{t-1}+\sum_{i \in \mathcal{T}_{t}} \mathbf{e}_i \mathbf{e}_i^\top(\psi_k \mathbf{S}\mathbf{w}^{(k)}_{t-1} + \varphi_k\mathbf{x})$.

One can easily verify that the steady-state solution \(\mathbf{y}=\sum_{k=1}^K\varphi_k(\mathbf{I}-\psi_k \mathbf{S})^{-1}\mathbf{x}\) coming from (\ref{diir}) satisfies (\ref{aiir}) by letting $\mathbf{w}^{(k)}_{t}=\mathbf{w}^{(k)}_{t-1}$. Moreover, if \(\Vert \psi_{k}\mathbf{S}\Vert_{2}<1\) and \(\mathbf{P}=p\mathbf{I}\) with $\text{Pr}[i\in\mathcal{T}_{t}] = p, 0 < p \leq 1 $ for all node $i\in\mathcal{V}$, then asymptotically \(\lim_{t\to \infty}{\mathbb{E}[\mathbf{y}_{t}]}\to \mathbf{y}\) and \(\lim_{t\to \infty}{\mathbb{E}[\Vert\mathbf{y}_{t}-\mathbf{y}\Vert^{2}_{2}]}\to \mathbf{0}\), which ensures the mean square convergence of (\ref{aiir}) \cite[Corollary 2]{Tekeas}. In contrast to (\ref{riir}) for random graphs, which filters in the mean $\overline{\mathbf{y}}$ while incurring notable variance \(Var[\mathbf{y}]\), this filtering converges to the exact solution of synchronized deterministic filtering, despite the randomness induced by node selection.\footnote{Definition \ref{D3} can also be applied to construct a random node-synchronous FIR graph filter. However, this approach does not guarantee convergence; instead, it yields a filter similar to (\ref{rfir}) that operates on the mean but with notable variance. An alternative convergent asynchronous update FIR filter may be found in \cite{Coutino}.}

\subsection{Quantization Model}
The overall filtering methods described above are primarily derived from parameterizing the state transition matrix \(\mathbf{\Phi}_{t-1}\) with the graph shift operator \(\mathbf{S}\) and its variants. Broadly, these matrices belong to a family of \emph{combination matrices} in the context of decentralized optimization, which impose structural sparsity constraints solely based on the underlying graph connectivity. Specifically, if nodes \(i\) and \(j\) are disconnected in the graph, the corresponding entry in the matrix satisfies \([\mathbf{\Phi}_{t-1}]_{ij} = 0.\) This sparsity structure enables a distributed and collaborative implementation, where each node aggregates information only from its local neighborhood. To formalize this, let $\mathcal{N}_i = \{j \mid (j, i) \in \mathcal{E}\}$ denote the set of nodes directly connected to node $i$ (including $i$ itself), it holds that \([\mathbf{\Phi}_{t-1}\mathbf{w}_{t}]_{i}\to\sum_{j\in \mathcal{N}_i}[\mathbf{\Phi}_{t-1}]_{ij}[\mathbf{w}_{t}]_{j}\) for recursion (\ref{state-space}). A critical challenge arises from the communication overhead associated with exchanging $[\mathbf{w}_{t}]_{j}$ between nodes. Because both energy and bandwidth resources are limited, nodes resort to quantizing their transmissions to reduce the bit rate. This introduces communication noise, which can be described by

\begin{assumption}[Communication noise by randomized message quantization\cite{Nobre,Elvinqq}]
\label{Ap1}
Let the combination process be modified for communication efficiency as
\(
\mathbf{\Phi}_{t-1}\mathbf{w}_t \longrightarrow \mathbf{\Phi}_{t-1}\,\mathcal{Q}[\mathbf{w}_t],
\)
where \(\mathcal{Q}[\cdot]\) denotes a uniformly randomized quantizer operating over the interval \([ -r, r ]\) with a \(b\)-bit signed representation. The quantization operation is modeled as
\begin{equation}
\mathcal{Q}[\mathbf{w}_t] = \mathbf{w}_t + \mathbf{n}_{t},
\label{assum_q1}
\end{equation}
where the noise \(\mathbf{n}_{t}\in\mathbb{C}^{N}\) being i.i.d. for each $t$, satisfies,
\begin{equation}
 \mathbb{E}[\mathbf{n}_{t}] = \mathbf{0} \quad \textnormal{and} \quad  \Sigma_{\mathbf{n}_t} = \sigma^2 \mathbf{I} = \frac{\Delta^2}{12} \Big(1 + \mathbb{I}_{\Im[\mathbf{n}_{t}]\neq\mathbf{0}}\Big) \mathbf{I}.
\label{assum_q2}
\end{equation}
Here $\Delta = 2r/2^b$ is the stepsize and $\mathbb{I}$ is the indicator function on $\Im[\mathbf{n}_{t}]$, respectively.
\end{assumption}
Integrating this assumption into recursions (\ref{dfir})-(\ref{riir}) and (\ref{aiir}) yields quantized graph filtering processes. Analyzing such processes is challenging as quantization noise is not only introduced but also accumulates and propagates across the entire network. However, from the illustrated state-space perspective, the communication noise here resembles the round-off noise in the context of digital filtering, where finite-precision arithmetic introduces quantization noise that amplifies and propagates through recursive filter structures. Based on this connection, we have the following remark:

\begin{remark}[Equalizing dynamic ranges with node variant graph filters]
\label{Rm1}
\textnormal{It is often desirable to assign the same bits to each node while avoiding overflow in the quantizer $\mathcal{Q}[\cdot]$ with range $[-r, r]$. The method of coordinate transformations \cite{parhi2007vlsi,Lu} for state-space recursion states that there exists a deterministic diagonal matrix $\mathbf{T}$ such that, through the equivalent state-space formulation $\mathbf{w}'_t = \mathbf{T}^{-1}\mathbf{w}_t$, $\mathbf{\Phi}_{t-1}' = \mathbf{T}^{-1}\mathbf{\Phi}_{t-1}\mathbf{T}$, and $\mathbf{\Gamma}' = \mathbf{T}^{-1}\mathbf{\Gamma}_{t}$ with constant $\mathbf{\Phi}_{t-1}, \mathbf{\Gamma}_{t}$, it holds:
\begin{equation}
    \mathbb{E}\left[ \left| \big[\mathbf{w}'_{t}\big]_{i} \right|^{2} \right] \leq r^2, \quad \forall i, t
    \label{eq:dynamic_range}
\end{equation}
for any energy-constrained input signal. This establishes an $\ell_2$-norm dynamic constraint on transmitted messages to regulate quantization. When normalization is applied to the input signal, $r$ can be set to $1$ for stable state-space recursion. For graph filters \(\mathbf{\Phi}_{t-1}\) parameterized by a graph shift operator $\mathbf{S}$ and its variants, the output relationship becomes: $\mathbf{y} = \sum_{t=0}^{T}\phi_{t}\mathbf{T}\mathbf{w}'_{t}$ and $\mathbf{y}_{t}= \sum_{k=1}^{K}\mathbf{T}^{(k)}\mathbf{w}'^{(k)}_{t}$. This provides an important insight for node-variant graph filters \cite{Mario1}: one can always design \emph{node‐energy‐balanced} graph filters by applying a suitable node‐variant (diagonal) transformation.}
\end{remark}

The developments in this section establish how graph filtering can be viewed through a state‐space lens, with communication noise manifesting as round-off noise that propagates through recursive updates. As we shall see next, these insights naturally pave the way for error feedback mechanisms to mitigate the quantization noise, and they provide a foundation for seamless integration into decentralized optimization frameworks for communication efficiency.

\section{Noise Analysis and Quantitative Error Feedback on Deterministic Processes}
\label{s3}
This section analyzes the propagation of quantization noise in deterministic graph filtering processes and introduces a quantitative error feedback (QEF) approach to mitigate its effect. Central to our contribution is addressing a critical question for graph filtering: \emph{How much noise should be fed back to achieve exact compensation and maximize global noise reduction?} Specifically, we introduce an auxiliary diagonal error feedback matrix in the network recursion and optimize its coefficients for
optimal noise reduction. This can be interpreted as each node locally storing its quantization error, applying an optimal weight factor, and participating in the combination process for the exact compensation. We derive closed-form solutions for the feedback coefficients for both deterministic FIR and IIR graph filtering, and quantify the exact noise reductions resulting from the error feedback. Moreover, the analysis provides precise noise gain regularizers that enable the design of filters with reduced sensitivity to quantization noise. 

\subsection{QEF on Deterministic FIR Graph Filtering}
\label{s3a}
Under Assumption \ref{Ap1}, the quantized (\ref{dfir}) can be rewritten as 
 \begin{equation}
\begin{aligned}
    \hat{\mathbf{w}}_{t} = \mathbf{S}\mathcal{Q}[\hat{\mathbf{w}}_{t-1}],  \quad \textnormal{and} \quad \mathbf{y} =   \sum_{t=0}^{T}\phi_{t}\hat{\mathbf{w}}_{t},
\end{aligned}
\label{qdfir}
\end{equation}
where initial $\hat{\mathbf{w}}_{0}=\mathbf{x}$. By substituting $\mathcal{Q}[\hat{\mathbf{w}}_{t-1}]=\hat{\mathbf{w}}_{t-1} + \mathbf{n}_{t-1}$ into (\ref{qdfir}) and subtracting (\ref{dfir}) from (\ref{qdfir}), we have the noise propagation model
 \begin{equation}
\begin{aligned}
    \widetilde{\mathbf{w}}_{t} &= \mathbf{S}(\widetilde{\mathbf{w}}_{t-1}+\mathbf{n}_{t-1}),  \quad \textnormal{and} \quad \widetilde{\mathbf{y}} = \sum_{t=0}^{T}\phi_{t}\widetilde{\mathbf{w}}_{t}, 
\end{aligned}
\label{eqdfir}
\end{equation}
where $\widetilde{\mathbf{w}}_{t} = \hat{\mathbf{w}}_{t} -\mathbf{w}_{t}$ and $\widetilde{\mathbf{w}}_{0}=\mathbf{0}$. If we introduce an auxiliary matrix $\mathbf{D}_{t-1}$ for $\mathbf{n}_{t-1}$ in (\ref{eqdfir}), we have error fed back noise propagation model
  \begin{equation}
\begin{aligned}
 \widetilde{\mathbf{w}}_{t} &= \mathbf{S}(\widetilde{\mathbf{w}}_{t-1}+\mathbf{n}_{t-1})-\mathbf{D}_{t-1}\mathbf{n}_{t-1}, \\
\widetilde{\mathbf{y}} &= \sum_{t=0}^{T}\phi_{t}\widetilde{\mathbf{w}}_{t}
\end{aligned}
\label{efdfir}
\end{equation}
with $\mathbf{n}_{t-1}$ i.i.d for $t$, $\mathbb{E}[\mathbf{n}_{t-1}]=\mathbf{0}$ and $\Sigma_{\mathbf{n}_{t-1}} = \sigma^2 \mathbf{I} $.

In (\ref{efdfir}), $\mathbf{D}_{t-1}$ acts as an error memory unit, strategically shaping the quantization noise to reduce its variance at the output. While setting \(\mathbf{D}_{t-1} = \mathbf{S}\) would theoretically nullify the output noise, this is infeasible in practice due to the local accessibility of quantization noise. Therefore, \(\mathbf{D}_{t-1}\) is constrained to be a diagonal matrix. This parallels ESS techniques in traditional signal processing, where feedback filters reshape the noise spectrum to minimize distortion. 

We can differently rewrite (\ref{efdfir}) for considering the contributions from each $\mathbf{n}_{t-1}$ as 
  \begin{equation}
\begin{aligned}
 \widetilde{\mathbf{y}} &= \sum^{T}_{t=1}\widetilde{\mathbf{y}}_{\mathbf{n}_{t-1}} =\sum^{T}_{t=1}\underbrace{\sum_{\tau =t}^{T}\phi_{\tau}\mathbf{S}^{\tau - t}}_{\mathbf{H}_{\mathbf{n}_{t-1}} (\mathbf{S})}(\mathbf{S}-\mathbf{D}_{t-1})\mathbf{n}_{t-1},
\end{aligned}
\label{deqn_effFIR}
\end{equation}
where $\mathbf{H}_{\mathbf{n}_{t-1}}(\mathbf{S})$ is the sub-graph filter that may amplify the noise. The visualization of (\ref{deqn_effFIR}) can be found in \cite{Zhengconf}. To achieve optimal noise reduction, we have the following results:

\begin{proposition}
\label{prop1}
For the quantization noise output with feedback $\widetilde{\mathbf{y}}_{\mathbf{n}_{t-1}} = \mathbf{H}_{\mathbf{n}_{t-1}} (\mathbf{S})(\mathbf{S}-\mathbf{D}_{t-1})\mathbf{n}_{t-1}$ coming from $\mathbf{n}_{t-1}$ with sub-graph filter $\mathbf{H}_{\mathbf{n}_{t-1}} (\mathbf{S}) =  \sum_{\tau =t}^{T}\phi_{\tau}\mathbf{S}^{\tau - t}$, its average noise power per node $ \zeta_{t-1} =\frac{1}{N} \mathrm{tr}\Big( \mathbb{E}[\widetilde{\mathbf{y}}_{\mathbf{n}_{t-1}} \widetilde{\mathbf{y}}^\mathrm{H}_{\mathbf{n}_{t-1}}]\Big) $ satisfies
\begin{equation}
\begin{aligned}
    \zeta_{t-1} & = \frac{\sigma^2}{N} \bigg\{\underbrace{\mathrm{tr}\Big(\mathbf{G}_{\mathbf{n}_{t-1}}(\mathbf{S}) \mathbf{S}\mathbf{S}^\top)}_{\text{Original noise gain:} G_{t-1}}+ \\
    & \underbrace{\mathrm{tr}\Big(\mathbf{G}_{\mathbf{n}_{t-1}}(\mathbf{S}) \mathbf{D}_{t-1}^{2}\Big)-2\mathrm{tr}\Big(\mathbf{G}_{\mathbf{n}_{t-1}}(\mathbf{S}) \mathbf{S}\mathbf{D}_{t-1}\Big)}_{\text{Mitigation:} I_{t-1}<0} \bigg\}
\end{aligned}
\label{p1}
\end{equation}
where $\mathbf{G}_{\mathbf{n}_{t-1}}(\mathbf{S})=\mathbf{H}^\top_{\mathbf{n}_{t-1}} (\mathbf{S})\mathbf{H}_{\mathbf{n}_{t-1}} (\mathbf{S})$ is the Gram matrix.
\end{proposition}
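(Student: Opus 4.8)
The plan is to evaluate $\zeta_{t-1}$ by direct substitution of the noise output and then reorganize the resulting trace using only the cyclic invariance of the trace together with the real and symmetric structure of the matrices involved. First I would insert $\widetilde{\mathbf{y}}_{\mathbf{n}_{t-1}}=\mathbf{H}_{\mathbf{n}_{t-1}}(\mathbf{S})(\mathbf{S}-\mathbf{D})\mathbf{n}_{t-1}$ into the second-moment matrix and pull the deterministic factors outside the expectation, so that $\mathbb{E}[\widetilde{\mathbf{y}}_{\mathbf{n}_{t-1}}\widetilde{\mathbf{y}}_{\mathbf{n}_{t-1}}^{\mathrm{H}}]=\mathbf{H}_{\mathbf{n}_{t-1}}(\mathbf{S})(\mathbf{S}-\mathbf{D})\,\mathbb{E}[\mathbf{n}_{t-1}\mathbf{n}_{t-1}^{\mathrm{H}}]\,(\mathbf{S}-\mathbf{D})^{\mathrm{H}}\mathbf{H}_{\mathbf{n}_{t-1}}(\mathbf{S})^{\mathrm{H}}$. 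Invoking the covariance $\Sigma_{\mathbf{n}_{t-1}}=\sigma_{t-1}^2\mathbf{I}$ stated below (\ref{efdfir}) collapses the middle factor, leaving $\sigma_{t-1}^2\,\mathbf{H}_{\mathbf{n}_{t-1}}(\mathbf{S})(\mathbf{S}-\mathbf{D})(\mathbf{S}-\mathbf{D})^{\mathrm{H}}\mathbf{H}_{\mathbf{n}_{t-1}}(\mathbf{S})^{\mathrm{H}}$.

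Next I would take the trace and normalize by $N$. Because $\mathbf{S}$ is real, the coefficients $\phi_\tau$ are real (so $\mathbf{H}_{\mathbf{n}_{t-1}}(\mathbf{S})$ is a real polynomial in $\mathbf{S}$), and $\mathbf{D}$ is a real diagonal matrix, every Hermitian transpose reduces to an ordinary transpose, i.e.\ $\mathbf{H}_{\mathbf{n}_{t-1}}(\mathbf{S})^{\mathrm{H}}=\mathbf{H}_{\mathbf{n}_{t-1}}(\mathbf{S})^{\top}$ and $(\mathbf{S}-\mathbf{D})^{\mathrm{H}}=(\mathbf{S}-\mathbf{D})^{\top}$. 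Applying the cyclic property to move $\mathbf{H}_{\mathbf{n}_{t-1}}(\mathbf{S})^{\top}$ to the front then assembles the Gram matrix $\mathbf{G}_{\mathbf{n}_{t-1}}(\mathbf{S})=\mathbf{H}_{\mathbf{n}_{t-1}}(\mathbf{S})^{\top}\mathbf{H}_{\mathbf{n}_{t-1}}(\mathbf{S})$, yielding $\zeta_{t-1}=\frac{\sigma_{t-1}^2}{N}\,\mathrm{tr}\big(\mathbf{G}_{\mathbf{n}_{t-1}}(\mathbf{S})(\mathbf{S}-\mathbf{D})(\mathbf{S}-\mathbf{D})^{\top}\big)$.

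Finally I would expand $(\mathbf{S}-\mathbf{D})(\mathbf{S}-\mathbf{D})^{\top}=\mathbf{S}\mathbf{S}^{\top}-\mathbf{S}\mathbf{D}-\mathbf{D}\mathbf{S}^{\top}+\mathbf{D}^2$, where $\mathbf{D}^{\top}=\mathbf{D}$ and $\mathbf{D}\mathbf{D}^{\top}=\mathbf{D}^2$ follow from diagonality, and distribute the trace. The term $\mathrm{tr}(\mathbf{G}_{\mathbf{n}_{t-1}}(\mathbf{S})\mathbf{S}\mathbf{S}^{\top})$ is the original noise gain $G_{t-1}$, and $\mathrm{tr}(\mathbf{G}_{\mathbf{n}_{t-1}}(\mathbf{S})\mathbf{D}^2)$ is the quadratic feedback term. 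The one step that needs care is merging the two cross terms: writing $\mathrm{tr}(\mathbf{G}_{\mathbf{n}_{t-1}}(\mathbf{S})\mathbf{D}\mathbf{S}^{\top})=\mathrm{tr}\big((\mathbf{G}_{\mathbf{n}_{t-1}}(\mathbf{S})\mathbf{D}\mathbf{S}^{\top})^{\top}\big)$, then using $\mathbf{D}^{\top}=\mathbf{D}$ and the symmetry $\mathbf{G}_{\mathbf{n}_{t-1}}(\mathbf{S})^{\top}=\mathbf{G}_{\mathbf{n}_{t-1}}(\mathbf{S})$ of the Gram matrix, followed by one cyclic rotation, shows it equals $\mathrm{tr}(\mathbf{G}_{\mathbf{n}_{t-1}}(\mathbf{S})\mathbf{S}\mathbf{D})$. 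Thus the two off-diagonal contributions combine into $-2\,\mathrm{tr}(\mathbf{G}_{\mathbf{n}_{t-1}}(\mathbf{S})\mathbf{S}\mathbf{D})$, and collecting terms reproduces exactly (\ref{p1}).

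The genuinely routine content here is the algebra; the only real subtlety, and the step I would flag, is the cross-term symmetrization, since it is what guarantees the $\mathbf{S}\mathbf{D}$ and $\mathbf{D}\mathbf{S}^{\top}$ contributions fuse into a single $-2\,\mathrm{tr}(\cdot)$ rather than leaving an $\mathbf{S}$-versus-$\mathbf{S}^{\top}$ asymmetry, and this relies essentially on $\mathbf{G}_{\mathbf{n}_{t-1}}(\mathbf{S})$ being a symmetric Gram matrix and $\mathbf{D}$ being diagonal. I would also note that the labelled inequality $I_{t-1}<0$ is not an automatic consequence of the identity itself—it vanishes for $\mathbf{D}=\mathbf{0}$ and can be positive for a poorly chosen $\mathbf{D}$—so establishing its sign is deferred to the subsequent minimization of $\zeta_{t-1}$ over diagonal $\mathbf{D}$, where the optimal feedback is shown to render the quadratic-minus-linear block strictly negative.
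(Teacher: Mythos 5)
Your proof is correct and follows essentially the same route as the paper's own (sketch of) proof: substitution, collapsing the covariance $\mathbb{E}[\mathbf{n}_{t-1}\mathbf{n}^{\mathrm{H}}_{t-1}]=\sigma_{t-1}^{2}\mathbf{I}$, reducing Hermitian transposes to transposes for real matrices, and then applying the cyclic and transpose-invariance properties of the trace to fuse the cross terms into $-2\,\mathrm{tr}(\mathbf{G}_{\mathbf{n}_{t-1}}(\mathbf{S})\mathbf{S}\mathbf{D})$. Your explicit cross-term symmetrization and your caveat that the label $I_{t-1}<0$ is only guaranteed after the optimal choice of $\mathbf{D}$ (Theorem~\ref{Th1}) simply spell out details the paper leaves implicit.
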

\begin{proof}[Sketch of proof]
      By simple substitution and $\mathbb{E}[\mathbf{n}_{t-1}\mathbf{n}^\mathrm{H}_{t-1}]=\sigma^2\mathbf{I}$, it holds $\mathbb{E}[\widetilde{\mathbf{y}}_{\mathbf{n}_{t-1}} \widetilde{\mathbf{y}}^\mathrm{H}_{\mathbf{n}_{t-1}}]=\sigma^2(\mathbf{H}_{\mathbf{n}_{t-1}}(\mathbf{S})(\mathbf{S}-\mathbf{D}_{t-1})(\mathbf{S}-\mathbf{D}_{t-1})^\mathrm{H}\mathbf{H}^\mathrm{H}_{\mathbf{n}_{t-1}} (\mathbf{S}))$. Then for a real-valued matrix, \(\mathbf{X^\mathrm{H}}=\mathbf{X^\top}\). By using trace cyclic property \(\mathrm{tr}(\mathbf{XY}) = \mathrm{tr}(\mathbf{YX})\) and symmetric property \(\mathrm{tr}(\mathbf{X}) = \mathrm{tr}(\mathbf{X^\top})\) for $\mathbf{D}_{t-1}$, (\ref{p1}) follows.
\end{proof}
\begin{theorem}[Exact noise reduction for deterministic FIR graph filtering]
\label{Th1}
Under Proposition \ref{prop1}, let $\mathbf{D}_{t-1}=\mathrm{diag}(\alpha_{1, t-1}, \dots, \alpha_{N, t-1})$, $\Theta = \{\mathbf{D}_{0}\mathbf{1}, \dots, \mathbf{D}_{t-1}\mathbf{1}\} \in \mathbb{R}^{N \times T}$, the closed-form solution $\alpha_{i, t-1}$ and noise reduction $I(\Theta)$ are
\begin{equation}
\begin{aligned}
\alpha_{i,t-1} & = \frac{\Big [\mathbf{G}_{\mathbf{n}_{t-1}}(\mathbf{S})\mathbf{S}\Big ]_{ii}}{\Big [ \mathbf{G}_{\mathbf{n}_{t-1}}(\mathbf{S})\Big ]_{ii}},\\
I(\Theta) & =\sum_{t=1}^{T}\frac{\sigma^2}{N}\sum_{i=1}^{N}{\frac{\Big [\mathbf{G}_{\mathbf{n}_{t-1}}(\mathbf{S})\mathbf{S}\Big ]^{2}_{ii}}{\Big [ \mathbf{G}_{\mathbf{n}_{t-1}}(\mathbf{S})\Big ]_{ii}}}.
\end{aligned}
\label{t1}
\end{equation}
\end{theorem}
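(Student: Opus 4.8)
The plan is to minimize the total average output noise power $\sum_{t=1}^{T}\zeta_{t-1}$ over the diagonal feedback matrices by exploiting two structural decouplings: one across the time index $t$ and one across the node index $i$. First I would observe that the output noise $\widetilde{\mathbf{y}}=\sum_{t=1}^{T}\widetilde{\mathbf{y}}_{\mathbf{n}_{t-1}}$ in (\ref{deqn_effFIR}) is a superposition of contributions driven by the i.i.d., zero-mean terms $\mathbf{n}_{t-1}$. Because distinct $\mathbf{n}_{t-1}$ are uncorrelated, all cross terms in $\mathbb{E}[\widetilde{\mathbf{y}}\widetilde{\mathbf{y}}^\mathrm{H}]$ vanish, so the total power per node equals $\sum_{t=1}^{T}\zeta_{t-1}$, with each $\zeta_{t-1}$ governed by its own sub-graph filter $\mathbf{H}_{\mathbf{n}_{t-1}}(\mathbf{S})$ and depending only on its own $\mathbf{D}_{t-1}$. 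This is exactly what licenses allowing $\mathbf{D}$ to vary with $t$ and optimizing each $\mathbf{D}_{t-1}$ separately; moreover, from Proposition \ref{prop1} only the mitigation term $I_{t-1}$ depends on $\mathbf{D}_{t-1}$ (the original noise gain $G_{t-1}$ involves $\mathbf{S}\mathbf{S}^\top$ alone), so $I_{t-1}$ is the sole quantity to be minimized.

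Next, writing $\mathbf{G}=\mathbf{G}_{\mathbf{n}_{t-1}}(\mathbf{S})$ and using the diagonal structure $\mathbf{D}_{t-1}=\mathrm{diag}(\alpha_{1,t-1},\dots,\alpha_{N,t-1})$, I would reduce the two $\mathbf{D}$-dependent traces to sums over diagonal entries. Since right-multiplication by a diagonal matrix scales columns and contributes only its diagonal to the trace, one gets $\mathrm{tr}(\mathbf{G}\mathbf{D}_{t-1}^{2})=\sum_{i}[\mathbf{G}]_{ii}\alpha_{i,t-1}^{2}$ and $\mathrm{tr}(\mathbf{G}\mathbf{S}\mathbf{D}_{t-1})=\sum_{i}[\mathbf{G}\mathbf{S}]_{ii}\alpha_{i,t-1}$. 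The mitigation term then becomes a separable sum $I_{t-1}=\sum_{i}\big([\mathbf{G}]_{ii}\alpha_{i,t-1}^{2}-2[\mathbf{G}\mathbf{S}]_{ii}\alpha_{i,t-1}\big)$ of independent scalar quadratics, one per node.

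Finally, because $\mathbf{G}$ is a Gram matrix it is PSD, so each leading coefficient $[\mathbf{G}]_{ii}\ge 0$ and each per-node quadratic is convex; setting its derivative to zero gives the stationary point $\alpha_{i,t-1}=[\mathbf{G}\mathbf{S}]_{ii}/[\mathbf{G}]_{ii}$, which is the claimed minimizer. Substituting back yields the per-node minimum $-[\mathbf{G}\mathbf{S}]_{ii}^{2}/[\mathbf{G}]_{ii}$, simultaneously confirming $I_{t-1}<0$, and summing over $i$ and over $t$ with the weights $\sigma_{t-1}^{2}/N$ reproduces the closed form for $I(\Theta)$ in (\ref{t1}). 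The hard part will not be the per-coordinate calculus but justifying the double decoupling cleanly: I must verify the across-$t$ cross-term cancellation rigorously and treat the degenerate case $[\mathbf{G}]_{ii}=0$, where PSD-ness forces the entire $i$-th row and column of $\mathbf{G}$ to vanish so that node $i$ injects no output noise and its feedback coefficient is immaterial, thereby keeping the closed-form expressions well-defined.
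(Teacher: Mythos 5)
Your proposal is correct and takes essentially the same route as the paper's proof: substitute the diagonal $\mathbf{D}_{t-1}$ into the mitigation term of Proposition \ref{prop1}, reduce it to a separable per-node quadratic in the $\alpha_{i,t-1}$, set each derivative to zero, and substitute back to obtain $I(\Theta)$. Your additions---explicitly verifying the across-$t$ cross-term cancellation, invoking positive semi-definiteness of the Gram matrix so the stationary point is indeed a minimizer, and treating the degenerate case $[\mathbf{G}_{\mathbf{n}_{t-1}}(\mathbf{S})]_{ii}=0$---are rigor refinements the paper leaves implicit rather than a different argument.
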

\begin{proof}
See Appendix~\ref{app:t1}.
\end{proof}

Theorem \ref{Th1} states that arbitrarily feeding back the error does not inherently guarantee noise reduction. For effective noise suppression, the matrix $\mathbf{D}_{t-1}$ must instead be carefully selected to satisfy the condition $I_{t-1}<0$ by (\ref{p1}). Also, there exists an optimal $\mathbf{D}_{t-1}$ that minimizes the overall quantization noise energy. Compared with the error compensation methods that correct the bias introduced in the most recent update, this approach achieves optimal noise reduction by minimizing the cumulative noise gain over the network’s recursions. It is important to clarify the computational overhead of deriving $\mathbf{D}_{t-1}$. The calculation relies on the filter's Gram matrix $\mathbf{G}_{\mathbf{n}_{t-1}}(\mathbf{S})$, a term that encapsulates the spectral response of the sub-graph filter. The primary computational burden is associated with the synthesis of the filter itself (specifically, the evaluation of matrix polynomials or spectral decompositions required to shape the frequency response), a step inherently required by any general-form graph filter design \cite{Mario1}. Crucially, our proposed method does not incur this computational cost anew; instead, it exploits the filter response already synthesized during the design phase. With the pre-computed $\mathbf{G}_{\mathbf{n}_{t-1}}(\mathbf{S})$ available, the subsequent derivation of the optimal $\mathbf{D}_{t-1}$ requires only diagonal element extraction and summation, scaling as $\mathcal{O}(|\mathcal{E}|)$. Consequently, the deployment of $\mathbf{D}_{t-1}$ entails only element-wise operations, ensuring a strict $\mathcal{O}(1)$ complexity per node.

To facilitate scalability, the solution can be further simplified for spatially or temporally invariant cases. For instance, in the node-invariant case where we constrain $\alpha_{i,t-1} \to \alpha_{t-1}$, we obtain $\mathbf{D}_{t-1} = \alpha_{t-1}\mathbf{I}$, reducing the parameter space $\Theta$ from $\mathbb{R}^{N \times T}$ to $\mathbb{R}^{T}$. This is ideal for large-scale graphs where node-specific storage is prohibitive. Alternatively, constraining $\alpha_{i,t-1} \to \alpha_{i}$ yields $\mathbf{D}_{t-1} = \mathrm{diag}(\alpha_{1}, \dots, \alpha_{N})$ with $\Theta \in \mathbb{R}^{N}$. Here, coefficients capture node-specific characteristics but remain constant over time. Imposing sparsity on these coefficients can further reduce storage and computational load.

\begin{remark}[A precise noise gain regularizer for FIR graph filter design]
\label{Rm2}
    \textnormal{While (\ref{p1})-(\ref{t1}) provide an optimal noise reduction, the final output noise power also depends on the noise gain $G_{t-1}$ attributed to the sub-graph filter $\mathbf{H}_{\mathbf{n}_{t-1}} (\mathbf{S})$. Since we have established that the filter synthesis is the primary offline computational step, we can further leverage this phase to constrain $G_{t-1}$ explicitly.
 Let $\boldsymbol{\phi}=[\phi_{0},\dots,\phi_{T}]^\top$ and let $\mathbf{H}_{\mathbf{n}_{t-1}} (\mathbf{S})\mathbf{S}$ be written in vectorized form as
\begin{equation}
\begin{aligned}
\mathrm{vec}(\mathbf{H}_{\mathbf{n}_{t-1}} (\mathbf{S})\mathbf{S})=\underbrace{\big[\mathbf{0},\dots,\mathbf{0}^{t},\mathrm{vec}(\mathbf{S}),\dots,\mathrm{vec}(\mathbf{S}^{T-t+1})\big]}_{\mathbf{V}_{t-1}\in \mathbb{R}^{N^2\times(T+1)}}\boldsymbol{\phi}.
\end{aligned}
\label{reg_fir}
\end{equation}
By using the trace property $\mathrm{tr}(\mathbf{XY})=\mathrm{vec}(\mathbf{X}^\top)^\top\mathrm{vec}(\mathbf{Y})$, $G_{t-1}$ from (\ref{p1}) is rewritten as $G_{t-1} =\boldsymbol{\phi}^\top(\mathbf{V}_{t-1}^\top\mathbf{V}_{t-1})\boldsymbol{\phi}$. We then define the noise gain regularizer as the aggregate gain over all stages
\begin{equation}
\begin{aligned}
\mathcal{R}(\boldsymbol{\phi})=\sum^{T}_{t=1}G_{t-1}=\boldsymbol{\phi}^\top\left(\sum^{T}_{t=1}\mathbf{V}^\top_{t-1}\mathbf{V}_{t-1}\right)\boldsymbol{\phi}.
\end{aligned}
\label{rreg_fir}
\end{equation}
As we can see, $\mathcal{R}(\boldsymbol{\phi})$ in (\ref{rreg_fir}) is a Tikhonov regularizer. Plugging it as $\gamma\mathcal{R}(\boldsymbol{\phi})$ into the least square (LS) design yields closed-form solutions, where the regularization parameter $\gamma >0$ explicitly governs the trade-off between design accuracy and the noise gain toward quantization}
\end{remark}

\subsection{QEF on Deterministic IIR Graph Filtering}
\label{s3b}
Under Assumption \ref{Ap1} and using similar transformations from (\ref{qdfir})-(\ref{efdfir}), the error fed back noise propagation model of (\ref{diir}) can be expressed as 
\begin{equation}
\begin{aligned}
\widetilde{\mathbf{w}}^{(k)}_{t} &= \psi_k \mathbf{S}\widetilde{\mathbf{w}}^{(k)}_{t-1}+(\psi_k \mathbf{S}-\mathbf{D}^{(k)})\mathbf{n}^{(k)}_{t-1}\\
\widetilde{\mathbf{y}}_{t} & = \sum_{k=1}^{K}\widetilde{\mathbf{w}}^{(k)}_{t} \quad \text{for } t \geq 1,
\end{aligned}
\label{efdiir}
\end{equation}
with $\widetilde{\mathbf{w}}^{(k)}_{0}=\mathbf{0}$, $\mathbf{n}^{(k)}_{t-1}$ being i.i.d. uniform distribution for $k$ and $t$, satisfying $\mathbb{E}[\mathbf{n}^{(k)}_{t-1}]=\mathbf{0}$ and $\Sigma_{\mathbf{n}^{(k)}_{t-1}} = (\sigma^{(k)})^{2} \mathbf{I} $. Note that here error feedback matrix $\mathbf{D}^{(k)}$ is time-independent. For the optimal noise reduction, we have 
\begin{proposition}
\label{prop2}
    For the quantization noise model with feedback $\widetilde{\mathbf{w}}^{(k)}_{t} = \psi_k \mathbf{S}\widetilde{\mathbf{w}}^{(k)}_{t-1}+(\psi_k \mathbf{S}-\mathbf{D}^{(k)})\mathbf{n}^{(k)}_{t-1}$, its asymptotic average noise power per node can be represented as $ \zeta^{(k)}_{t \to \infty} =\frac{1}{N} \mathrm{tr}\Big( \lim_{t\to\infty}\mathbb{E} [\widetilde{\mathbf{w}}^{(k)}_{t} (\widetilde{\mathbf{w}}^{(k)}_{t})^\mathrm{H} ]\Big)$, satisfying
  \begin{equation}
\begin{aligned}
 \zeta^{(k)}_{t \to \infty} & =\frac{(\sigma^{(k)})^2}{N}\mathrm{tr} \Big( (\psi_{k}\mathbf{S}-\mathbf{D}^{(k)})^\mathrm{H}\mathbf{W}^{(k)}_0(\psi_{k}\mathbf{S}-\mathbf{D}^{(k)}) \Big) \\ 
 & =\frac{(\sigma^{(k)})^2}{N}
\Big\{\underbrace{\mathrm{tr}(\vert\psi_{k}\vert^{2}\mathbf{W}^{(k)}_0 \mathbf{S}\mathbf{S}^\top) }_{\textit{Original noise gain:}G^{(k)}} +\\&\underbrace{\mathrm{tr}(\mathbf{W}^{(k)}_0\vert\mathbf{D}^{(k)}\vert^{2})-2\Re\Big[\psi_{k}\mathrm{tr}(\mathbf{W}^{(k)}_0\mathbf{S}\mathbf{D}^{(k)*})\Big]}_{\textit{Mitigation:} I^{(k)}<0} \Big\} 
\end{aligned}
\label{p2}
\end{equation}
where $\mathbf{W}^{(k)}_0$ is the observability Gramian of $k$ branch dynamics, satisfying Lyapunov equation $\mathbf{W}^{(k)}_0 = \vert\psi_{k}\vert^{2}\mathbf{S}^\top\mathbf{W}^{(k)}_0 \mathbf{S}+\mathbf{I}$.
\end{proposition}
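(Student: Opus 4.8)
The plan is to treat the feedback noise recursion as a stable linear time-invariant state-space system driven by zero-mean i.i.d.\ noise, and to evaluate its steady-state state covariance through the associated observability Gramian. First I would write $\mathbf{A}_k = \psi_k\mathbf{S}$ for the transition matrix and $\mathbf{B}_k = \psi_k\mathbf{S}-\mathbf{D}$ for the noise-input matrix, so that the model reads $\widetilde{\mathbf{w}}^{(k)}_{t} = \mathbf{A}_k\widetilde{\mathbf{w}}^{(k)}_{t-1} + \mathbf{B}_k\mathbf{n}^{(k)}_{t-1}$ with $\widetilde{\mathbf{w}}^{(k)}_{0}=\mathbf{0}$. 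Unrolling the recursion yields $\widetilde{\mathbf{w}}^{(k)}_{t} = \sum_{j=0}^{t-1}\mathbf{A}_k^{j}\mathbf{B}_k\mathbf{n}^{(k)}_{t-1-j}$. Because the noise samples are i.i.d.\ with zero mean and $\mathbb{E}[\mathbf{n}^{(k)}_{\tau}(\mathbf{n}^{(k)}_{\tau'})^\mathrm{H}]=(\sigma^{(k)})^{2}\mathbf{I}$ for $\tau=\tau'$ and $\mathbf{0}$ otherwise, all cross-time terms in the outer product vanish in expectation, leaving $\mathbb{E}[\widetilde{\mathbf{w}}^{(k)}_{t}(\widetilde{\mathbf{w}}^{(k)}_{t})^\mathrm{H}] = (\sigma^{(k)})^{2}\sum_{j=0}^{t-1}\mathbf{A}_k^{j}\mathbf{B}_k\mathbf{B}_k^\mathrm{H}(\mathbf{A}_k^\mathrm{H})^{j}$.

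Next I would pass to the limit $t\to\infty$. Since $\rho(\mathbf{A}_k)=|\psi_k|\rho_\star=|\psi_k\rho_\star|<1$ under the IIR stability assumption, Gelfand's formula bounds $\|\mathbf{A}_k^{j}\|_2\le C(\rho_\star'+\epsilon)^{j}$ with the base strictly below one, so the matrix series converges absolutely and the interchange of limit and summation is justified. Applying the trace and the cyclic property $\mathrm{tr}(\mathbf{A}_k^{j}\mathbf{B}_k\mathbf{B}_k^\mathrm{H}(\mathbf{A}_k^\mathrm{H})^{j})=\mathrm{tr}(\mathbf{B}_k^\mathrm{H}(\mathbf{A}_k^\mathrm{H})^{j}\mathbf{A}_k^{j}\mathbf{B}_k)$ lets me factor $\mathbf{B}_k$ out and collect the central sum $\mathbf{W}^{(k)}_0=\sum_{j=0}^{\infty}(\mathbf{A}_k^\mathrm{H})^{j}\mathbf{A}_k^{j}$, giving the first identity $\zeta^{(k)}_{t\to\infty}=\frac{(\sigma^{(k)})^2}{N}\mathrm{tr}((\psi_k\mathbf{S}-\mathbf{D})^\mathrm{H}\mathbf{W}^{(k)}_0(\psi_k\mathbf{S}-\mathbf{D}))$. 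Splitting off the $j=0$ term and reindexing shows $\mathbf{W}^{(k)}_0=\mathbf{I}+\mathbf{A}_k^\mathrm{H}\mathbf{W}^{(k)}_0\mathbf{A}_k=\mathbf{I}+|\psi_k|^2\mathbf{S}^\top\mathbf{W}^{(k)}_0\mathbf{S}$, the claimed Lyapunov equation; moreover each summand equals $|\psi_k|^{2j}(\mathbf{S}^{j})^\top\mathbf{S}^{j}$, so $\mathbf{W}^{(k)}_0$ is real and symmetric, a fact I will need below.

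To obtain the second form I would expand the quadratic $(\psi_k\mathbf{S}-\mathbf{D})^\mathrm{H}\mathbf{W}^{(k)}_0(\psi_k\mathbf{S}-\mathbf{D})$ into four terms and take traces. The pure-$\mathbf{S}$ term gives $|\psi_k|^2\mathrm{tr}(\mathbf{S}^\top\mathbf{W}^{(k)}_0\mathbf{S})=|\psi_k|^2\mathrm{tr}(\mathbf{W}^{(k)}_0\mathbf{S}\mathbf{S}^\top)$, the original noise gain $G^{(k)}$, and the pure-$\mathbf{D}$ term gives $\mathrm{tr}(\mathbf{D}^*\mathbf{W}^{(k)}_0\mathbf{D})=\mathrm{tr}(\mathbf{W}^{(k)}_0|\mathbf{D}|^2)$ since $\mathbf{D}$ is diagonal. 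For the two cross terms I would set $z=\psi_k\mathrm{tr}(\mathbf{W}^{(k)}_0\mathbf{S}\mathbf{D}^*)$: the cyclic property gives $-\psi_k\mathrm{tr}(\mathbf{D}^*\mathbf{W}^{(k)}_0\mathbf{S})=-z$, while for the other cross term I use $\mathrm{tr}(\mathbf{S}^\top\mathbf{W}^{(k)}_0\mathbf{D})=\mathrm{tr}((\mathbf{S}^\top\mathbf{W}^{(k)}_0\mathbf{D})^\top)=\mathrm{tr}(\mathbf{W}^{(k)}_0\mathbf{S}\mathbf{D})$, valid because $\mathbf{W}^{(k)}_0$ is symmetric, $\mathbf{D}^\top=\mathbf{D}$, and $\mathbf{S}$ is real, together with $(\mathrm{tr}(\mathbf{W}^{(k)}_0\mathbf{S}\mathbf{D}^*))^*=\mathrm{tr}(\mathbf{W}^{(k)}_0\mathbf{S}\mathbf{D})$, to identify $-\psi_k^*\mathrm{tr}(\mathbf{S}^\top\mathbf{W}^{(k)}_0\mathbf{D})=-z^*$. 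Adding the two produces $-(z+z^*)=-2\Re[\psi_k\mathrm{tr}(\mathbf{W}^{(k)}_0\mathbf{S}\mathbf{D}^*)]$, the mitigation term $I^{(k)}$, which completes the expansion.

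I expect the main obstacle to be the bookkeeping in the cross terms: because $\psi_k$ and $\mathbf{D}$ are complex while $\mathbf{S}$ and $\mathbf{W}^{(k)}_0$ are real, one must carefully track conjugates and transposes to collapse the two asymmetric-looking traces into a single $\Re[\cdot]$, relying crucially on the symmetry and realness of the Gramian derived from its series representation. By comparison, the convergence and Lyapunov-equation steps are routine once the spectral-radius bound is invoked, and the remaining steps are mechanical applications of the trace cyclic and transpose-invariance properties.
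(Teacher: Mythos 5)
Your proposal is correct and follows essentially the same route as the paper's own proof: unroll the recursion from the zero initial condition, use the i.i.d.\ zero-mean noise to eliminate cross-time terms, apply the trace cyclic property to collect the Gramian series $\mathbf{W}^{(k)}_0=\sum_{\tau\ge 0}(\psi_k^*\mathbf{S}^\top)^\tau(\psi_k\mathbf{S})^\tau$ and its Lyapunov equation, then expand the quadratic form and merge the two cross terms into $-2\Re[\psi_k\mathrm{tr}(\mathbf{W}^{(k)}_0\mathbf{S}\mathbf{D}^*)]$. Your only departures are cosmetic refinements—invoking Gelfand's formula to justify convergence and noting explicitly that the series representation makes $\mathbf{W}^{(k)}_0$ real symmetric, facts the paper uses implicitly via the stability assumption and the trace-Hermitian identity $\mathrm{tr}(\mathbf{X}^\mathrm{H})=\mathrm{tr}(\mathbf{X})^*$.
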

\begin{proof}
       See Appendix~\ref{app:p2}.    
\end{proof}
\begin{theorem}[Exact noise reduction for deterministic IIR graph filtering]
\label{Th2}
    Under Proposition \ref{prop2}, let $\mathbf{D}^{(k)}=\mathrm{diag}(\alpha_{1}^{(k)}+\beta_{1}^{(k)}\mathrm{i},\dots, \alpha_{N}^{(k)}+\beta_{N}^{(k)}\mathrm{i})$, $\Theta = \{\mathbf{D}^{(1)}\mathbf{1},\dots, \mathbf{D}^{(K)}\mathbf{1}\}\in\mathbb{C}^{N\times K}$, the closed-form solution $\alpha_{i}^{(k)}$, $\beta_{i}^{(k)}$ and noise reduction $I(\Theta)$:
\begin{equation}
\begin{aligned}
\alpha_{i}^{(k)} = \frac{\Re[\psi_{k}][\mathbf{W}_0^{(k)}\mathbf{S}]_{ii}}{[\mathbf{W}_0^{(k)}]_{ii}}&,\beta_{i}^{(k)} = \frac{\Im[\psi_{k}][\mathbf{W}_0^{(k)}\mathbf{S}]_{ii}}{[\mathbf{W}_0^{(k)}]_{ii}},\\
I(\Theta) =\sum_{k=1}^{K}\frac{(\sigma^{(k)})^2}{N}&\sum_{i=1}^{N}\frac{|\psi_{k}|^{2}[\mathbf{W}_0^{(k)}\mathbf{S}]_{ii}^{2}}{[\mathbf{W}_0^{(k)}]_{ii}}.
\end{aligned}
\label{t2}
\end{equation}
\end{theorem}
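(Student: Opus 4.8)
The plan is to work directly from Proposition~\ref{prop2}, which already decomposes the asymptotic per-node noise power of branch $k$ into a $\mathbf{D}$-independent gain $G^{(k)}$ and a mitigation term $I^{(k)} = \mathrm{tr}(\mathbf{W}_0^{(k)}|\mathbf{D}|^2) - 2\Re[\psi_k\,\mathrm{tr}(\mathbf{W}_0^{(k)}\mathbf{S}\mathbf{D}^{*})]$. Since the branch contributions enter the total power as $\sum_{k=1}^{K}\frac{(\sigma^{(k)})^2}{N}(G^{(k)}+I^{(k)})$ and each feedback matrix $\mathbf{D}^{(k)}$ appears only in its own $I^{(k)}$, the optimization decouples across $k$. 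I would therefore fix $k$ and minimize $I^{(k)}$ over the diagonal complex matrix $\mathbf{D}^{(k)}=\mathrm{diag}(\alpha_i^{(k)}+\beta_i^{(k)}\mathrm{i})$, and the claimed noise reduction $I(\Theta)$ will follow by reassembling the per-branch minima.

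The structural facts I would exploit are that $\mathbf{W}_0^{(k)}$ and $\mathbf{S}$ are real---the Lyapunov equation $\mathbf{W}_0^{(k)}=|\psi_k|^2\mathbf{S}^\top\mathbf{W}_0^{(k)}\mathbf{S}+\mathbf{I}$ has only real coefficients, so $[\mathbf{W}_0^{(k)}]_{ii}$ and $[\mathbf{W}_0^{(k)}\mathbf{S}]_{ii}$ are real---and that iterating the equation gives $\mathbf{W}_0^{(k)}=\sum_{j\geq 0}|\psi_k|^{2j}(\mathbf{S}^\top)^j\mathbf{S}^j\succeq\mathbf{I}$ (valid since $\rho(\psi_k\mathbf{S})=|\psi_k|\rho_\star<1$), so $[\mathbf{W}_0^{(k)}]_{ii}\geq 1>0$. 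Using the diagonal structure, $\mathrm{tr}(\mathbf{W}_0^{(k)}|\mathbf{D}|^2)=\sum_i[\mathbf{W}_0^{(k)}]_{ii}((\alpha_i^{(k)})^2+(\beta_i^{(k)})^2)$, and for the cross term I would expand $\psi_k\,\mathrm{tr}(\mathbf{W}_0^{(k)}\mathbf{S}\mathbf{D}^{*})=\psi_k\sum_i[\mathbf{W}_0^{(k)}\mathbf{S}]_{ii}(\alpha_i^{(k)}-\beta_i^{(k)}\mathrm{i})$ and take its real part. Because $[\mathbf{W}_0^{(k)}\mathbf{S}]_{ii}$ is real, this yields $\Re[\psi_k]\sum_i[\mathbf{W}_0^{(k)}\mathbf{S}]_{ii}\alpha_i^{(k)}+\Im[\psi_k]\sum_i[\mathbf{W}_0^{(k)}\mathbf{S}]_{ii}\beta_i^{(k)}$, so $I^{(k)}$ separates into independent scalar quadratics in each $(\alpha_i^{(k)},\beta_i^{(k)})$.

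With the objective separated, I would minimize each scalar quadratic by setting partial derivatives to zero; the Hessian $\mathrm{diag}(2[\mathbf{W}_0^{(k)}]_{ii},2[\mathbf{W}_0^{(k)}]_{ii})$ is positive definite by the positivity above, so the stationary point is the unique global minimizer, producing the closed forms $\alpha_i^{(k)}=\Re[\psi_k][\mathbf{W}_0^{(k)}\mathbf{S}]_{ii}/[\mathbf{W}_0^{(k)}]_{ii}$ and $\beta_i^{(k)}=\Im[\psi_k][\mathbf{W}_0^{(k)}\mathbf{S}]_{ii}/[\mathbf{W}_0^{(k)}]_{ii}$ of (\ref{t2}). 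Substituting these back, the $\alpha$- and $\beta$-contributions combine via $\Re[\psi_k]^2+\Im[\psi_k]^2=|\psi_k|^2$ to give $-|\psi_k|^2[\mathbf{W}_0^{(k)}\mathbf{S}]_{ii}^2/[\mathbf{W}_0^{(k)}]_{ii}$ per node (confirming $I^{(k)}<0$), and summing over $i$ and $k$ with the weights $(\sigma^{(k)})^2/N$ gives $I(\Theta)$.

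The calculation is routine once Proposition~\ref{prop2} is available; the step requiring the most care is the complex bookkeeping in the cross term---correctly extracting $\Re[\psi_k\,\mathrm{tr}(\mathbf{W}_0^{(k)}\mathbf{S}\mathbf{D}^{*})]$ and verifying that, thanks to the realness of $[\mathbf{W}_0^{(k)}\mathbf{S}]_{ii}$, it splits cleanly into the $\alpha$- and $\beta$-parts. This separability, together with $[\mathbf{W}_0^{(k)}]_{ii}>0$, is exactly what makes the per-node, per-component minimization both closed-form and globally optimal.
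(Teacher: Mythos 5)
Your proposal is correct and follows essentially the same route as the paper: the paper proves Theorem~\ref{Th2} by the same recipe as Theorem~\ref{Th1} (substitute the diagonal $\mathbf{D}^{(k)}$ into the mitigation term $I^{(k)}$ of Proposition~\ref{prop2}, expand into per-node quadratics, set the gradient to zero, and back-substitute), which is exactly what you do, with the complex case handled by splitting $\Re[\psi_k\,\mathrm{tr}(\mathbf{W}_0^{(k)}\mathbf{S}\mathbf{D}^{*})]$ into real $\alpha$- and $\beta$-parts. Your additional observations---that $[\mathbf{W}_0^{(k)}]_{ii}\geq 1>0$ via the series form of the Lyapunov solution, so the stationary point is the unique global minimizer---supply a second-order justification the paper leaves implicit, but do not change the argument.
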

\begin{proof}
    The proof is similar to that of Theorem \ref{Th1}.
\end{proof}
Theorem \ref{Th2} demonstrates that exact compensation for quantized deterministic IIR graph filtering is achievable even if the recursion is over an infinite horizon. In contrast to Theorem \ref{Th1}, the feedback matrix $\mathbf{D}^{(k)}$ now includes complex coefficients $\alpha_{1}^{(k)}+\beta_{1}^{(k)}\mathrm{i}$, a consequence of the possible complex-valued term $\psi_k$ arising from partial fraction decomposition. As we may reveal in Section \ref{s4}, Theorem \ref{Th2} is particularly relevant to decentralized optimization, where we may find the optimal feedback coefficients providing lower error floors.

\begin{remark}[A precise noise gain regularizer for IIR graph filter design]
\label{Rm3}
\textnormal{We can also regularize the noise gain $G^{(k)}$ to reduce the overall quantization noise. Let $\boldsymbol{\psi}= [\psi_{1},\dots,\psi_{K}]^\top$,  we obtain the regularizer as the aggregate gain over all parallels 
\begin{equation}
\begin{aligned}
& \mathcal{R}(\boldsymbol{\psi})= \bigg[\sum_{k=1}^{K}\frac{1}{1 - |\psi_{k}|^{2}\lambda^{2}_{1}},\cdots, \sum_{k=1}^{K}\frac{1}{1 - |\psi_{k}|^{2}\lambda_{1}\lambda_{N}},\\
& \cdots,\sum_{k=1}^{K}\frac{1}{1 - |\psi_{k}|^{2}\lambda^{2}_{N}}\bigg]\Big(\left( \mathbf{U} \otimes \mathbf{U} \right)^{-1}\vec{\mathbf{1}}\odot\left( \mathbf{U} \otimes \mathbf{U} \right)^\top\vec{\mathbf{1}}\Big),
\end{aligned}
\label{r3_final}
\end{equation}
where $\vec{\mathbf{1}}=[\mathbf{e}_1^\top,\dots,\mathbf{e}_N^\top]^\top$. A detailed proof for (\ref{r3_final}) can be found in the Appendix~\ref{app:r3}.
One can observe that (\ref{r3_final})  represents a weighted reciprocal regularizer, combining coupled frequency terms \(\lambda_{i}\lambda_{j}\) and uncoupled frequency terms \(\lambda^2_{i}\). To understand its behavior, we investigate 
\begin{equation}
\begin{aligned}
&\Big[\left( \mathbf{U} \otimes \mathbf{U} \right)^{-1}\vec{\mathbf{1}}\odot\left( \mathbf{U} \otimes \mathbf{U} \right)^\top\vec{\mathbf{1}}\Big]_{(i-1)N+i} \\
&= [\mathbf{U}^{-1}\mathbf{U}^{-\top}]_{ii}\odot[\mathbf{U}^\top\mathbf{U}]_{ii}>0,
\end{aligned}
\label{r3_final_e}
\end{equation}
by using $(\mathbf{Z}^\top \otimes\mathbf{X})\mathrm{vec}(\mathbf{Y})=\mathrm{vec}(\mathbf{XYZ})$ and inner product \([\mathbf{X}\mathbf{X^\top}]_{ii}>0\). (\ref{r3_final_e}) shows the weights for uncoupled \(\sum_{k=1}^{K}1/(1 - |\psi_{k}|^{2}\lambda^2_{i})\) are always positive, and when any $|\psi_{k}|^{2}\lambda^2_{i}$ approaches $1$, the whole reciprocal becomes large. Under the stability condition $|\psi_{k}\rho_{\star}| < 1$ where $\psi_{k}$ controls the pole's radius, (\ref{r3_final}) therefore enforces $|\psi_{k}\rho_{\star}|^{2}  \ll 1$ to prevent large noise gain towards quantization. This behavior mirrors the round-off noise minimization of classical IIR digital filters \cite{parhi2007vlsi,Xuexian}, where poles positioned close to the unit circle lead to excessive amplification of such noise.}
\end{remark}

\section{Noise Analysis and Quantitative Error Feedback on Random Processes}
In this section, we extend the analysis in Section \ref{s3} to address graph filtering over random graphs and with random node-asynchronous updates. We rigorously demonstrate that exact error compensation via deterministic feedback remains robust even in the presence of these stochastic and non-deterministic challenges. First, we derive expressions for noise propagation that explicitly incorporate a feedback mechanism over these random processes. Subsequently, by leveraging convergence theorems from stochastic analysis, we minimize the noise terms to obtain closed-form optimal solutions and introduce their efficient computations. Finally, the analysis is extended to find estimated noise regularizers for filter designs. 

\subsection{QEF on FIR Graph Filtering over Random Graphs}
\label{s4a}
Similarly from (\ref{qdfir})-(\ref{efdfir}), the fed back noise output from (\ref{rfir}) is given by
  \begin{equation}
\begin{aligned}
 \widetilde{\mathbf{y}} &= \sum^{T}_{t=1}\widetilde{\mathbf{y}}_{\mathbf{n}_{t-1}} =\sum^{T}_{t=1}\underbrace{\sum_{\tau =t}^{T}\phi_{\tau}\mathcal{\Phi}_{t:\tau-1}}_{\mathbf{H}_{\mathbf{n}_{t-1}} (\mathcal{\Phi}_{t:T-1})}(\mathbf{S}_{t-1}-\mathbf{D}_{t-1})\mathbf{n}_{t-1},
\end{aligned}
\label{efrfir}
\end{equation}
with $\mathbf{n}_{t-1}$ i.i.d for $t$, $\mathbb{E}[\mathbf{n}_{t-1}]=\mathbf{0}$ and $\Sigma_{\mathbf{n}_{t-1}} = \sigma^2 \mathbf{I} $. Here $
    \mathbf{H}_{\mathbf{n}_{t-1}} (\mathcal{\Phi}_{t:T-1})$ is the stochastic sub-graph filter over random graph processes $\mathcal{\Phi}_{t:T-1}$ with $\mathcal{\Phi}_{t:t'} = \prod_{\tau = t}^{t'\leftarrow}\mathbf{S}_{\tau}$ if $t'\geq t$ and $\mathbf{I}$ if $t'<t$. Our findings are summarized as follows:
\begin{proposition}
\label{prop3}
    For the quantization noise output with feedback $\widetilde{\mathbf{y}}_{\mathbf{n}_{t-1}} = \mathbf{H}_{\mathbf{n}_{t-1}} (\mathcal{\Phi}_{t:T-1})(\mathbf{S}_{t-1}-\mathbf{D}_{t-1})\mathbf{n}_{t-1}$ coming from $\mathbf{n}_{t-1}$ with stochastic sub-graph filter  $
    \mathbf{H}_{\mathbf{n}_{t-1}} (\mathcal{\Phi}_{t:T-1}) =   \sum_{\tau =t}^{T}\phi_{\tau}\mathcal{\Phi}_{t:\tau-1}$, its average noise power per node $ \zeta_{t-1} =\frac{1}{N} \mathrm{tr}\Big( \mathbb{E}[\widetilde{\mathbf{y}}_{\mathbf{n}_{t-1}} \widetilde{\mathbf{y}}^\mathrm{H}_{\mathbf{n}_{t-1}}]\Big)$ satisfies
    \begin{equation}
\begin{aligned}
      & \zeta_{t-1} = \frac{\sigma^2}{N} \bigg\{\underbrace{\mathrm{tr}\Big(\mathbb{E}[\mathbf{G}_{\mathbf{n}_{t-1}}(\mathcal{\Phi}_{t:T-1}) \mathbf{S}_{t-1}\mathbf{S}^\top_{t-1}]\Big)}_{\text{Original noise gain:} G_{t-1}} + \\ & \underbrace{ \mathrm{tr}\Big(\mathbb{E}[\mathbf{G}_{\mathbf{n}_{t-1}}(\mathcal{\Phi}_{t:T-1})] \mathbf{D}_{t-1}^{2}\Big)-2\mathrm{tr}\Big(\mathbb{E}[\mathbf{G}_{\mathbf{n}_{t-1}}(\mathcal{\Phi}_{t:T-1})] \overline{\mathbf{S}}\mathbf{D}_{t-1}\Big) }_{\textit{Mitigation:} I_{t-1}<0}\bigg\}
\end{aligned}
\label{p3}
\end{equation}
where $\mathbf{G}_{\mathbf{n}_{t-1}}(\mathcal{\Phi}_{t:T-1})=\mathbf{H}^\top_{\mathbf{n}_{t-1}} (\mathcal{\Phi}_{t:T-1})\mathbf{H}_{\mathbf{n}_{t-1}} (\mathcal{\Phi}_{t:T-1})$.
\end{proposition}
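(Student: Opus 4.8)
The plan is to follow the template of Proposition~\ref{prop1}, but now carrying the two independent sources of randomness present in this setting: the quantization noise $\mathbf{n}_{t-1}$ and the random graph realizations $\{\mathbf{S}_\tau\}$. First I would condition on the graph and take the expectation over the noise alone. Since $\mathbf{n}_{t-1}$ is independent of the graph sampling and obeys $\mathbb{E}[\mathbf{n}_{t-1}\mathbf{n}_{t-1}^\mathrm{H}] = \sigma_{t-1}^2\mathbf{I}$, substituting $\widetilde{\mathbf{y}}_{\mathbf{n}_{t-1}} = \mathbf{H}_{\mathbf{n}_{t-1}}(\mathcal{\Phi}_{t:T-1})(\mathbf{S}_{t-1}-\mathbf{D})\mathbf{n}_{t-1}$ and forming the outer product gives
\[
\mathbb{E}_{\mathbf{n}}\big[\widetilde{\mathbf{y}}_{\mathbf{n}_{t-1}}\widetilde{\mathbf{y}}_{\mathbf{n}_{t-1}}^\mathrm{H}\big] = \sigma_{t-1}^2\,\mathbf{H}_{\mathbf{n}_{t-1}}(\mathcal{\Phi}_{t:T-1})(\mathbf{S}_{t-1}-\mathbf{D})(\mathbf{S}_{t-1}-\mathbf{D})^\top\mathbf{H}_{\mathbf{n}_{t-1}}^\top(\mathcal{\Phi}_{t:T-1}),
\]
where I used that $\mathbf{S}_{t-1}$, the diagonal $\mathbf{D}$, and hence $\mathbf{H}_{\mathbf{n}_{t-1}}(\mathcal{\Phi}_{t:T-1})$ are real, so $(\cdot)^\mathrm{H}=(\cdot)^\top$ on these factors. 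This quantity is still random through the graph.

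Next I would take the remaining expectation over the graph realizations, insert into $\zeta_{t-1} = \frac1N\mathrm{tr}(\mathbb{E}[\cdot])$, and commute trace with expectation (both linear). The cyclic property brings the Gram matrix $\mathbf{G}_{\mathbf{n}_{t-1}}(\mathcal{\Phi}_{t:T-1}) = \mathbf{H}^\top_{\mathbf{n}_{t-1}}\mathbf{H}_{\mathbf{n}_{t-1}}$ adjacent, yielding $\zeta_{t-1} = \frac{\sigma_{t-1}^2}{N}\,\mathbb{E}\big[\mathrm{tr}(\mathbf{G}_{\mathbf{n}_{t-1}}(\mathcal{\Phi}_{t:T-1})(\mathbf{S}_{t-1}-\mathbf{D})(\mathbf{S}_{t-1}-\mathbf{D})^\top)\big]$. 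Expanding the quadratic produces four contributions: the data term $\mathbf{G}\mathbf{S}_{t-1}\mathbf{S}_{t-1}^\top$, which I keep inside the joint expectation as the original noise gain $G_{t-1}$; the feedback term $\mathbf{G}\mathbf{D}^2$, using $\mathbf{D}\mathbf{D}^\top=\mathbf{D}^2$ for diagonal real $\mathbf{D}$; and the two cross terms $-\mathbf{G}\mathbf{S}_{t-1}\mathbf{D}$ and $-\mathbf{G}\mathbf{D}\mathbf{S}_{t-1}^\top$.

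The only genuinely new ingredient relative to Proposition~\ref{prop1} is the treatment of these cross terms through the independence structure, which I expect to be the main obstacle to state cleanly. The point is that $\mathbf{H}_{\mathbf{n}_{t-1}}(\mathcal{\Phi}_{t:T-1})=\sum_{\tau=t}^T\phi_\tau\mathcal{\Phi}_{t:\tau-1}$ depends only on the realizations $\mathbf{S}_t,\dots,\mathbf{S}_{T-1}$, and therefore so does $\mathbf{G}$, whereas $\mathbf{S}_{t-1}$ is the realization at the strictly earlier step $t-1$; by the independent sampling in Definition~\ref{D2} these index ranges are disjoint, so $\mathbb{E}[\mathbf{G}\mathbf{S}_{t-1}]=\mathbb{E}[\mathbf{G}]\,\mathbb{E}[\mathbf{S}_{t-1}]=\mathbb{E}[\mathbf{G}]\,\overline{\mathbf{S}}$, while the deterministic $\mathbf{D}$ pulls straight out. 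Using the symmetry $\mathbf{G}^\top=\mathbf{G}$ and $\mathbf{D}^\top=\mathbf{D}$ together with the transpose-and-cyclic trace identities then shows the two cross terms coincide, $\mathbb{E}[\mathrm{tr}(\mathbf{G}\mathbf{D}\mathbf{S}_{t-1}^\top)]=\mathbb{E}[\mathrm{tr}(\mathbf{G}\mathbf{S}_{t-1}\mathbf{D})]=\mathrm{tr}(\mathbb{E}[\mathbf{G}]\,\overline{\mathbf{S}}\mathbf{D})$, producing the factor $-2$; similarly the feedback term collapses to $\mathrm{tr}(\mathbb{E}[\mathbf{G}]\mathbf{D}^2)$. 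Collecting the three pieces gives exactly (\ref{p3}). The subtlety worth emphasizing is that it is precisely this disjointness of time indices that lets $\overline{\mathbf{S}}$ replace $\mathbf{S}_{t-1}$ in the mitigation terms, whereas the data term genuinely couples $\mathbf{G}$ with the current shift and must remain inside a single expectation.
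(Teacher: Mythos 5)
Your proof is correct and follows essentially the same route as the paper's: condition on the graph to integrate out the noise via $\mathbb{E}[\mathbf{n}_{t-1}\mathbf{n}_{t-1}^\mathrm{H}]=\sigma_{t-1}^2\mathbf{I}$, pass the trace inside with the cyclic/transpose identities to form the quadratic form in $\mathbf{G}_{\mathbf{n}_{t-1}}$, and use the independence of $\mathbf{S}_{t-1}$ from $\mathbf{S}_{t},\dots,\mathbf{S}_{T-1}$ (which the paper invokes only implicitly through $\mathbb{E}[\mathbf{S}_{t-1}]=\overline{\mathbf{S}}$) to pull $\overline{\mathbf{S}}$ out in the cross terms. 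Your only imprecision is the closing remark that the gain term ``genuinely couples'' $\mathbf{G}_{\mathbf{n}_{t-1}}$ with $\mathbf{S}_{t-1}$ --- they are in fact independent, so that term also factors as $\mathrm{tr}\big(\mathbb{E}[\mathbf{G}_{\mathbf{n}_{t-1}}]\,\mathbb{E}[\mathbf{S}_{t-1}\mathbf{S}_{t-1}^\top]\big)$; what is impossible is replacing the second moment $\mathbb{E}[\mathbf{S}_{t-1}\mathbf{S}_{t-1}^\top]$ by $\overline{\mathbf{S}}\,\overline{\mathbf{S}}^\top$, which is why only the mitigation terms reduce to expressions in $\overline{\mathbf{S}}$.
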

\begin{proof}
    See Appendix~\ref{app:p3}
\end{proof}
\begin{theorem}[Exact noise reduction for FIR graph filtering over random graphs]
\label{Th3}
    Under Proposition \ref{prop3}, let $\mathbf{D}_{t-1}=\mathrm{diag}(\alpha_{1, t-1},\dots, \alpha_{N, t-1})$, $\Theta =\{\mathbf{D}_{0}\mathbf{1},\dots,\mathbf{D}_{t-1}\mathbf{1}\}\in\mathbb{R}^{N\times T}$, the closed-form solution $\alpha_{i, t-1}$ and noise reduction $I(\Theta)$ are
    \begin{equation}
\begin{aligned}
\alpha_{i,t-1} & = \frac{\Big [\mathbb{E}[\mathbf{G}_{\mathbf{n}_{t-1}}(\mathcal{\Phi}_{t:T-1})] \overline{\mathbf{S}} \Big ]_{ii}}{\Big [\mathbb{E}[\mathbf{G}_{\mathbf{n}_{t-1}}(\mathcal{\Phi}_{t:T-1})] \Big ]_{ii}},\\
I(\Theta) & =\sum_{t=1}^{T}\frac{\sigma^2}{N}\sum_{i=1}^{N}{\frac{\Big [\mathbb{E}[\mathbf{G}_{\mathbf{n}_{t-1}}(\mathcal{\Phi}_{t:T-1})] \overline{\mathbf{S}}\Big ]^{2}_{ii}}{\Big [ \mathbb{E}[\mathbf{G}_{\mathbf{n}_{t-1}}(\mathcal{\Phi}_{t:T-1})]\Big ]_{ii}}}.
\end{aligned}
\label{t3}
\end{equation}
\end{theorem}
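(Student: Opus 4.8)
The plan is to recognize that Theorem~\ref{Th3} is the exact stochastic analogue of Theorem~\ref{Th1}, differing only in that the deterministic Gram matrix $\mathbf{G}_{\mathbf{n}_{t-1}}(\mathbf{S})$ is replaced by the expected Gram matrix $\mathbb{E}[\mathbf{G}_{\mathbf{n}_{t-1}}(\mathcal{\Phi}_{t:T-1})]$ and the cross-term shift $\mathbf{S}$ by its mean $\overline{\mathbf{S}}$, both of which are already delivered by Proposition~\ref{prop3}. I would therefore first argue that the \emph{total} output noise power separates over $t$: because the injected noises $\{\mathbf{n}_{t-1}\}_t$ are i.i.d., zero-mean, and independent of the graph realizations, the cross expectations $\mathbb{E}[\widetilde{\mathbf{y}}_{\mathbf{n}_{t-1}}\widetilde{\mathbf{y}}^\mathrm{H}_{\mathbf{n}_{t'-1}}]$ vanish for $t\neq t'$, so the aggregate per-node power is $\sum_{t=1}^{T}\zeta_{t-1}$. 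Since the feedback block $\mathbf{D}_{t-1}$ enters only the $t$-th summand $\zeta_{t-1}$, the joint minimization over $\Theta=\{\mathbf{D}_{0}\mathbf{1},\dots,\mathbf{D}_{T-1}\mathbf{1}\}$ decouples into $T$ independent stage problems.

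Next I would exploit the diagonal structure $\mathbf{D}_{t-1}=\mathrm{diag}(\alpha_{1,t-1},\dots,\alpha_{N,t-1})$. Writing $\mathbf{G}:=\mathbb{E}[\mathbf{G}_{\mathbf{n}_{t-1}}(\mathcal{\Phi}_{t:T-1})]$ for brevity, the mitigation term $I_{t-1}$ from (\ref{p3}) expands, via $\mathrm{tr}(\mathbf{G}\mathbf{D}_{t-1}^{2})=\sum_i[\mathbf{G}]_{ii}\alpha_{i,t-1}^{2}$ and $\mathrm{tr}(\mathbf{G}\overline{\mathbf{S}}\mathbf{D}_{t-1})=\sum_i[\mathbf{G}\overline{\mathbf{S}}]_{ii}\alpha_{i,t-1}$, into a sum of $N$ scalar quadratics $[\mathbf{G}]_{ii}\alpha_{i,t-1}^{2}-2[\mathbf{G}\overline{\mathbf{S}}]_{ii}\alpha_{i,t-1}$. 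Each is minimized separately; completing the square (or setting the derivative to zero) yields the stated $\alpha_{i,t-1}=[\mathbf{G}\overline{\mathbf{S}}]_{ii}/[\mathbf{G}]_{ii}$, and resubstitution gives the per-stage reduction $-\tfrac{\sigma_{t-1}^{2}}{N}\sum_i[\mathbf{G}\overline{\mathbf{S}}]_{ii}^{2}/[\mathbf{G}]_{ii}$. Summing over $t$ reproduces $I(\Theta)$ exactly as in (\ref{t3}).

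The step needing the most care is justifying that each scalar subproblem is a genuine minimization with a unique interior optimizer, i.e.\ that the leading coefficient $[\mathbf{G}]_{ii}$ is strictly positive so the quadratic is convex and not degenerate. Since $\mathbf{G}=\mathbb{E}[\mathbf{H}^\top_{\mathbf{n}_{t-1}}(\mathcal{\Phi}_{t:T-1})\mathbf{H}_{\mathbf{n}_{t-1}}(\mathcal{\Phi}_{t:T-1})]$ is a PSD expectation, $[\mathbf{G}]_{ii}=\mathbb{E}[\|\mathbf{H}_{\mathbf{n}_{t-1}}(\mathcal{\Phi}_{t:T-1})\mathbf{e}_i\|_2^{2}]\ge 0$; here I would note that the $\tau=t$ term of $\mathbf{H}_{\mathbf{n}_{t-1}}$ contributes the deterministic block $\phi_t\mathbf{I}$ (because $\mathcal{\Phi}_{t:t-1}=\mathbf{I}$), so $[\mathbf{G}]_{ii}>0$ whenever $\phi_t\neq 0$, ruling out degeneracy. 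By contrast, the genuinely stochastic content—interchanging expectation with the trace and, crucially, verifying that the cross term factorizes as $\mathrm{tr}(\mathbb{E}[\mathbf{G}_{\mathbf{n}_{t-1}}]\,\overline{\mathbf{S}}\mathbf{D})$ rather than the joint $\mathrm{tr}(\mathbb{E}[\mathbf{G}_{\mathbf{n}_{t-1}}\mathbf{S}_{t-1}]\mathbf{D})$—has already been absorbed into Proposition~\ref{prop3}, where it rests on the independence of the injection-time shift $\mathbf{S}_{t-1}$ from the later shifts $\mathbf{S}_t,\dots,\mathbf{S}_{T-1}$ composing $\mathbf{H}_{\mathbf{n}_{t-1}}$. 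Consequently, granted Proposition~\ref{prop3}, Theorem~\ref{Th3} requires no further stochastic argument and follows from the same deterministic, per-coordinate convex optimization used for Theorem~\ref{Th1}.
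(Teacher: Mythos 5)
Your proposal is correct and follows essentially the same route as the paper: the paper proves Theorem~\ref{Th3} by repeating the argument of Theorem~\ref{Th1}, i.e., substituting the diagonal $\mathbf{D}_{t-1}$ into the mitigation term of Proposition~\ref{prop3}, which decouples into $N$ scalar convex quadratics whose stationary points give $\alpha_{i,t-1}$ and, upon resubstitution and summation over $t$, the reduction $I(\Theta)$. Your additional observations—that the stage problems decouple because the cross expectations $\mathbb{E}[\widetilde{\mathbf{y}}_{\mathbf{n}_{t-1}}\widetilde{\mathbf{y}}^\mathrm{H}_{\mathbf{n}_{t'-1}}]$ vanish for $t\neq t'$, and that $[\mathbb{E}[\mathbf{G}_{\mathbf{n}_{t-1}}(\mathcal{\Phi}_{t:T-1})]]_{ii}>0$ rules out degeneracy—are sound refinements that the paper leaves implicit.
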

\begin{proof}
   The proof is similar to that of Theorem \ref{Th1}.
\end{proof}
Theorem \ref{Th3} states that if $\mathbb{E}[\mathbf{G}_{\mathbf{n}_{t-1}}(\mathcal{\Phi}_{t:T-1})]$ is tractable, then exact compensation is achievable even in the presence of graph randomness. This tractability arises from exploiting the structural properties inherent to the random matrix sequence $\mathcal{\Phi}_{t:T-1}$. By using the linearity of expectation $\mathbb{E}[\mathbf{X}+\mathbf{Y}] = \mathbb{E}[\mathbf{X}]+\mathbb{E}[\mathbf{Y}]$, $\mathbb{E}[\mathbf{G}_{\mathbf{n}_{t-1}}(\mathcal{\Phi}_{t:T-1})]$ can be expressed as $\sum_{\tau_1 =t}^{T}\sum_{\tau_2 =t}^{T}\phi_{\tau_1}\phi_{\tau_2}\mathbb{E}[\mathcal{\Phi}_{t:\tau_1-1}^\top\mathcal{\Phi}_{t:\tau_2-1}]$. Without loss of generality, let $\tau_1 < \tau_2$ and define $\mathcal{\Phi}^\top_{t:t'} = \prod_{\tau = t}^{\rightarrow t'}\mathbf{S}^\top_{\tau}$, we have 
\begin{equation}
\begin{aligned}
&\mathbb{E}\Big[\mathcal{\Phi}_{t:\tau_1-1}^\top\mathcal{\Phi}_{t:\tau_2-1}\Big]  \\&= \mathbb{E}\bigg[\underbrace{\mathbf{S}^\top_{t} \times\cdots \times\mathbf{S}^\top_{\tau_1-1}}_{\mathcal{\Phi}_{t:\tau_1-1}^\top}
\times\underbrace{\mathbf{S}_{\tau_2-1}\times\cdots}_{\mathcal{\Phi}_{\tau_1:\tau_2-1}} \times \underbrace{\mathbf{S}_{\tau_1-1} \times\cdots \times\mathbf{S}_{t}}_{\mathcal{\Phi}_{t:\tau_1-1}}
\bigg]
\end{aligned}
\label{ker}
\end{equation}
which is simplified to $\mathbb{E}\big[\mathbf{S}^\top_{t} \times\cdots \times \mathbb{E}[\mathbf{S}^\top_{\tau_1-1}\overline{\mathbf{S}}^{\tau_2-\tau_1}\mathbf{S}_{\tau_1-1}]\times \cdots\times\mathbf{S}_{t}\big]$ using total expectation \(\mathbb{E}[\mathbf{X}] = \mathbb{E}[\mathbb{E}[\mathbf{X}|\mathbf{Y}]]\). In contrast to the deterministic case in (\ref{p1}), which relies on simple matrix powers, the term $\mathbb{E}[\mathbf{G}_{\mathbf{n}_{t-1}}(\mathcal{\Phi}_{t:T-1})]$ involves a concatenated forward-backward Markovian chain of nested expectations, as shown in (\ref{ker}). Crucially, observing that this chain exhibits a recursive structure allows us to formulate the following efficient computation method:

\begin{remark}[Exact expectation computation]
 \label{Rm4}
    \textnormal{For any $\tau_1,\tau_2$, we can initialize $\mathbf{M}_{0}=\overline{\mathbf{S}}^{|\tau_2-\tau_1|}$, and recursively calculate the equation $\mathbf{M}_{l}=\mathbb{E}[\mathbf{S}^\top_{t'}\mathbf{M}_{l-1}\mathbf{S}_{t'}]$ until $l= \textnormal{min} \{\tau_1,\tau_2\}-t$, to have $\mathbf{M}_{l}\to\mathbb{E}[\mathcal{\Phi}_{t:\tau_1-1}^\top\mathcal{\Phi}_{t:\tau_2-1}]$ for (\ref{ker}). This $\mathbf{S}_{t'}$ can be any realization from Definition \ref{D2}, only being required as independent of $\mathbf{M}_{l-1}$. For simplicity, we let $\mathbf{S}_{t'}=\mathbf{S}_{t}$ due to its independence throughout the recursion. By the vectorization property of $\mathrm{vec}(\mathbb{E}[\mathbf{X}])=\mathbb{E}[\mathrm{vec}(\mathbf{X})]$ and $\mathrm{vec}(\mathbf{XYZ})=(\mathbf{Z}^\top \otimes\mathbf{X})\mathrm{vec}(\mathbf{Y})$, we can then rewrite the recursive relation as
\begin{equation}
\begin{aligned}
\mathrm{vec}(\mathbf{M}_{l}) & = \mathbb{E}\big[\mathrm{vec}(\mathbf{S}^\top_{t}\mathbf{M}_{l-1}\mathbf{S}_{t})\big]\\
& = \mathbb{E}\big[(\mathbf{S}_{t}^\top \otimes \mathbf{S}_{t}^\top)\big]\mathrm{vec}(\mathbf{M}_{l-1}).
\end{aligned}
\label{vec}
\end{equation}
 The significance of (\ref{vec}) lies in  that, once the edge sampling probability $p$ is specified, the matrix $\mathbb{E}[(\mathbf{S}_{t}^\top \otimes \mathbf{S}_{t}^\top)]$ simplifies to a constant, time-invariant matrix. This transforms the stochastic recursion into a computationally efficient process of repeated multiplication by a fixed matrix. To illustrate, consider the adjacency graph shift operator $\mathbf{S}_{t} = \mathbf{A}_{t} = \mathbf{B}_{t} \odot \mathbf{A}$, where $\mathbf{S}_{t}^\top=\mathbf{S}_{t}$ due to symmetry. Here, the expectation simplifies as $\mathbb{E}[(\mathbf{S}_{t}^\top \otimes \mathbf{S}_{t}^\top)]=\mathbb{E}[(\mathbf{B}_{t} \otimes \mathbf{B}_{t})]\odot(\mathbf{A}\otimes\mathbf{A})$ using mixed product property. Because $\mathbf{B}_{t}$ is a symmetric Bernoulli random matrix (entries are i.i.d. with probability $p$), the entries of $\mathbb{E}[(\mathbf{B}_{t} \otimes \mathbf{B}_{t})]$ take only two possible values: $p,p^2$, depending on whether they involve the same or distinct Bernoulli variables. Since $(\mathbf{A}\otimes\mathbf{A})$ is deterministic, $\mathbb{E}[(\mathbf{S}_{t}^\top \otimes \mathbf{S}_{t}^\top)]$ is efficiently computed by scaling $(\mathbf{A}\otimes\mathbf{A})$ with the above $p/p^2$ structure.}
\end{remark}

\begin{remark}[An estimated noise gain regularizer for FIR graph filter design over random graphs]
\label{Rm5}
    \textnormal{In line with the analysis in Section~\ref{s3a}, we propose an estimated noise gain regularizer that bounds the noise gain and simplifies the formulation for FIR graph filter design over random graphs. For the noise gain $G_{t-1}$ defined in (\ref{p3}), we have
\begin{equation}
\begin{aligned}
\mathcal{R}(\boldsymbol{\phi})=\boldsymbol{\phi'}^\top \left(\sum^{T}_{t=1}\mathbf{V'}_{t-1}\right)\boldsymbol{\phi'} \ge \frac{1}{N\rho^{2}_{\star}}\sum^{T}_{t=1}G_{t-1},
\end{aligned}
\label{r5_final}
\end{equation}
where $\boldsymbol{\phi'}=\big[\vert\phi_{0}\vert,\dots,\vert\phi_{T}\vert\big]^\top$, $\mathbf{V'}_{t-1}\in\mathbb{R}^{(T+1)\times (T+1)}$ with
\begin{equation}
\begin{aligned}
\big[\mathbf{V'}_{t-1}\big]_{(i+1)(j+1)}=\begin{cases} 
0,&\quad \forall i,j < t  \\ 
p^{|i-j|}\rho^{i+j-2t}_{\star},&\quad \forall i,j \ge t 
\end{cases}.
\end{aligned}
\label{r555555}
\end{equation} 
A detailed proof for (\ref{r5_final}) can be found in the Appendix~\ref{app:r5}. As seen from (\ref{r5_final}), the noise gain regularizer is a Tikhonov regularizer on the magnitude of the coefficients $\phi_t$ with parameters $p$ and $\rho_{\star}$ governing its behavior. 
% Minimizing this function effectively reduces the upper bound on the noise gain, thereby preventing the noise from becoming excessively large. 
% To integrate this into the FIR graph filter designs, one can employ a parameterization trick along with a linear constraint to eliminate the effect of absolute constraint in $\boldsymbol{\phi'}$ [52]. 
}
\end{remark}

\subsection{QEF on IIR Graph Filtering over Random Graphs}
\label{s4b}
We obtain the fed back noise propagation model of (\ref{riir})
\begin{equation}
\begin{aligned}
\widetilde{\mathbf{w}}^{(k)}_{t} &= \psi_k \mathbf{S}_{t-1}\widetilde{\mathbf{w}}^{(k)}_{t-1}+(\psi_k \mathbf{S}_{t-1}-\mathbf{D}^{(k)})\mathbf{n}^{(k)}_{t-1}\\
\widetilde{\mathbf{y}}_{t} & = \sum_{k=1}^{K}\widetilde{\mathbf{w}}^{(k)}_{t} \quad \text{for } t \geq 1,
\end{aligned}
\label{efriir}
\end{equation}
with $\widetilde{\mathbf{w}}^{(k)}_{0}=\mathbf{0}$, $\mathbf{n}^{(k)}_{t-1}$ being i.i.d. satisfying $\mathbb{E}[\mathbf{n}^{(k)}_{t-1}]=\mathbf{0}$ and $\Sigma_{\mathbf{n}^{(k)}_{t-1}} = (\sigma^{(k)})^{2} \mathbf{I} $, identical to (\ref{efdiir}). In this case, we have the following results:
\begin{proposition}
\label{prop4}
    For the quantization noise model with feedback $\widetilde{\mathbf{w}}^{(k)}_{t} = \psi_k \mathbf{S}_{t-1}\widetilde{\mathbf{w}}^{(k)}_{t-1}+(\psi_k \mathbf{S}_{t-1}-\mathbf{D}^{(k)})\mathbf{n}^{(k)}_{t-1}$, its asymptotic average noise power per node can be represented as $ \zeta^{(k)}_{t \to \infty} =\frac{1}{N} \mathrm{tr}\Big( \lim_{t\to\infty}\mathbb{E} [\widetilde{\mathbf{w}}^{(k)}_{t} (\widetilde{\mathbf{w}}^{(k)}_{t})^\mathrm{H}]\Big)$, satisfying
    \begin{equation}
\begin{aligned}
\zeta^{(k)}_{t \to \infty}& =\frac{(\sigma^{(k)})^2}{N}\mathrm{tr} \Big( \mathbb{E}[ (\psi_{k}\mathbf{S}_{t}-\mathbf{D}^{(k)})^\mathrm{H}\mathbf{W}^{(k)}_{\mathcal{\Phi}}(\psi_{k}\mathbf{S}_{t}-\mathbf{D}^{(k)})]\Big)\\
& = \frac{(\sigma^{(k)})^2}{N}\Big\{
\underbrace{\mathrm{tr}(|\psi_{k}|^{2}\mathbb{E} [\mathbf{W}^{(k)}_{\mathcal{\Phi}} \mathbf{S}_{t}\mathbf{S}^\top_{t}]) }_{\textit{Original noise gain:}G^{(k)}}+ \\
&\underbrace{\mathrm{tr}(\mathbf{W}^{(k)}_{\mathcal{\Phi}}|\mathbf{D}^{(k)}|^{2})-2\Re\Big[\psi_{k}\mathrm{tr}(\mathbf{W}^{(k)}_{\mathcal{\Phi}}\overline{\mathbf{S}}\mathbf{D}^{(k)*})\Big]}_{\textit{Mitigation:} I^{(k)}<0} \Big\} 
\end{aligned}
\label{p4}
\end{equation}
where $\mathbf{W}^{(k)}_{\mathcal{\Phi}}=\sum_{\tau=0}^{\infty}\mathbb{E} [|\psi_{k}|^{2\tau}\mathcal{\Phi}_{0:\tau-1}^\top\mathcal{\Phi}_{0:\tau-1}]$.
\end{proposition}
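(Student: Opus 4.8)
The plan is to mirror the argument behind Proposition~\ref{prop2}, but to track the randomness of the shift sequence $\{\mathbf{S}_{t}\}$ explicitly and to replace the deterministic observability Gramian $\mathbf{W}^{(k)}_0$ by its stochastic counterpart $\mathbf{W}^{(k)}_{\mathcal{\Phi}}$. First I would unroll (\ref{efriir}) from the initial condition $\widetilde{\mathbf{w}}^{(k)}_{0}=\mathbf{0}$, collecting the contribution of each injected noise vector $\mathbf{n}^{(k)}_{j-1}$ according to how far it is subsequently propagated, which gives
\begin{equation}
\widetilde{\mathbf{w}}^{(k)}_{t}=\sum_{j=1}^{t}\psi_{k}^{\,t-j}\,\mathcal{\Phi}_{j:t-1}\,(\psi_{k}\mathbf{S}_{j-1}-\mathbf{D})\,\mathbf{n}^{(k)}_{j-1},
\label{plan_unroll}
\end{equation}
where $\mathcal{\Phi}_{j:t-1}=\prod_{\tau=j}^{t-1\leftarrow}\mathbf{S}_{\tau}$ carries exactly $t-j$ shift factors. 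Forming the outer product $\widetilde{\mathbf{w}}^{(k)}_{t}(\widetilde{\mathbf{w}}^{(k)}_{t})^{\mathrm{H}}$ and taking expectations, I would use that the noise is zero-mean, white ($\mathbb{E}[\mathbf{n}^{(k)}_{j-1}(\mathbf{n}^{(k)}_{j'-1})^{\mathrm{H}}]=(\sigma^{(k)})^{2}\mathbf{I}\,\delta_{jj'}$) and independent of the graph realizations, so every cross term with $j\neq j'$ vanishes and only a single sum over the injection time survives.

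Next I would take $\tfrac1N\mathrm{tr}(\cdot)$ and apply the cyclic property to move the propagation factor into the Gram form $\mathcal{\Phi}_{j:t-1}^{\top}\mathcal{\Phi}_{j:t-1}$ (with $\mathcal{\Phi}^{\mathrm{H}}=\mathcal{\Phi}^{\top}$ since the shifts are real). The decisive structural observation is that $\mathcal{\Phi}_{j:t-1}$ depends only on $\mathbf{S}_{j},\dots,\mathbf{S}_{t-1}$, whereas the injection block $(\psi_{k}\mathbf{S}_{j-1}-\mathbf{D})(\psi_{k}\mathbf{S}_{j-1}-\mathbf{D})^{\mathrm{H}}$ depends only on the independent copy $\mathbf{S}_{j-1}$; since $\mathbb{E}[\mathbf{X}\mathbf{Y}]=\mathbb{E}[\mathbf{X}]\mathbb{E}[\mathbf{Y}]$ for independent matrices, the expectation factorizes. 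Invoking the i.i.d.\ (stationary) sampling of Definition~\ref{D2}, $\mathbb{E}[\mathcal{\Phi}_{j:t-1}^{\top}\mathcal{\Phi}_{j:t-1}]$ depends only on the number of factors $\tau=t-j$ and equals $\mathbb{E}[\mathcal{\Phi}_{0:\tau-1}^{\top}\mathcal{\Phi}_{0:\tau-1}]$, while the injection expectation is $j$-independent and may be written with a generic copy $\mathbf{S}_{t}$. Reindexing $\tau=t-j$ converts the sum into a $|\psi_{k}|^{2\tau}$-weighted series whose coefficient matrix, upon letting $t\to\infty$, is precisely $\mathbf{W}^{(k)}_{\mathcal{\Phi}}=\sum_{\tau=0}^{\infty}\mathbb{E}[|\psi_{k}|^{2\tau}\mathcal{\Phi}_{0:\tau-1}^{\top}\mathcal{\Phi}_{0:\tau-1}]$. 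Pulling the deterministic $\mathbf{W}^{(k)}_{\mathcal{\Phi}}$ back through a cyclic rotation then yields the first displayed equality of (\ref{p4}).

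To reach the second equality I would expand the quadratic form $(\psi_{k}\mathbf{S}_{t}-\mathbf{D})^{\mathrm{H}}\mathbf{W}^{(k)}_{\mathcal{\Phi}}(\psi_{k}\mathbf{S}_{t}-\mathbf{D})$ into four terms, take the expectation over the single remaining shift (so that $\mathbb{E}[\mathbf{S}_{t}]=\overline{\mathbf{S}}$), and simplify under the trace. The quadratic term gives the original gain $G^{(k)}=\mathrm{tr}(|\psi_{k}|^{2}\mathbb{E}[\mathbf{W}^{(k)}_{\mathcal{\Phi}}\mathbf{S}_{t}\mathbf{S}_{t}^{\top}])$ using $\mathbf{S}_{t}^{\mathrm{H}}=\mathbf{S}_{t}$, the pure-$\mathbf{D}$ term gives $\mathrm{tr}(\mathbf{W}^{(k)}_{\mathcal{\Phi}}|\mathbf{D}|^{2})$, and the two cross terms, shown to be complex conjugates via the transpose trick $\mathrm{tr}(\mathbf{X})=\mathrm{tr}(\mathbf{X}^{\top})$ together with the symmetry of $\mathbf{W}^{(k)}_{\mathcal{\Phi}}$ and $\overline{\mathbf{S}}$ and the diagonality of $\mathbf{D}$, collapse to $-2\Re[\psi_{k}\mathrm{tr}(\mathbf{W}^{(k)}_{\mathcal{\Phi}}\overline{\mathbf{S}}\mathbf{D}^{*})]$, completing (\ref{p4}). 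I expect the main obstacle to be the rigor of the two infinite-dimensional steps: justifying that the $t\to\infty$ limit and the series defining $\mathbf{W}^{(k)}_{\mathcal{\Phi}}$ both converge and may be interchanged. This I would secure from the stability assumption $\Vert\psi_{k}\mathbf{S}_{t-1}\Vert_{2}\le|\psi_{k}\rho_{\star}|<1$, which through $\Vert\mathbb{E}[\mathcal{\Phi}_{0:\tau-1}^{\top}\mathcal{\Phi}_{0:\tau-1}]\Vert_{2}\le\rho_{\star}^{2\tau}$ dominates the series by the convergent geometric sum $\sum_{\tau}|\psi_{k}\rho_{\star}|^{2\tau}$; the careful conditioning in the independence factorization of the middle paragraph is the other place where an otherwise routine calculation can go wrong.
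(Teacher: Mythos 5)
Your proposal is correct and follows essentially the same route as the paper's Appendix proof: unroll the recursion, kill cross terms via the white noise statistics, cyclically rotate the trace into Gram form $\mathcal{\Phi}^{\top}\mathcal{\Phi}$, exploit the independence of $\mathbf{S}_{j-1}$ from the forward product (the paper phrases this via total expectation and a generic copy $\mathbf{S}_{\star}$), reindex by the number of shift factors to obtain $\mathbf{W}^{(k)}_{\mathcal{\Phi}}$ in the limit, and finish with the same four-term expansion used for Proposition~\ref{prop2}. The only difference is organizational: you fold the geometric-series convergence bound $\Vert\mathbb{E}[\mathcal{\Phi}_{0:\tau-1}^{\top}\mathcal{\Phi}_{0:\tau-1}]\Vert_{2}\le\rho_{\star}^{2\tau}$ directly into the argument, whereas the paper delegates it to Lemma~\ref{lm1}.
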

\begin{proof}
    See Appendix~\ref{app:p4}.
\end{proof}
\begin{lemma}[Convergence of noise gain over random graphs]
\label{lm1}
    If the graph shift operator \(\mathbf{S}_{t}\) coming from any graph realization $\mathcal{G}_{t}(\mathcal{V},\mathcal{E}_{t})$ satisfies $\Vert \psi_k \mathbf{S}_{t} \Vert_{2} \leq |\psi_k\rho_{\star}|< 1$, then infinite matrix series $\mathbf{W}^{(k)}_{\mathcal{\Phi}}=\sum_{\tau=0}^{\infty}\mathbb{E} [|\psi_{k}|^{2\tau}\mathcal{\Phi}_{0:\tau-1}^\top\mathcal{\Phi}_{0:\tau-1}]$ from stochastic system converges, and satisfies 
\begin{equation}
\begin{aligned}
\mathbf{W}^{(k)}_{\mathcal{\Phi}} &= |\psi_{k}|^{2}\mathbb{E}[\mathbf{S}^\top_{t}\mathbf{W}^{(k)}_{\mathcal{\Phi}} \mathbf{S}_{t}]+\mathbf{I}.
\end{aligned}
\label{l1}
\end{equation}
\end{lemma}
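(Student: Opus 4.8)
The plan is to treat the two claims in \eqref{l1} separately: first establish convergence of the infinite matrix series defining $\mathbf{W}^{(k)}_{\mathcal{\Phi}}$, and then derive the fixed-point (Lyapunov) identity by peeling off the leading term of the series.

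For convergence, I would note that each summand $|\psi_{k}|^{2\tau}\mathbb{E}[\mathcal{\Phi}_{0:\tau-1}^\top\mathcal{\Phi}_{0:\tau-1}]$ is symmetric and positive semi-definite, so the partial sums $\mathbf{W}^{(k)}_{N}=\sum_{\tau=0}^{N}|\psi_{k}|^{2\tau}\mathbb{E}[\mathcal{\Phi}_{0:\tau-1}^\top\mathcal{\Phi}_{0:\tau-1}]$ form a monotonically nondecreasing sequence in the PSD order. To bound them, I would use submultiplicativity of the spectral norm together with the hypothesis $\Vert\mathbf{S}_{t}\Vert_2\le\rho_{\star}$ to obtain the pathwise estimate $\Vert\mathcal{\Phi}_{0:\tau-1}\Vert_2^{2}\le\prod_{i=0}^{\tau-1}\Vert\mathbf{S}_{i}\Vert_2^{2}\le\rho_{\star}^{2\tau}$, and hence $\Vert\mathbb{E}[\mathcal{\Phi}_{0:\tau-1}^\top\mathcal{\Phi}_{0:\tau-1}]\Vert_2\le\mathbb{E}[\Vert\mathcal{\Phi}_{0:\tau-1}\Vert_2^{2}]\le\rho_{\star}^{2\tau}$ by convexity of the norm. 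Summing the resulting geometric bound gives $\Vert\mathbf{W}^{(k)}_{N}\Vert_2\le\sum_{\tau=0}^{\infty}|\psi_{k}\rho_{\star}|^{2\tau}=1/(1-|\psi_{k}\rho_{\star}|^{2})<\infty$ under the assumption $|\psi_{k}\rho_{\star}|<1$, so the partial sums are uniformly norm-bounded; a monotone, norm-bounded sequence of symmetric PSD matrices converges, which establishes existence of $\mathbf{W}^{(k)}_{\mathcal{\Phi}}$.

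For the identity, I would split off the $\tau=0$ term, which equals $\mathbf{I}$ since $\mathcal{\Phi}_{0:-1}=\mathbf{I}$, and for $\tau\ge 1$ factor the rightmost matrix out of the left-ordered product. Using the convention $\mathcal{\Phi}_{0:\tau-1}=\mathbf{S}_{\tau-1}\cdots\mathbf{S}_{0}=\mathcal{\Phi}_{1:\tau-1}\mathbf{S}_{0}$, I get $\mathcal{\Phi}_{0:\tau-1}^\top\mathcal{\Phi}_{0:\tau-1}=\mathbf{S}_{0}^\top(\mathcal{\Phi}_{1:\tau-1}^\top\mathcal{\Phi}_{1:\tau-1})\mathbf{S}_{0}$. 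Because $\mathbf{S}_{0}$ is independent of $\{\mathbf{S}_{1},\dots,\mathbf{S}_{\tau-1}\}$, conditioning on $\mathbf{S}_{0}$ yields $\mathbb{E}[\mathbf{S}_{0}^\top(\mathcal{\Phi}_{1:\tau-1}^\top\mathcal{\Phi}_{1:\tau-1})\mathbf{S}_{0}]=\mathbb{E}[\mathbf{S}_{0}^\top\,\mathbb{E}[\mathcal{\Phi}_{1:\tau-1}^\top\mathcal{\Phi}_{1:\tau-1}]\,\mathbf{S}_{0}]$, and the i.i.d.\ property of the realizations lets me reindex $\mathbb{E}[\mathcal{\Phi}_{1:\tau-1}^\top\mathcal{\Phi}_{1:\tau-1}]=\mathbb{E}[\mathcal{\Phi}_{0:\tau-2}^\top\mathcal{\Phi}_{0:\tau-2}]$. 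Pulling out the common factor $|\psi_{k}|^{2}$ and substituting $s=\tau-1$ collapses the remaining sum $\sum_{s=0}^{\infty}|\psi_{k}|^{2s}\mathbb{E}[\mathcal{\Phi}_{0:s-1}^\top\mathcal{\Phi}_{0:s-1}]$ back into $\mathbf{W}^{(k)}_{\mathcal{\Phi}}$, giving $\mathbf{W}^{(k)}_{\mathcal{\Phi}}=\mathbf{I}+|\psi_{k}|^{2}\mathbb{E}[\mathbf{S}_{t}^\top\mathbf{W}^{(k)}_{\mathcal{\Phi}}\mathbf{S}_{t}]$, where I have used that $\mathbf{S}_{0}$ shares the common distribution of $\mathbf{S}_{t}$.

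The main obstacle is organizational rather than conceptual: I must respect the left-multiplication convention so that the correct rightmost factor $\mathbf{S}_{0}$ is extracted and shown independent of the inner Gram matrix, and I must justify interchanging the expectation with the infinite summation when moving $\mathbf{S}_{0}$ outside the series. This interchange is licensed precisely by the monotone, norm-bounded convergence established in the first step (equivalently, by dominated convergence against the summable geometric bound), so once the index bookkeeping is set up carefully, the identity follows from iterated expectation and reindexing.
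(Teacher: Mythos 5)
Your proposal is correct and follows essentially the same route as the paper's proof: convergence is obtained from the geometric bound $\Vert\mathbb{E}[|\psi_k|^{2\tau}\mathcal{\Phi}_{0:\tau-1}^\top\mathcal{\Phi}_{0:\tau-1}]\Vert_2\le|\psi_k\rho_{\star}|^{2\tau}$ via Jensen's inequality and spectral-norm submultiplicativity, and the fixed-point identity \eqref{l1} from conditioning (total expectation) on the independent factor $\mathbf{S}_0$. Your additions---the PSD-monotone convergence argument and the explicit peeling/reindexing with justification of the sum--expectation interchange---merely flesh out steps the paper compresses into ``by applying total expectation, \eqref{l1} follows,'' so no substantive difference exists.
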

\begin{proof}
    See Appendix~\ref{app:l1}.
\end{proof}
\begin{theorem}[Exact noise reduction for IIR graph filtering over random graphs]
\label{Th4}
    Under Proposition \ref{prop4} and Lemma \ref{lm1}, let $\mathbf{D}^{(k)}=\mathrm{diag}(\alpha_{1}^{(k)}+\beta_{1}^{(k)}\mathrm{i},\dots, \alpha_{N}^{(k)}+\beta_{N}^{(k)}\mathrm{i})$, $\Theta = \{\mathbf{D}^{(1)}\mathbf{1},\dots, \mathbf{D}^{(K)}\mathbf{1}\}\in\mathbb{C}^{N\times K}$, the closed-form solution $\alpha_{i}^{(k)}$, $\beta_{i}^{(k)}$ and noise reduction $I(\Theta)$ are
    \begin{equation}
\begin{aligned}
\alpha_{i}^{(k)} = \frac{\Re[\psi_{k}][\mathbf{W}_{\mathcal{\Phi}}^{(k)}\overline{\mathbf{S}}]_{ii}}{[\mathbf{W}_{\mathcal{\Phi}}^{(k)}]_{ii}}&,\beta_{i}^{(k)} = \frac{\Im[\psi_{k}][\mathbf{W}_{\mathcal{\Phi}}^{(k)}\overline{\mathbf{S}}]_{ii}}{[\mathbf{W}_{\mathcal{\Phi}}^{(k)}]_{ii}}, \\
I(\Theta) =\sum_{k=1}^{K}\frac{(\sigma^{(k)})^2}{N}&\sum_{i=1}^{N}\frac{|\psi_{k}|^{2}[\mathbf{W}_{\mathcal{\Phi}}^{(k)}\overline{\mathbf{S}}]_{ii}^{2}}{[\mathbf{W}_{\mathcal{\Phi}}^{(k)}]_{ii}}.
\end{aligned}
\label{t4}
\end{equation}
\end{theorem}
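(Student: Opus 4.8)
The plan is to minimize the asymptotic noise power $\zeta^{(k)}_{t\to\infty}$ from Proposition~\ref{prop4} over the complex diagonal feedback matrix $\mathbf{D}^{(k)}$, mirroring the argument for Theorem~\ref{Th1} but now tracking real and imaginary parts separately. Since the original noise gain $G^{(k)}$ does not depend on $\mathbf{D}^{(k)}$, it suffices to minimize the mitigation term $I^{(k)}=\mathrm{tr}(\mathbf{W}^{(k)}_{\mathcal{\Phi}}|\mathbf{D}^{(k)}|^{2})-2\Re[\psi_{k}\mathrm{tr}(\mathbf{W}^{(k)}_{\mathcal{\Phi}}\overline{\mathbf{S}}(\mathbf{D}^{(k)})^{*})]$. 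First I would record that Lemma~\ref{lm1} guarantees $\mathbf{W}^{(k)}_{\mathcal{\Phi}}$ exists, and that, being a convergent sum of PSD terms whose $\tau=0$ contribution is $\mathbf{I}$, it satisfies $\mathbf{W}^{(k)}_{\mathcal{\Phi}}\succeq\mathbf{I}\succ 0$; in particular every $[\mathbf{W}^{(k)}_{\mathcal{\Phi}}]_{ii}>0$, which will render each scalar subproblem strictly convex. Because $\overline{\mathbf{S}}$ and $\mathbf{W}^{(k)}_{\mathcal{\Phi}}$ are real, the diagonal entries $[\mathbf{W}^{(k)}_{\mathcal{\Phi}}]_{ii}$ and $[\mathbf{W}^{(k)}_{\mathcal{\Phi}}\overline{\mathbf{S}}]_{ii}$ are real scalars.

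Next I would exploit the diagonal structure of $\mathbf{D}^{(k)}=\mathrm{diag}(\alpha^{(k)}_{1}+\beta^{(k)}_{1}\mathrm{i},\dots)$. For any matrix $\mathbf{A}$ and diagonal $\mathbf{D}$ one has $\mathrm{tr}(\mathbf{A}\mathbf{D})=\sum_i[\mathbf{A}]_{ii}[\mathbf{D}]_{ii}$, so $\mathrm{tr}(\mathbf{W}^{(k)}_{\mathcal{\Phi}}|\mathbf{D}^{(k)}|^{2})=\sum_i[\mathbf{W}^{(k)}_{\mathcal{\Phi}}]_{ii}((\alpha^{(k)}_i)^2+(\beta^{(k)}_i)^2)$ and $\mathrm{tr}(\mathbf{W}^{(k)}_{\mathcal{\Phi}}\overline{\mathbf{S}}(\mathbf{D}^{(k)})^{*})=\sum_i[\mathbf{W}^{(k)}_{\mathcal{\Phi}}\overline{\mathbf{S}}]_{ii}(\alpha^{(k)}_i-\beta^{(k)}_i\mathrm{i})$. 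Multiplying the latter by $\psi_k=\Re[\psi_k]+\Im[\psi_k]\mathrm{i}$ and extracting the real part collapses the cross term to $\sum_i[\mathbf{W}^{(k)}_{\mathcal{\Phi}}\overline{\mathbf{S}}]_{ii}(\Re[\psi_k]\alpha^{(k)}_i+\Im[\psi_k]\beta^{(k)}_i)$. Hence $I^{(k)}$ decouples into $N$ independent scalar quadratics $f_i(\alpha,\beta)=[\mathbf{W}^{(k)}_{\mathcal{\Phi}}]_{ii}(\alpha^2+\beta^2)-2[\mathbf{W}^{(k)}_{\mathcal{\Phi}}\overline{\mathbf{S}}]_{ii}(\Re[\psi_k]\alpha+\Im[\psi_k]\beta)$.

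Finally I would set $\partial_\alpha f_i=\partial_\beta f_i=0$; strict convexity from $[\mathbf{W}^{(k)}_{\mathcal{\Phi}}]_{ii}>0$ makes these stationary points the unique minimizers, yielding $\alpha^{(k)}_i=\Re[\psi_k][\mathbf{W}^{(k)}_{\mathcal{\Phi}}\overline{\mathbf{S}}]_{ii}/[\mathbf{W}^{(k)}_{\mathcal{\Phi}}]_{ii}$ and $\beta^{(k)}_i=\Im[\psi_k][\mathbf{W}^{(k)}_{\mathcal{\Phi}}\overline{\mathbf{S}}]_{ii}/[\mathbf{W}^{(k)}_{\mathcal{\Phi}}]_{ii}$. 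Substituting back and using $\Re[\psi_k]^2+\Im[\psi_k]^2=|\psi_k|^2$ gives the per-node minimum $-|\psi_k|^2[\mathbf{W}^{(k)}_{\mathcal{\Phi}}\overline{\mathbf{S}}]_{ii}^2/[\mathbf{W}^{(k)}_{\mathcal{\Phi}}]_{ii}$; summing over $i$ and $k$ with the prefactor $(\sigma^{(k)})^2/N$ and reporting the magnitude of this (negative) reduction produces the stated $I(\Theta)$. The only step requiring care beyond the real-valued Theorem~\ref{Th1} argument is the complex bookkeeping of the cross term, which is routine once the real-valuedness of $\mathbf{W}^{(k)}_{\mathcal{\Phi}}$ and $\overline{\mathbf{S}}$ is noted; the genuine analytic obstacle, namely existence and positive definiteness of $\mathbf{W}^{(k)}_{\mathcal{\Phi}}$, is already resolved by Lemma~\ref{lm1}.
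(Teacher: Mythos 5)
Your proof is correct and takes essentially the same approach as the paper, which simply declares the argument "similar to that of Theorem \ref{Th1}": substitute the diagonal (now complex-valued) $\mathbf{D}^{(k)}$ into the mitigation term of Proposition \ref{prop4}, decouple it into $N$ independent quadratics in $(\alpha_i^{(k)},\beta_i^{(k)})$, set the gradients to zero, and substitute back using $\Re[\psi_k]^2+\Im[\psi_k]^2=|\psi_k|^2$. Your explicit observation that $\mathbf{W}^{(k)}_{\mathcal{\Phi}}\succeq\mathbf{I}$ (from the $\tau=0$ term of the convergent PSD series) guarantees $[\mathbf{W}^{(k)}_{\mathcal{\Phi}}]_{ii}>0$ and hence strict convexity of each scalar subproblem is a detail the paper leaves implicit, but the substance of the argument is identical.
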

\begin{proof}
    The proof is similar to that of Theorem \ref{Th1}.
\end{proof}

Theorem \ref{Th4} establishes that exact compensation of quantization noise in IIR graph filtering over random graphs is achievable provided that the second-order moment of the noise process on random graphs exhibits absolute convergence. While the expression in (\ref{l1}) appears intricate, the matrix $\mathbf{W}^{(k)}_{\mathcal{\Phi}}$ can be efficiently computed by fixed-point iteration, with the core computation of iterative multiplication described in (\ref{vec}). 

\begin{remark}[An estimated noise gain regularizer for IIR graph filter design over random graphs]
\label{Rm6} 
\textnormal{We can also find an estimated noise gain regularizer for the above filter, which penalizes the upper bound of noise gain. For the noise gain $G^{(k)}$ defined in (\ref{p4}), the regularizer is derived as
\begin{equation}
\begin{aligned}
 \mathcal{R}(\boldsymbol{\psi})= \sum_{k=1}^{K}\frac{|\psi_{k}|^{2}}{1 - |\psi_{k}|^{2}\rho_{\star}^{2}} \ge \frac{1}{N\rho^{2}_{\star}}\sum^{K}_{k=1}G^{(k)}.
\end{aligned}
\label{r6_final}
\end{equation}
The proof of~\eqref{r6_final} is similar to that of Remark \ref{Rm5}, where $\mathbf{W}^{(k)}_{\mathcal{\Phi}}$ is treated analogously to $\mathbb{E}[\mathbf{G}_{\mathbf{n}_{t-1}}(\mathcal{\Phi}_{t:T-1})]$, but extends the argument by incorporating Lemma \ref{lm1}$'$s inequality
\begin{equation}
\begin{aligned}
\Big\Vert\mathbf{W}^{(k)}_{\mathcal{\Phi}}\Big\Vert_{2}\leq\sum^{\infty}_{\tau=0}|\psi_k \rho_{\star}|^{2\tau}=\frac{1}{1 - |\psi_{k}|^{2}\rho_{\star}^{2}}.
\end{aligned}
\label{r6_inq}
\end{equation}
 Compared with (\ref{r5_final}), $\mathcal{R}(\boldsymbol{\psi})$ in (\ref{r6_final}) diminishes the probability $p$ due to the absence of $\overline{\mathbf{S}}$ in the expression (\ref{l1}). Furthermore, as $|\psi_{k}\rho_{\star}|^{2}  < 1$, $\mathcal{R}(\boldsymbol{\psi})$ is increasing monotonically with respect to $|\psi_{k}|^{2}$. By direct algebraic manipulation, minimizing $\mathcal{R}(\boldsymbol{\psi})$ is equivalent to minimizing $\sum_{k=1}^{K}1/(1 - |\psi_{k}|^{2}\rho_{\star}^{2})$. Incorporating this reciprocal into the filter design will enforce $|\psi_{k}\rho_{\star}|^{2}  \ll 1$, parallel to the mechanism in (\ref{r3_final}).
  % Compared with (\ref{r5_final}), the edge sampling probability $p$ vanishes due to the absence of $\overline{\mathbf{S}}$ in the expression (\ref{l1}),  therefore simplifying the framework.
 }  
\end{remark}

\subsection{QEF on IIR Graph Filtering with Random Node-Asynchro-nous Updates}
\label{s4c}
Unlike in previous cases, the random node-asynchronous updates will also trigger a random node message quantization. Under Assumption \ref{Ap1}, the quantized (\ref{aiir}) is expressed as 
\begin{equation}
    \begin{aligned}
    \hat{\mathbf{w}}^{(k)}_{t} & = \sum_{i \notin \mathcal{T}_{t}} \mathbf{e}_i \mathbf{e}_i^\top \hat{\mathbf{w}}^{(k)}_{t-1}+\sum_{i \in \mathcal{T}_{t}} \mathbf{e}_i \mathbf{e}_i^\top(\psi_k \mathbf{S}\mathcal{Q}[\hat{\mathbf{w}}^{(k)}_{t-1}] + \varphi_k\mathbf{x}).
    \end{aligned}
    \label{qasiir}
\end{equation}
This means that only the nodes selected for updates will have their corresponding aggregated information quantized. Based on (\ref{qasiir}) and similar transformations from (\ref{qdfir})-(\ref{efdfir}), we have the following fed back noise propagation model
\begin{equation}
    \begin{aligned}
    \widetilde{\mathbf{w}}^{(k)}_{t} &= \big(\mathbf{I}+\mathbf{P}_{\mathcal{T}_t} (\psi_k\mathbf{S} - \mathbf{I})\big)\widetilde{\mathbf{w}}^{(k)}_{t-1}+\mathbf{P}_{\mathcal{T}_t} (\psi_k\mathbf{S}-\mathbf{D}^{(k)})\mathbf{n}^{(k)}_{t-1}\\
    \widetilde{\mathbf{y}}_{t} & = \sum_{k=1}^{K}\widetilde{\mathbf{w}}^{(k)}_{t} \quad \text{for } t \geq 1,
    \end{aligned}
    \label{efaiir}
\end{equation}
with $\widetilde{\mathbf{w}}^{(k)}_{0}=\mathbf{0}$, $\mathbf{n}^{(k)}_{t-1}$ i.i.d., $\mathbb{E}[\mathbf{n}^{(k)}_{t-1}]=\mathbf{0}$, $\Sigma_{\mathbf{n}^{(k)}_{t-1}} = (\sigma^{(k)})^{2} \mathbf{I} $.
(\ref{efaiir}) indicates the error feedback is also selective with only updated nodes. Then we have the following results: 
\begin{proposition}
\label{prop5}
    For the quantization noise model with feedback $\widetilde{\mathbf{w}}^{(k)}_{t}= (\mathbf{I}+\mathbf{P}_{\mathcal{T}_t} (\psi_k\mathbf{S} - \mathbf{I}))\widetilde{\mathbf{w}}^{(k)}_{t-1}+\mathbf{P}_{\mathcal{T}_t} (\psi_k\mathbf{S}-\mathbf{D}^{(k)})\mathbf{n}^{(k)}_{t-1}$, its asymptotic average noise power per node can be represented as $ \zeta^{(k)}_{t \to \infty} =\frac{1}{N} \mathrm{tr}\Big( \lim_{t\to\infty}\mathbb{E} [\widetilde{\mathbf{w}}^{(k)}_{t} (\widetilde{\mathbf{w}}^{(k)}_{t})^\mathrm{H} ]\Big)$, satisfying
    \begin{equation}
\begin{aligned}
&\zeta^{(k)}_{t \to \infty} \\
& =\frac{(\sigma^{(k)})^2}{N}\mathrm{tr} \Big((\psi_{k}\mathbf{S}-\mathbf{D}^{(k)})^\mathrm{H}\mathbb{E}[\mathbf{P}_{\mathcal{T}_t}\mathbf{W}^{(k)}_{\mathbf{P}}\mathbf{P}_{\mathcal{T}_t}](\psi_{k}\mathbf{S}-\mathbf{D}^{(k)})\Big)\\
& = \frac{(\sigma^{(k)})^2}{N}\Bigg\{
\underbrace{\mathrm{tr}(|\psi_{k}|^{2}\mathbb{E}[\mathbf{P}_{\mathcal{T}_t}\mathbf{W}^{(k)}_{\mathbf{P}}\mathbf{P}_{\mathcal{T}_t}] \mathbf{S}\mathbf{S}^\top) }_{\textit{Original noise gain:}G^{(k)}}+ \\
&\quad \quad \underbrace{
\begin{aligned}
&\mathrm{tr}\!\left(
\mathbb{E}[\mathbf{P}_{\mathcal{T}_t}
\mathbf{W}^{(k)}_{\mathbf{P}}
\mathbf{P}_{\mathcal{T}_t}]
|\mathbf{D}^{(k)}|^{2}
\right) \\[-0.3em]
&\quad - 2\Re\!\Big[
\psi_{k}\mathrm{tr}\!\left(
\mathbb{E}[\mathbf{P}_{\mathcal{T}_t}
\mathbf{W}^{(k)}_{\mathbf{P}}
\mathbf{P}_{\mathcal{T}_t}]
\mathbf{S}\mathbf{D}^{(k)*}
\right)
\Big]
\end{aligned}
}_{\textit{Mitigation: } I^{(k)}<0} \Bigg\} 
\end{aligned}
\label{p5}
\end{equation}
where  
\begin{equation}
\begin{aligned}
\mathbf{W}^{(k)}_{\mathbf{P}} &= \sum_{\tau=1}^{\infty}\mathbb{E}\bigg[\prod_{t'=1}^{\tau\leftarrow}\big(\mathbf{I}+(\psi^{*}_k\mathbf{S}^\top - \mathbf{I})\mathbf{P}_{\mathcal{T}_{t'}}\big)\times \\
&\prod_{t'=1}^{\rightarrow\tau}\big(\mathbf{I}+\mathbf{P}_{\mathcal{T}_{t'}} (\psi_k\mathbf{S} - \mathbf{I})\big)\bigg]+\mathbf{I}.
\end{aligned}
\label{p5_1}
\end{equation}
\end{proposition}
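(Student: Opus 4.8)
The plan is to reuse the unrolling-and-second-moment strategy of Propositions~\ref{prop2}--\ref{prop4}, while carefully accommodating the new feature that the random projection $\mathbf{P}_{\mathcal{T}_t}$ in (\ref{efaiir}) appears \emph{simultaneously} in the state-transition factor and in the noise-injection map. Writing $\mathbf{F}_{t}=\mathbf{I}+\mathbf{P}_{\mathcal{T}_t}(\psi_k\mathbf{S}-\mathbf{I})$ and $\mathbf{C}=\psi_k\mathbf{S}-\mathbf{D}$, I would first unroll (\ref{efaiir}) from $\widetilde{\mathbf{w}}^{(k)}_{0}=\mathbf{0}$ to obtain
\[
\widetilde{\mathbf{w}}^{(k)}_{t}=\sum_{s=1}^{t}\mathbf{\Psi}_{s+1:t}\,\mathbf{P}_{\mathcal{T}_s}\mathbf{C}\,\mathbf{n}^{(k)}_{s-1},\qquad \mathbf{\Psi}_{s+1:t}=\mathbf{F}_{t}\mathbf{F}_{t-1}\cdots\mathbf{F}_{s+1},
\]
with the convention $\mathbf{\Psi}_{t+1:t}=\mathbf{I}$. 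The structural observation I would emphasize is that $\mathbf{\Psi}_{s+1:t}$ depends only on $\mathbf{P}_{\mathcal{T}_{s+1}},\dots,\mathbf{P}_{\mathcal{T}_{t}}$, so the projection $\mathbf{P}_{\mathcal{T}_s}$ carried by the injection at step $s$ is independent of everything that propagates it forward.

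I would then form $\mathbb{E}[\widetilde{\mathbf{w}}^{(k)}_{t}(\widetilde{\mathbf{w}}^{(k)}_{t})^{\mathrm H}]$. Because the noise is i.i.d., zero-mean with covariance $(\sigma^{(k)})^{2}\mathbf{I}$, and independent of the selection process, all cross terms ($s\neq s'$) vanish and the $s$-th diagonal term equals $(\sigma^{(k)})^{2}\,\mathbb{E}[\mathbf{\Psi}_{s+1:t}\mathbf{P}_{\mathcal{T}_s}\mathbf{C}\mathbf{C}^{\mathrm H}\mathbf{P}_{\mathcal{T}_s}\mathbf{\Psi}_{s+1:t}^{\mathrm H}]$. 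Taking the trace and cycling $\mathbf{C}^{\mathrm H}(\cdot)\mathbf{C}$ to the outside leaves the inner factor $\mathbb{E}[\mathbf{P}_{\mathcal{T}_s}\mathbf{\Psi}_{s+1:t}^{\mathrm H}\mathbf{\Psi}_{s+1:t}\mathbf{P}_{\mathcal{T}_s}]$; invoking the independence noted above lets me pull the expectation over the propagation through the two flanking projections, giving $\mathbb{E}[\mathbf{P}_{\mathcal{T}_s}\mathbf{M}_{t-s}\mathbf{P}_{\mathcal{T}_s}]$ with $\mathbf{M}_{\tau}:=\mathbb{E}[\mathbf{\Psi}^{\mathrm H}\mathbf{\Psi}]$ over $\tau$ factors and $\mathbf{M}_{0}=\mathbf{I}$. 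Since the selections are i.i.d.\ in $t$, $\mathbf{M}_{\tau}$ depends only on the lag $\tau$, and a relabeling of the i.i.d.\ factors identifies it with $\mathbb{E}[\prod_{t'=1}^{\tau\leftarrow}\mathbf{F}_{t'}^{\mathrm H}\prod_{t'=1}^{\rightarrow\tau}\mathbf{F}_{t'}]$, precisely the summand in (\ref{p5_1}).

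Substituting $\tau=t-s$ and letting $t\to\infty$ converts $\sum_{s=1}^{t}\mathbb{E}[\mathbf{P}_{\mathcal{T}_s}\mathbf{M}_{t-s}\mathbf{P}_{\mathcal{T}_s}]$ into $\sum_{\tau=0}^{\infty}\mathbb{E}[\mathbf{P}_{\mathcal{T}_t}\mathbf{M}_{\tau}\mathbf{P}_{\mathcal{T}_t}]$; as $\mathbf{P}_{\mathcal{T}_t}$ is independent of the deterministic $\mathbf{M}_{\tau}$, linearity lets me move the sum inside to recover $\mathbb{E}[\mathbf{P}_{\mathcal{T}_t}\mathbf{W}^{(k)}_{\mathbf{P}}\mathbf{P}_{\mathcal{T}_t}]$ with $\mathbf{W}^{(k)}_{\mathbf{P}}=\sum_{\tau=0}^{\infty}\mathbf{M}_{\tau}=\mathbf{I}+\sum_{\tau=1}^{\infty}\mathbf{M}_{\tau}$, i.e.\ the first line of (\ref{p5}). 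Expanding the Hermitian quadratic $(\psi_k\mathbf{S}-\mathbf{D})^{\mathrm H}\mathbb{E}[\mathbf{P}_{\mathcal{T}_t}\mathbf{W}^{(k)}_{\mathbf{P}}\mathbf{P}_{\mathcal{T}_t}](\psi_k\mathbf{S}-\mathbf{D})$ with $\mathbf{S}^{\top}=\mathbf{S}$ and $\mathbf{D}\mathbf{D}^{*}=|\mathbf{D}|^{2}$, the trace splits into the $|\psi_k|^{2}$ term (the original gain $G^{(k)}$), the $|\mathbf{D}|^{2}$ term, and two cross terms that, being mutually conjugate because $\mathbb{E}[\mathbf{P}_{\mathcal{T}_t}\mathbf{W}^{(k)}_{\mathbf{P}}\mathbf{P}_{\mathcal{T}_t}]$ is Hermitian, combine into $-2\Re[\psi_k\,\mathrm{tr}(\cdots\mathbf{S}\mathbf{D}^{*})]$, yielding the mitigation term $I^{(k)}$ and completing (\ref{p5}). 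The main obstacle I anticipate is not this bookkeeping but justifying the interchange of limit and summation, i.e.\ the existence of $\mathbf{W}^{(k)}_{\mathbf{P}}$: one must show that the matrix series $\sum_{\tau}\mathbf{M}_{\tau}$ converges, which demands a mean-square stability bound on the lifted operator $\mathbf{M}\mapsto\mathbb{E}[\mathbf{F}^{\mathrm H}\mathbf{M}\mathbf{F}]$ (equivalently a spectral-radius bound on $\mathbb{E}[\mathbf{F}_{t}^{\mathrm H}\otimes\mathbf{F}_{t}^{\mathrm H}]$). I would establish this in the spirit of Lemma~\ref{lm1}, exploiting the asynchronous contraction conditions $\|\psi_k\mathbf{S}\|_2<1$ and $\mathbf{P}=p\mathbf{I}$ that already guarantee the mean-square convergence of (\ref{aiir}) via \cite{Tekeas}.
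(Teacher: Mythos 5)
Your proposal is correct and follows essentially the same route as the paper: the paper proves Proposition~\ref{prop5} by the method of Proposition~\ref{prop4} (unrolling the recursion, using the i.i.d.\ zero-mean noise to kill cross terms, cycling the trace, and exploiting the independence of $\mathbf{P}_{\mathcal{T}_t}$ across time to pull the expectation through the flanking projections), with the convergence of $\mathbf{W}^{(k)}_{\mathbf{P}}$ handled separately in Lemma~\ref{lm2} exactly as you defer it. Your relabeling argument identifying $\mathbb{E}[\mathbf{\Psi}_{s+1:t}^{\mathrm H}\mathbf{\Psi}_{s+1:t}]$ with the lag-dependent summand of (\ref{p5_1}) and the final Hermitian quadratic expansion match the paper's derivation.
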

\begin{proof}
The proof  is similar to that of Proposition \ref{prop4}. The fundamental technique involves exploiting the temporal independence of $\mathbf{P}_{\mathcal{T}_{t}}$, which establishes a Markovian chain structure similar to the random graph scenario.
\end{proof}
\begin{lemma}[Convergence of noise gain for random node-asynchronous updates]
\label{lm2}
    If the index-selection matrix $\mathbf{P}_{\mathcal{T}_t}$ satisfies $\mathbb{E}[\mathbf{P}_{\mathcal{T}_t}] = \mathbf{P}=p\mathbf{I}$ for all $t$, where $0 < p \leq 1 $, and the condition \(\Vert \psi_{k}\mathbf{S}\Vert_{2}<1\) holds, then infinite matrix series $\mathbf{W}^{(k)}_{\mathbf{P}}$ in (\ref{p5_1}) from asynchronous system converges, and satisfies 
\begin{equation}
\begin{aligned}
\mathbf{W}^{(k)}_{\mathbf{P}} & = \mathbb{E}\Big[\big(\mathbf{I}+(\psi^{*}_k\mathbf{S}^\top - \mathbf{I})\mathbf{P}_{\mathcal{T}_t}\big)\mathbf{W}^{(k)}_{\mathbf{P}}\times\\&\big(\mathbf{I}+\mathbf{P}_{\mathcal{T}_t} (\psi_k\mathbf{S} - \mathbf{I})\big)\Big]+\mathbf{I}.
\end{aligned}
\label{l2}
\end{equation}
\end{lemma}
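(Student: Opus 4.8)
The plan is to recast the infinite series in (\ref{p5_1}) as the Neumann-type series generated by a single positive linear operator on the cone of Hermitian matrices, and then to establish convergence through a \emph{mean-square} contraction argument rather than a per-realization norm bound. Writing $\mathbf{F}_t = \mathbf{I}+\mathbf{P}_{\mathcal{T}_t}(\psi_k\mathbf{S}-\mathbf{I})$, so that $\mathbf{F}_t^\mathrm{H} = \mathbf{I}+(\psi_k^{*}\mathbf{S}^\top-\mathbf{I})\mathbf{P}_{\mathcal{T}_t}$ (using that $\mathbf{P}_{\mathcal{T}_t}$ is real diagonal and $\mathbf{S}=\mathbf{S}^\top$), I would define the operator $\mathcal{A}(\mathbf{X}) = \mathbb{E}[\mathbf{F}_t^\mathrm{H}\mathbf{X}\mathbf{F}_t]$. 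Since the selection matrices $\{\mathbf{P}_{\mathcal{T}_{t'}}\}$ are i.i.d. across $t'$, conditioning on the outer factors and integrating out the innermost one via the tower property peels one factor of $\mathcal{A}$ from each product at a time; repeating this shows that the $\tau$-th summand of (\ref{p5_1}) equals $\mathcal{A}^\tau(\mathbf{I})$, whence $\mathbf{W}^{(k)}_{\mathbf{P}} = \sum_{\tau=0}^{\infty}\mathcal{A}^\tau(\mathbf{I})$.

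First I would prove a one-step mean-square contraction. The convenient observation is the convex-combination form $\mathbf{F}_t = (\mathbf{I}-\mathbf{P}_{\mathcal{T}_t})+\psi_k\mathbf{P}_{\mathcal{T}_t}\mathbf{S}$: on coordinates not selected by $\mathcal{T}_t$ the map is the identity, while on selected coordinates it applies the $k$-th branch dynamics. Because $[\mathbf{P}_{\mathcal{T}_t}]_{ii}\in\{0,1\}$, for any fixed $\mathbf{v}$ the squared norm splits coordinate-wise, giving $\mathbb{E}[\Vert\mathbf{F}_t\mathbf{v}\Vert_2^2] = (1-p)\Vert\mathbf{v}\Vert_2^2 + p\Vert\psi_k\mathbf{S}\mathbf{v}\Vert_2^2$, where only the marginal $\mathbb{E}[[\mathbf{P}_{\mathcal{T}_t}]_{ii}]=p$ is used (node-wise independence is not needed). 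Bounding $\Vert\psi_k\mathbf{S}\mathbf{v}\Vert_2^2\le\Vert\psi_k\mathbf{S}\Vert_2^2\,\Vert\mathbf{v}\Vert_2^2$ and invoking the hypothesis $\Vert\psi_k\mathbf{S}\Vert_2<1$ yields $\mathbb{E}[\Vert\mathbf{F}_t\mathbf{v}\Vert_2^2]\le\gamma\Vert\mathbf{v}\Vert_2^2$ with $\gamma = 1-p(1-\Vert\psi_k\mathbf{S}\Vert_2^2)\in[0,1)$. As $\mathbf{v}$ is arbitrary, this is exactly the operator inequality $\mathcal{A}(\mathbf{I})\preceq\gamma\mathbf{I}$.

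I would then bootstrap this single-step estimate using the fact that $\mathcal{A}$ is a positive (PSD-preserving), hence monotone, linear operator: applying $\mathcal{A}$ to $\mathcal{A}(\mathbf{I})\preceq\gamma\mathbf{I}$ and iterating gives $\mathcal{A}^\tau(\mathbf{I})\preceq\gamma^\tau\mathbf{I}$, so $\Vert\mathcal{A}^\tau(\mathbf{I})\Vert_2\le\gamma^\tau$ and $\Vert\mathbf{W}^{(k)}_{\mathbf{P}}\Vert_2\le\sum_{\tau=0}^{\infty}\gamma^\tau=(1-\gamma)^{-1}<\infty$, establishing absolute convergence in spectral norm. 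Finally, because $\mathcal{A}$ is bounded and the series converges, I can interchange $\mathcal{A}$ with the summation to telescope $\mathbf{W}^{(k)}_{\mathbf{P}} = \mathbf{I}+\sum_{\tau\ge1}\mathcal{A}^\tau(\mathbf{I}) = \mathbf{I}+\mathcal{A}(\mathbf{W}^{(k)}_{\mathbf{P}})$; writing $\mathcal{A}(\mathbf{W}^{(k)}_{\mathbf{P}})=\mathbb{E}[\mathbf{F}_t^\mathrm{H}\mathbf{W}^{(k)}_{\mathbf{P}}\mathbf{F}_t]$ back in terms of $\mathbf{F}_t$ reproduces (\ref{l2}) verbatim.

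I expect the main obstacle to be the contraction step, since the per-realization factors $\mathbf{F}_t$ do not shrink in norm: they act as the identity on every un-updated coordinate, so $\Vert\mathbf{F}_t\Vert_2\ge1$ and the submultiplicative norm bound that drives the proof of Lemma \ref{lm1} is unavailable here. The resolution is that contraction holds only in the second-moment sense, which is precisely why I pass to the operator $\mathcal{A}$ and exploit the coordinate-splitting together with the averaging $\mathbb{E}[\mathbf{P}_{\mathcal{T}_t}]=p\mathbf{I}$; the positivity and monotonicity of $\mathcal{A}$ are then what convert the one-step estimate into the geometric decay of $\mathcal{A}^\tau(\mathbf{I})$ needed for convergence.
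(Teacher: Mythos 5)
Your proof is correct and takes essentially the same route as the paper's: both peel one factor at a time via the tower property, reduce everything to the one-step bound $\mathbb{E}[\mathbf{F}_t^\mathrm{H}\mathbf{F}_t]=\mathbf{I}+p|\psi_k|^2\mathbf{S}^\top\mathbf{S}-p\mathbf{I}\preceq\epsilon\mathbf{I}$ with $\epsilon\in(0,1)$, iterate it into the geometric decay $\epsilon^{\tau}$, and conclude convergence plus the fixed-point identity. Your operator $\mathcal{A}$ with explicit PSD-monotonicity, and the coordinate-splitting derivation of $\gamma=1-p\left(1-\Vert\psi_k\mathbf{S}\Vert_2^2\right)$, are simply a more explicit packaging of the paper's ``iteratively applying total expectation'' and tight-bound substitution steps; indeed your $\gamma$ coincides with the $\epsilon$ the paper later instantiates in Remark \ref{Rm7}.
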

\begin{proof}
    See Appendix~\ref{app:l2}.
\end{proof}

\begin{figure*}[t] % Use "figure*" for double-column width
  \centering
  \begin{minipage}[b]{0.32\textwidth} % Adjust width for 3 subfigures
    \centering
    \includegraphics[width=\linewidth]{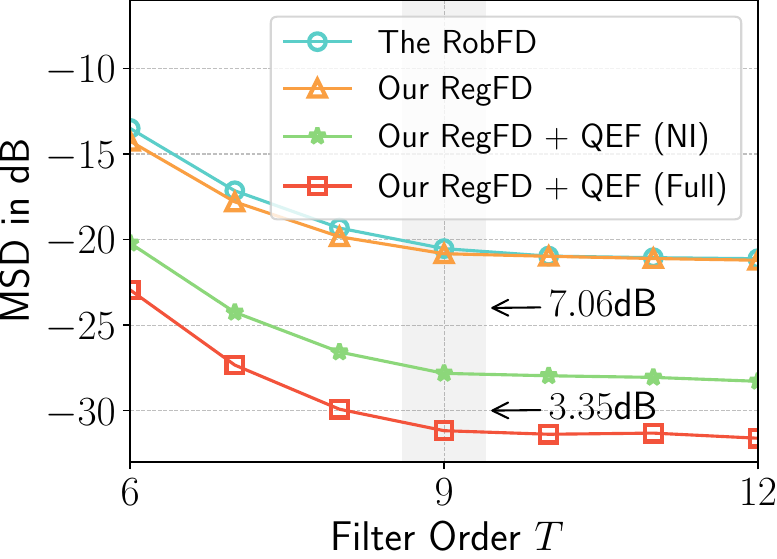}
  \end{minipage}
  \hfill
  \begin{minipage}[b]{0.32\textwidth}
    \centering
    \includegraphics[width=\linewidth]{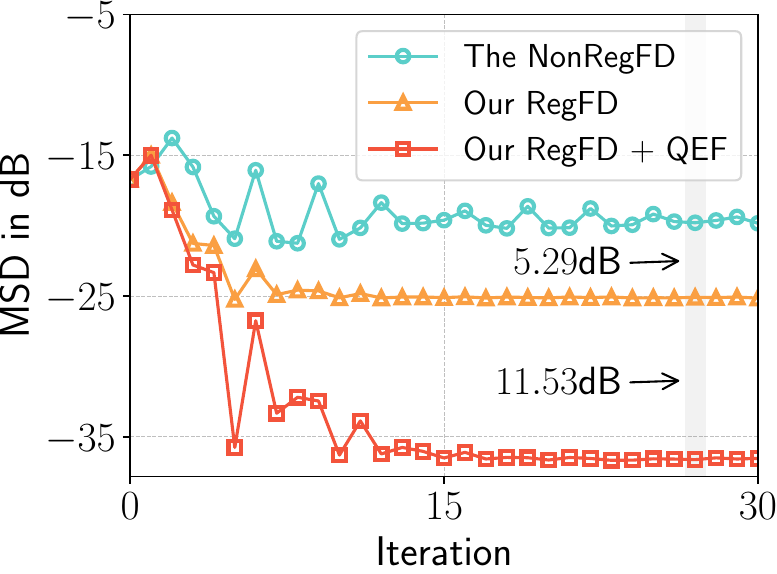}
  \end{minipage}
  \hfill
  \begin{minipage}[b]{0.32\textwidth}
    \centering
    \includegraphics[width=\linewidth]{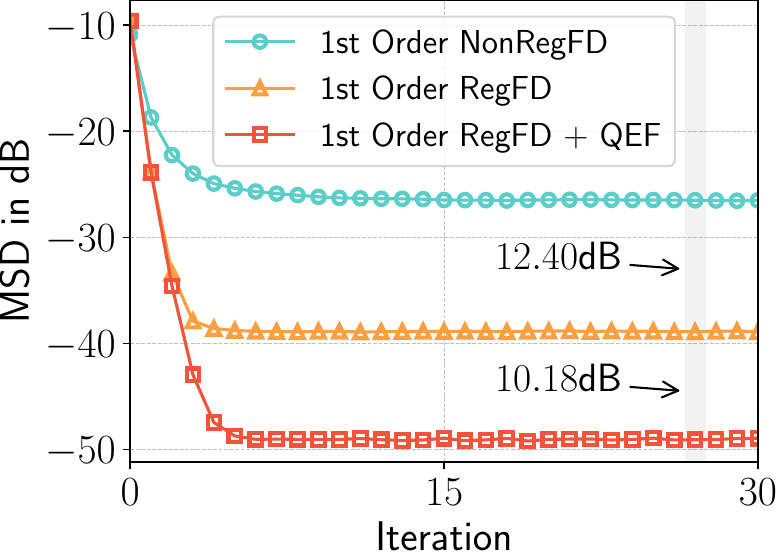}
  \end{minipage}
  \caption{MSD versus filter order/iteration of quantized graph filtering on deterministic processes. {\bf{(Left.)}} FIR with normalized design tolerance $\delta=0.03$ and bit $b=6$; RobFD is the robust design from \cite{Elvinqq}, while QEF is evaluated in two configurations: the Node Invariant (NI) parameterization and the full node-variant optimal solution derived in Theorem \ref{Th1}. {\bf{(Middle.)}} IIR of order $K=3$ with normalized design tolerance $\delta=0.055$, and bit $b=3$; the regularizer weight is set as $\gamma=0.2$ and initial \((\psi_k, \varphi_k)\) comes from \cite{Elvin_iir}. {\bf{(Right.)}} 1st-order IIR with real coefficients: NonRegFD \((\psi_1=-0.974, \varphi_1=-1.414)\) and RegFD \((\psi_1=-0.427, \varphi_1=-1.414)\); this panel corresponds to the real-valued parallel of the Middle panel's IIR graph filter.}
  \label{fig:de}
\end{figure*}

\begin{theorem}[Exact noise reduction for IIR graph filtering with random node-asynchronous updates]
\label{Th5}
    Under Proposition \ref{prop5} and Lemma \ref{lm2}, let $\mathbf{D}^{(k)}=\mathrm{diag}(\alpha_{1}^{(k)}+\beta_{1}^{(k)}\mathrm{i},\dots, \alpha_{N}^{(k)}+\beta_{N}^{(k)}\mathrm{i})$, $\Theta = \{\mathbf{D}^{(1)}\mathbf{1},\dots, \mathbf{D}^{(K)}\mathbf{1}\}\in\mathbb{C}^{N\times K}$, the closed-form solution $\alpha_{i}^{(k)}, \beta_{i}^{(k)}$ and noise reduction $I(\Theta)$ are
    \begin{equation}
\begin{aligned}
\alpha_{i}^{(k)}  & = \frac{\bigg[\Re\Big[\psi_{k}\mathbb{E}\big[\mathbf{P}_{\mathcal{T}_t}\mathbf{W}^{(k)}_{\mathbf{P}}\mathbf{P}_{\mathcal{T}_t}\big]\mathbf{S}\Big]\bigg]_{ii}}{\Big[\mathbb{E}\big[\mathbf{P}_{\mathcal{T}_t}\mathbf{W}^{(k)}_{\mathbf{P}}\mathbf{P}_{\mathcal{T}_t}\big]\Big]_{ii}},\\ \beta_{i}^{(k)}& = \frac{\bigg[\Im\Big[\psi_{k}\mathbb{E}\big[\mathbf{P}_{\mathcal{T}_t}\mathbf{W}^{(k)}_{\mathbf{P}}\mathbf{P}_{\mathcal{T}_t}\big]\mathbf{S}\Big]\bigg]_{ii}}{\Big[\mathbb{E}\big[\mathbf{P}_{\mathcal{T}_t}\mathbf{W}^{(k)}_{\mathbf{P}}\mathbf{P}_{\mathcal{T}_t}\big]\Big]_{ii}},\\
I(\Theta)& =\sum_{k=1}^{K}\frac{(\sigma^{(k)})^2}{N}\sum_{i=1}^{N}\frac{\bigg|\Big[\psi_{k}\mathbb{E}\big[\mathbf{P}_{\mathcal{T}_t}\mathbf{W}^{(k)}_{\mathbf{P}}\mathbf{P}_{\mathcal{T}_t}\big]\mathbf{S}\Big]_{ii}\bigg|^{2}}{\Big[\mathbb{E}\big[\mathbf{P}_{\mathcal{T}_t}\mathbf{W}^{(k)}_{\mathbf{P}}\mathbf{P}_{\mathcal{T}_t}\big]\Big]_{ii}}.
\end{aligned}
\label{t5}
\end{equation}
\end{theorem}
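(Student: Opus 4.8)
The plan is to reuse the optimization template from the proof of Theorem~\ref{Th1} (reused for Theorems~\ref{Th2}--\ref{Th4}), now in the complex, node-selective setting. Starting from the mitigation term $I^{(k)}$ isolated in Proposition~\ref{prop5}, I would abbreviate $\mathbf{M}^{(k)} := \mathbb{E}[\mathbf{P}_{\mathcal{T}_t}\mathbf{W}^{(k)}_{\mathbf{P}}\mathbf{P}_{\mathcal{T}_t}]$ and first establish that it is Hermitian positive definite. This follows because the summand in~(\ref{p5_1}) has the form $\mathbf{A}_\tau^{\mathrm{H}}\mathbf{A}_\tau$ with $\mathbf{A}_\tau = \prod_{t'=1}^{\rightarrow\tau}(\mathbf{I}+\mathbf{P}_{\mathcal{T}_{t'}}(\psi_k\mathbf{S}-\mathbf{I}))$: since $\mathbf{S}$ and $\mathbf{P}_{\mathcal{T}_{t'}}$ are real, $(\cdot)^\top=(\cdot)^{\mathrm{H}}$, and Hermitian transposition reverses the product order, so the leftward product of the conjugated factors is exactly $\mathbf{A}_\tau^{\mathrm{H}}$. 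Each term is therefore Hermitian PSD, the trailing $+\mathbf{I}$ makes the limit PD, and Lemma~\ref{lm2} guarantees convergence. Conjugation by the real diagonal $\mathbf{P}_{\mathcal{T}_t}$ preserves this, so $\mathbf{M}^{(k)}$ is Hermitian PD; in particular its diagonal entries $[\mathbf{M}^{(k)}]_{ii}$ are real and strictly positive, which is what makes the denominators in~(\ref{t5}) well-defined.

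Next I would exploit the diagonal structure of $\mathbf{D}^{(k)}=\mathrm{diag}(\alpha_1^{(k)}+\beta_1^{(k)}\mathrm{i},\dots)$ to turn $I^{(k)}$ into a node-separable sum. Because $|\mathbf{D}|^2$ and $\mathbf{D}^*$ are diagonal, only the diagonals of $\mathbf{M}^{(k)}$ and $\mathbf{M}^{(k)}\mathbf{S}$ enter, and expanding $\Re[\psi_k[\mathbf{M}^{(k)}\mathbf{S}]_{ii}(\alpha_i^{(k)}-\mathrm{i}\beta_i^{(k)})]$ gives
\begin{equation*}
\begin{aligned}
I^{(k)} = \sum_{i=1}^{N}\Big\{&[\mathbf{M}^{(k)}]_{ii}\big((\alpha_i^{(k)})^2+(\beta_i^{(k)})^2\big) \\ &-2\big(a_i\alpha_i^{(k)}+b_i\beta_i^{(k)}\big)\Big\},
\end{aligned}
\end{equation*}
where $a_i := [\Re[\psi_k\mathbf{M}^{(k)}\mathbf{S}]]_{ii}$ and $b_i := [\Im[\psi_k\mathbf{M}^{(k)}\mathbf{S}]]_{ii}$. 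Each bracketed summand is an independent convex quadratic in the pair $(\alpha_i^{(k)},\beta_i^{(k)})$, so the global minimization decouples into $2N$ scalar problems.

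Setting the partial derivatives of each summand to zero yields $\alpha_i^{(k)}=a_i/[\mathbf{M}^{(k)}]_{ii}$ and $\beta_i^{(k)}=b_i/[\mathbf{M}^{(k)}]_{ii}$, precisely the closed-form coefficients in~(\ref{t5}); positivity of $[\mathbf{M}^{(k)}]_{ii}$ certifies these are minima. Substituting back collapses each summand to $-(a_i^2+b_i^2)/[\mathbf{M}^{(k)}]_{ii} = -|[\psi_k\mathbf{M}^{(k)}\mathbf{S}]_{ii}|^2/[\mathbf{M}^{(k)}]_{ii}$. This minimized mitigation is strictly negative, and restoring the $(\sigma^{(k)})^2/N$ prefactor and summing over $k$ (reporting the magnitude, so that $I(\Theta)>0$ measures the suppression) recovers the stated reduction.

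I expect the main obstacle to be structural rather than computational: verifying the Hermitian property of $\mathbf{M}^{(k)}$ so that the denominators are real and positive, and carrying out the real/imaginary split $\psi_k[\mathbf{M}^{(k)}\mathbf{S}]_{ii}=a_i+\mathrm{i}b_i$ that decouples the objective. Once separability is secured, the remainder is the same two-variable least-squares argument as in Theorem~\ref{Th1}; the only genuine novelty over Theorems~\ref{Th2} and~\ref{Th4} is that the relevant Gramian is the node-selective weighting $\mathbb{E}[\mathbf{P}_{\mathcal{T}_t}\mathbf{W}^{(k)}_{\mathbf{P}}\mathbf{P}_{\mathcal{T}_t}]$ in place of $\mathbf{W}^{(k)}_0$ or $\mathbf{W}^{(k)}_{\mathcal{\Phi}}$, whose convergence and fixed-point characterization are already supplied by Lemma~\ref{lm2} and Proposition~\ref{prop5}.
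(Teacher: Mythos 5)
Your proposal is correct and takes essentially the same route as the paper, whose proof of Theorem~\ref{Th5} is precisely the Theorem~\ref{Th1} template you invoke: substitute the diagonal complex $\mathbf{D}^{(k)}$ into the mitigation term of Proposition~\ref{prop5}, decouple it into per-node convex quadratics in $(\alpha_i^{(k)},\beta_i^{(k)})$ via the real/imaginary split of $\psi_k[\mathbb{E}[\mathbf{P}_{\mathcal{T}_t}\mathbf{W}^{(k)}_{\mathbf{P}}\mathbf{P}_{\mathcal{T}_t}]\mathbf{S}]_{ii}$, zero the gradient, and back-substitute to obtain (\ref{t5}). One minor point of rigor: conjugation by the (generally singular) projection $\mathbf{P}_{\mathcal{T}_t}$ only preserves positive semi-definiteness rather than definiteness, but your conclusion that the denominators are real and strictly positive still holds because $\mathbf{W}^{(k)}_{\mathbf{P}}\succeq\mathbf{I}$ and $\mathbf{P}_{\mathcal{T}_t}\mathbf{P}_{\mathcal{T}_t}=\mathbf{P}_{\mathcal{T}_t}$ give $\mathbb{E}[\mathbf{P}_{\mathcal{T}_t}\mathbf{W}^{(k)}_{\mathbf{P}}\mathbf{P}_{\mathcal{T}_t}]\succeq\mathbb{E}[\mathbf{P}_{\mathcal{T}_t}]=p\mathbf{I}\succ\mathbf{0}$.
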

\begin{proof}
    The proof is similar to that of Theorem \ref{Th1}.
\end{proof}
Theorem \ref{Th5} establishes that exact compensation of quantization noise is attainable in IIR graph filtering with random node-asynchronous updates, provided the system adheres to uniform random node selection protocol. Compared to Theorem \ref{Th4}, the closed-form expressions differ because the matrix $\mathbf{W}^{(k)}_{\mathbf{P}}$ is Hermitian rather than the real symmetric matrix as discussed before. Similarly, $\mathbf{W}^{(k)}_{\mathbf{P}}$ can also be computed through the vectorized fixed-point iteration of (\ref{l2}), whose key parameter matrix follows from the expectation
    \begin{equation}
\begin{aligned}
&\mathbb{E}\Big[\big(\mathbf{I}+(\psi_k\mathbf{S}^\top - \mathbf{I})\mathbf{P}_{\mathcal{T}_t}\big)\otimes\big(\mathbf{I}+(\psi^{*}_k\mathbf{S}^\top - \mathbf{I})\mathbf{P}_{\mathcal{T}_t}\big)\Big]  \\
& = \mathbf{I}+p\left((\psi_k\mathbf{S}^\top - \mathbf{I})\otimes\mathbf{I}+\mathbf{I}\otimes(\psi^{*}_k\mathbf{S}^\top - \mathbf{I})\right)\\
& + \left((\psi_k\mathbf{S}^\top - \mathbf{I})\otimes(\psi^{*}_k\mathbf{S}^\top - \mathbf{I})\right)\big(\mathbb{E}[\mathbf{P}_{\mathcal{T}_t}\otimes\mathbf{P}_{\mathcal{T}_t}]\big),
\end{aligned}
\label{vec2}
\end{equation}
using the associativity and mixed-product properties of Kronecker products, the linearity of expectation, and the relation $\mathbb{E}[\mathbf{P}_{\mathcal{T}_t}] = p\mathbf{I}$. The decomposition of (\ref{vec2}) isolates the stochasticity into the term $\mathbb{E}[\mathbf{P}_{\mathcal{T}_t}\otimes\mathbf{P}_{\mathcal{T}_t}]$. Given that $\mathbf{P}_{\mathcal{T}_t}$ is a diagonal matrix populated with independent Bernoulli entries, this expectation simplifies to a constant diagonal matrix with entries taking values of $p$ (autocorrelation) or $p^2$ (cross-correlation), thereby rendering the fixed-point iteration computationally tractable via deterministic matrix operations. By applying the same procedure, one can also derive $\mathbb{E}\big[\mathbf{P}_{\mathcal{T}_t}\mathbf{W}^{(k)}_{\mathbf{P}}\mathbf{P}_{\mathcal{T}_t}\big]$, then obtain (\ref{t5}). It should be noted when $p\to1$, (\ref{p5})-(\ref{t5}) return to (\ref{p2})-(\ref{t2}), indicating synchronous update is a special case of the current consideration. Moreover, the noise gain regularizer, in this case, can share the same expression with (\ref{r6_final}) irrespective of $p$.

\begin{remark}[Consistency of noise gain regularizer (\ref{r6_final}) for IIR graph filter design with random node-asynchronous updates]
\label{Rm7}
\textnormal{Following the symmetry assumption $\mathbf{S}=\mathbf{S}^\top$ from random graph processes, the regularizer $\mathcal{R}(\boldsymbol{\psi})$ is identical to (\ref{r6_final}) for the noise gain defined in (\ref{p5}), based on the inequality
\begin{equation}
\begin{aligned}
\Big\Vert\mathbb{E}[\mathbf{P}_{\mathcal{T}_t}\mathbf{W}^{(k)}_{\mathbf{P}}\mathbf{P}_{\mathcal{T}_t}]\Big\Vert_{2}\leq\frac{1}{1 - |\psi_{k}|^{2}\rho_{\star}^{2}}.
\end{aligned}
\label{r7_final}
\end{equation} 
A detailed proof for (\ref{r7_final}) can be found in Appendix~\ref{app:r7}. }
\end{remark}

\section{Numerical Experiments}
\label{s4}
This section presents numerical simulations validating our proposed framework and theoretical analyses. In the first experiment, we examine a quantized low-pass graph filtering task on deterministic and random processes, consistent with the discussion in previous sections. In the second experiment, we extend the framework to a quantized regression task over networks, illustrating the connection and applicability of our QEF approach to decentralized optimization. All experimental setups are implemented in a Python environment compatible with the GSPbox \cite{GSPBOX} and NetworkX \cite{github} libraries.
\subsection{Low-Pass Graph Filtering}
We consider a low-pass graph filtering task on the David Sensor Network \cite{GSPBOX}, which has \(N = 64\) nodes and \(E = 236\) edges. Following the benchmark in \cite{Elvinqq}, both FIR and IIR graph filters are designed to approximate an ideal low-pass filter with frequency response \(h(\lambda) = 1\) for \(\lambda < \lambda_c\) and \(0\) otherwise. The graph shift operator is chosen as \(\mathbf{S} = \lambda_{\text{max}}^{-1} \mathbf{L}\) and the cut-off frequency is set to \(\lambda_c = 0.5\). A randomized dither quantizer ($[-1, 1]$) is utilized for each transmission, ensuring the quantization noise satisfies Assumption \ref{Ap1}~\cite{Gray1,Gray2}. The input $\mathbf{x}$ has a uniform spectrum and is normalized to prevent overflow.
To evaluate filtering performance under quantization, we use the mean square deviation (MSD):
\begin{equation}
\text{MSD} = \frac{1}{N}\mathbb{E}\Big[\big\Vert \mathbf{y}_{q} - \mathbf{y}\big\Vert^{2}_{2}\Big],~~\textnormal{where} ~~ \mathbf{y} = 
\begin{cases} 
\mathbf{y}, & \text{(unbiased)} \\ 
\overline{\mathbf{y}}, & \text{(biased)} 
\end{cases}.
\end{equation}
Here, \(\mathbf{y}_q\) is the quantization noise injected output, while \(\mathbf{y}\) is the output without quantization, as defined in Section~\ref{s2}. In this context, the presence or absence of bias indicates whether the MSD accounts for graph randomness, which introduces variance \(Var[\mathbf{y}]\) unrelated to quantization.

\subsubsection{Deterministic Processes} For the filter design specifications, we uniformly sample $10^3$ frequency points within $\lambda \in [0,1]$ to characterize the ideal low-pass filter. We employed the square error criterion ($\ell_2$ error) for approximation, with $\delta$ defined as the normalized design tolerance, representing the average $\ell_2$ error per frequency point. 

Our regularized filter design (RegFD) is constructed as follows: \textit{a)} In the FIR case, we minimize the regularizer described in Remark \ref{Rm2}, subject to an approximation error constraint bounded by a given tolerance $\delta$. We evaluate this approach across filter orders $T=6,7,\dots,12$. For benchmarking, we compare results with the robust filter design method (RobFD) from \cite{Elvinqq}. \textit{b)} In the IIR case,  we minimize an unconstrained objective augmented with the weighted regularizer from Remark \ref{Rm3}. The regularization weight $\gamma$ is fine-tuned to ensure the approximation error does not exceed $\delta$. We demonstrate this methodology for $K = 3$ parallel IIR filters. To address the nonconvex optimization landscape, we initialize the coefficients $(\psi_k, \varphi_k)$ using precomputed values from \cite{Elvin_iir} and refine them via sequential quadratic programming (SQP). To ensure a stable design, we further impose the constraints $|\psi_k|^2 \leq 0.95$, and $|\varphi_k|^2 \leq 2$. With $\gamma = 0.2$, RegFD yields 
\(
\{(\psi_k, \varphi_k) \mid (0, 1),\, (-0.427, -1.414),\, (0.248 - 0.795\mathrm{i},0.782 - 0.923\mathrm{i},),\, (0.248 + 0.795\mathrm{i}, 0.782 + 0.923\mathrm{i} )\}.
\)
For comparison, the non-regularized design (NonRegFD) results in coefficients
\(
\{(\psi_k, \varphi_k) \mid (0, 1),\, (-0.974, -1.414),\, (0.342 - 0.913\mathrm{i}, 0.748 - 0.893\mathrm{i}),\, (0.342 + 0.913\mathrm{i}, 0.748 + 0.893\mathrm{i})\}.
\)
Finally, Theorem \ref{Th1} and \ref{Th2} are applied to derive the optimal QEF coefficients.

Fig. \ref{fig:de} demonstrates the filtering performances under quantization, with each evaluated through $10^3$ Monte Carlo trials. As illustrated in Fig.\ref{fig:de} left, our RegFD consistently surpasses the RobFD in all FIR design cases \cite{Elvinqq}. Notably, node-invariant QEF yields a peak gain of $7.06$dB at $T=9$, while the full node-variant solution adds another $3.35$dB. This illustrates the trade-off between the storage efficiency and the maximal noise suppression. Turning to the IIR case in Fig.\ref{fig:de} middle, RegFD exhibits superior steady-state performance compared to NonRegFD, delivering a maximum improvement of $5.29$dB across all $K=3$ parallels. When integrated with QEF, the total improvement increases to $16.82$dB, with a maximum single contribution of $11.53$dB arising from QEF's ability to suppress residual quantization noise. Finally, Fig. \ref{fig:de} right isolates the 1st-order real-valued parallel component from the $K=3$ IIR filter; here, RegFD outperforms NonRegFD by $12.40$dB. This finding directly corroborates the analysis in Remark \ref{Rm3}, which highlights that poles located near the unit circle can cause excessive noise amplification. Even in this isolated scenario, QEF maintains its effectiveness, providing an additional $10.18$dB improvement.
\begin{figure*}[t] % Use "figure*" for double-column width
  \centering
  \begin{minipage}[b]{0.32\textwidth} % Adjust width for 3 subfigures
    \centering
    \includegraphics[width=\linewidth]{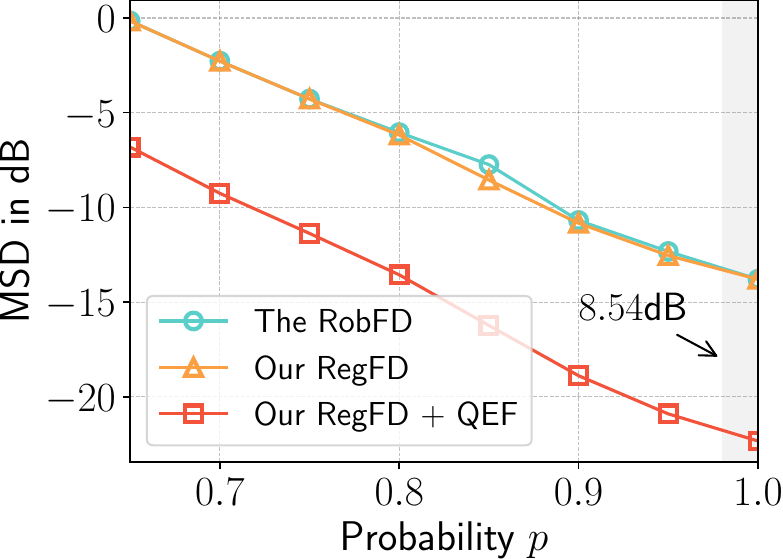}
  \end{minipage}
  \hfill
  \begin{minipage}[b]{0.32\textwidth}
    \centering
    \includegraphics[width=\linewidth]{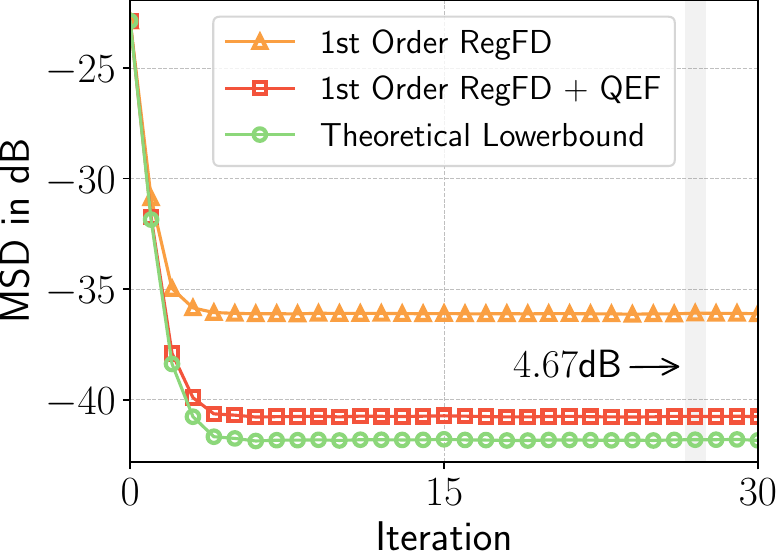}
  \end{minipage}
  \hfill
  \begin{minipage}[b]{0.32\textwidth}
    \centering
    \includegraphics[width=\linewidth]{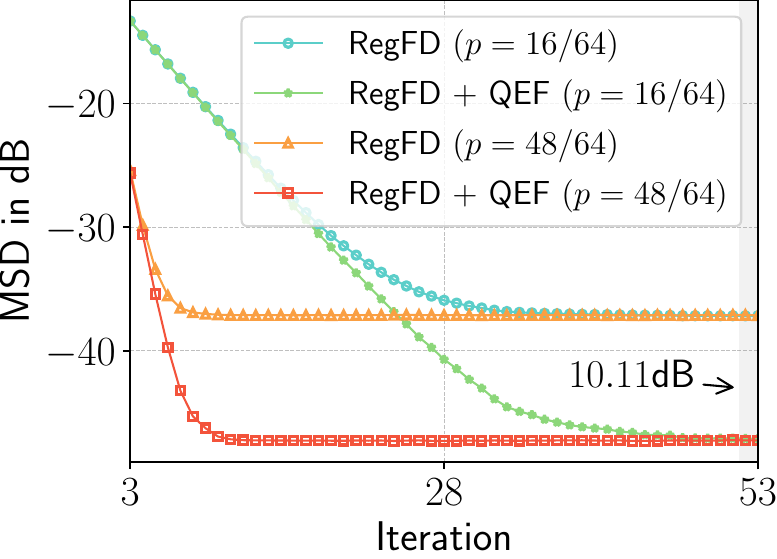}
  \end{minipage}
  \caption{MSD versus probability/iteration of quantized graph filtering on random processes. {\bf{(Left.)}} Unbiased MSD on FIR over random graphs; $\delta=0.03$, $b=6$ and $T=6$; RobFD is the robust design from \cite{Elvinqq}. {\bf{(Middle.)}} Biased MSD on 1st-order IIR \((\psi_1=-0.520, \varphi_1=-1.414)\)  over random graphs; $p=0.95$ and $b=3$. {\bf{(Right.)}} Unbiased MSD on 1st-order IIR \((\psi_1=-0.520, \varphi_1=-1.414)\) with random node-asynchronous
updates; $b=3$.}
  \label{fig:random}
\end{figure*}
\subsubsection{Random Processes} The same filter design specifications are adopted in this scenario. To additionally align with the mean filtering behavior governed by the expected graph shift operator $\overline{\mathbf{S}}=p\mathbf{S}$, we scale the coefficients $\phi_t\to p^t\phi_t$ for FIR filtering over random graphs, leading to a weighted $\ell_2$ design. Meanwhile, we obtain its RegFD by employing the regularizer from Remark \ref{Rm5}, and investigate the case when filter order is set to $T=6$. For IIR filtering over random graphs and with random node-asynchronous updates, we retain $\psi_k$ and use the consistent regularizer from Remark \ref{Rm6} and \ref{Rm7} to obtain their RegFD. With $\gamma=1.5$ and using the same nonconvex optimization strategy in deterministic IIR designs, their RegFD yields 
\(
\{(\psi_k, \varphi_k) \mid (0, 1),\, (-0.520, -1.414),\, (0.221-0.761\mathrm{i}, 0.784-1.044\mathrm{i}),\, (0.221+0.761\mathrm{i}, 0.784+1.044\mathrm{i})\}.
\) For illustration, we shall consider its 1st-order real-valued parallel \((\psi_1=-0.520, \varphi_1=-1.414)\) for the subsequent discussions. Finally, Theorem \ref{Th3}, \ref{Th4} and \ref{Th5} are applied to derive the optimal QEF coefficients for the above filters.

Fig.~\ref{fig:random} demonstrates the quantized filtering performance on random processes, evaluated via $10^6$ Monte Carlo trials accounting for quantization, graph randomness, and asynchrony. The results in Fig.~\ref{fig:random} left show that RegFD with QEF achieves a lower unbiased MSD compared to RobFD, with a maximum improvement of $8.54$dB in the FIR cases. Probability $p$ increasing reduces overall MSD, as power scaling $p^t\phi_t$ diminishes nonregularity on $\phi_t$, thereby mitigating quantization error amplification. When coefficients remain fixed independent of $p$, the quantization error energy exhibits a negative correlation with graph randomness as shown in \cite{Zhengconf}, suggesting that sparser connections inhibit error accumulation during combination. The results in Fig.~\ref{fig:random} middle highlight that even when considering bias from graph randomness, the proposed method maintains a lower deviation than the non-feedback model in the 1st-order IIR, approaching the theoretical lower bound \(Var[\mathbf{y}]\) in the absence of quantization, with a $4.67$dB improvement. The results in Fig.~\ref{fig:random} right underscore QEF's robustness to arbitrary node selection probabilities, where we exemplify two cases $p=16/64, 48/64$, meaning the expected nodes to update are 16 and 48 out of a total of 64 nodes. It shows that the node update probability $p$ primarily affects convergence rates, and QEF results in a maximum steady-state MSD improvement of $10.11$dB for these two cases.

% \subsection{Low-Pass Filtering over Random Graphs and with random Node-Asynchronous Updates}

\subsection{Application to Quantized Regression over Networks}
\label{s5b}
Here, we extend the proposed framework to the domain of decentralized optimization, effectively generalizing the graph filter to solving data-dependent stochastic optimization problems. Consider a regression task over a network \cite{Multitaskg,Part1,Part2}
\begin{equation}
\min_{\mathbf{x}} \sum_{i=1}^N f_i(\mathbf{x}_i) + \eta \, \mathbf{x}^\top \mathcal{L} \mathbf{x},
\label{regression}
\end{equation}
where \(\mathbf{x} = [\mathbf{x}_1^\top, \dots, \mathbf{x}_N^\top]^\top \in \mathbb{R}^{MN}\) is the augmented vector concatenating all local variables, each \(f_i(\mathbf{x}_i) = \frac{1}{2}\|\mathbf{A}_i \mathbf{x}_i - \mathbf{b}_i\|_2^2\) is a local LS loss function with \(\mathbf{A}_i \in \mathbb{R}^{L \times M}\), \(\mathbf{x}_i\in\mathbb{R}^M\) and \(\mathbf{b}_i \in \mathbb{R}^L\), and \(\eta\,\mathbf{x}^\top \mathcal{L} \mathbf{x} \,(\eta > 0)\) is a weighted regularizer with \(\mathcal{L}=\mathbf{L}\otimes\mathbf{I}\in \mathbb{R}^{MN\times MN}\) representing the augmented graph Laplacian associated with the network, thus promoting smoothness. One can observe that if the dimension $M\to1$, (\ref{regression}) returns to the graph denoising problem, and its exact solution is given by a 1st-order IIR graph filter. When $M>1$, (\ref{regression}) resorts to a decentralized optimization solution and an \textit{Adapt-then-Combine} (ATC) diffusion strategy can be applied
\begin{equation}
\begin{aligned}
\xi_{i,t} &= \mathbf{x}_{i,t-1} - \mu\nabla f_{i}(\mathbf{x}_{i,t-1})\\
\mathbf{x}_{i,t} &= \sum_{j\in\mathcal{N}_{i}}w_{ij}\xi_{j,t} 
\end{aligned}
\label{ATC}
\end{equation}
where $\mu$ is the step size, $w_{ij}$ is the weight satisfying $w_{ij}=[\mathbf{I}-\mu\eta\mathbf{L}]_{ij}$. (\ref{ATC}) can be understood as the combination of gradient update and graph filtering. The matrix $\mathbf{I}-\mu\eta\mathbf{L}$ specifically acts as a graph filter, where iterated multiplications simulate an IIR. To reduce communication overhead in this filtering, the quantized (\ref{ATC}) endorses $\mathbf{x}_{i,t}=\sum_{j\in\mathcal{N}_{i}}w_{ij}\mathcal{Q}[\xi_{i,t}]$ or $\mathbf{x}_{i,t}=\sum_{j\in\mathcal{N}_{i}}w_{ij}(\hat{\xi}_{i,t-1}+\mathcal{Q}[\xi_{i,t}-\hat{\xi}_{i,t-1}])$ (full state or differential quantization) for efficiency. Under Assumption \ref{Ap1}, where the quantizer is modeled as additive noise, the intermediate estimator $\hat{\xi}_{i,t-1}$ in the differential scheme could cancel out. Consequently, the two methods are analytically equivalent, differing only in the statistical properties of the noise due to the different quantization intervals required for the difference signal versus the full state. Moreover, as the gradient update induced by the LS loss follows a linear recursion, i.e., $\xi_{i,t} = (\mathbf{I} - \mu\mathbf{A}^\top_{i}\mathbf{A}_{i})\mathbf{x}_{i,t-1}+\mu\mathbf{A}^\top_{i}\mathbf{b}_{i}$, we can drive the noise propagation model and noise gain for (\ref{ATC}) similar to previous sections. By denoting $\mathcal{A}=\mathbf{I}-\mu\eta\mathcal{L}$ and $\mathcal{B} =\mathrm{blkdiag}(\mathbf{I} - \mu\mathbf{A}^\top_{1}\mathbf{A}_{1},\dots,\mathbf{I} - \mu\mathbf{A}^\top_{N}\mathbf{A}_{N})$ and using similar transforms from (\ref{qdfir})-(\ref{eqdfir}), we have the following noise propagation model and observability Gramian
\begin{equation}
\begin{aligned}
\boldsymbol{\widetilde{\xi}}_{t} &= \mathcal{B}\mathcal{A}\boldsymbol{\widetilde{\xi}}_{t-1} + \mathcal{B}\mathcal{A}\boldsymbol{n}_{t-1}\\
\mathbf{W}_{\xi} &= \mathcal{A}^\top\mathcal{B}^\top\mathbf{W}_{\xi} \mathcal{B}\mathcal{A} + \mathbf{I}
\end{aligned}
\label{Noiseppp}
\end{equation}
where $\boldsymbol{\widetilde{\xi}}_{t} = [\widetilde{\xi}_{1,t}^\top,\dots,\widetilde{\xi}_{N,t}^\top]^\top$ is the augmented error state vector and $\boldsymbol{n}_{t-1}=[\mathbf{n}_{1,t-1},\dots,\mathbf{n}_{N,t-1}]$ is augmented injected noise with each $i$ node exhibiting $\widetilde{\xi}_{i,t}^\top\,,\mathbf{n}_{i,t-1}$. From (\ref{Noiseppp}), it manifests the identical dynamic as deterministic IIR graph filtering (\ref{efdiir})-(\ref{t2}), therefore we can introduce a block-diagonal error feedback matrix $\mathcal{D}=\mathrm{blkdiag}(\mathbf{D}_{1},\dots,\mathbf{D}_{N})$ with each node feedback $\mathbf{D}_i\in\mathbb{R}^{M\times M}$ to reduce the quantization noise. To align with the conventions of employing scalar weight error feedback for individual nodes (e.g., damping factors or normalization coefficients), we posit $\mathbf{D}_i=\alpha_{i}\mathbf{I}$ as a simplified parameterization to demonstrate methodological advantages. Then, a Corollary derived from Theorem \ref{Th2} (see Appendix \ref{Appb1}) can be directly applied to reconstruct the complete $\mathcal{D}$. Then, we can apply QEF to (\ref{ATC}) as Appendix \ref{Appb2} shows.

For numerical verification, we consider a network of $N=50$ nodes following the benchmark network topology described in~\cite{Multitaskg,Part1,Part2, github}. Each node $i$ observes data pairs $\{\mathbf{A}_i, \mathbf{b}_i\}$ that adhere to a $L=40,\,M=4$ linear regression model $\mathbf{b}_i=\mathbf{A}_i \mathbf{x}^\star_i+\mathbf{v}_{i}$, where $\mathbf{x}^\star_i$ is the true vector to be regressed, $\mathbf{A}_i$ assembles i.i.d. Gaussian entries satisfing $[\mathbf{A}_i]_{lm}\sim\mathcal{N}(0,\sigma^2_{\mathbf{A}_{i}})$, and $\mathbf{v}_{i}\sim\mathcal{N}(\mathbf{0},\sigma^2_{\mathbf{v}_{i}}\mathbf{I})$ is the additive Gaussian noise vector. The regression vector $\mathbf{x}^\star_i$ is pre-smoothed by filtering out high graph frequency components. The matrix and noise parameters are i.i.d. sampled from uniform distributions such as $\sigma_{\mathbf{A}_i}\sim \mathcal{U}[0,1]$ and $\sigma_{\mathbf{v}_i}\sim \mathcal{U}[0,0.1]$ for all nodes $i \in \{1,\ldots,N\}$. The ATC diffusion employs $\eta=10$ and $\mu=0.01$. In terms of quantization, we adapt Assumption \ref{Ap1} to have a uniform probabilistic quantizer with the stepsize $\Delta=10\mu$ and employ a variable coding scheme, consistent with \cite{Rouladiff,def}. The MSD metric is updated by $\sum^N_{i=1}\mathbb{E}[\Vert\mathbf{x}^\star_i-\mathbf{x}_{i,t}\Vert^2_{2}]/N$ for evaluation. Although non-uniform quantization schemes (e.g., log-scale) combined with differential quantization provide known benefits for iterative algorithms \cite{Rouladiff,doostmohammadian2024logarithmically,doostmohammadian2025log,michelusi2022finite}, they are inconsistent with the arithmetic model established in Assumption \ref{Ap1}. Consequently, we reserve the extension of QEF to non-uniform quantization regimes for future work.

\begin{figure}[!tb]
  \centering
  \includegraphics[width=0.8\linewidth]{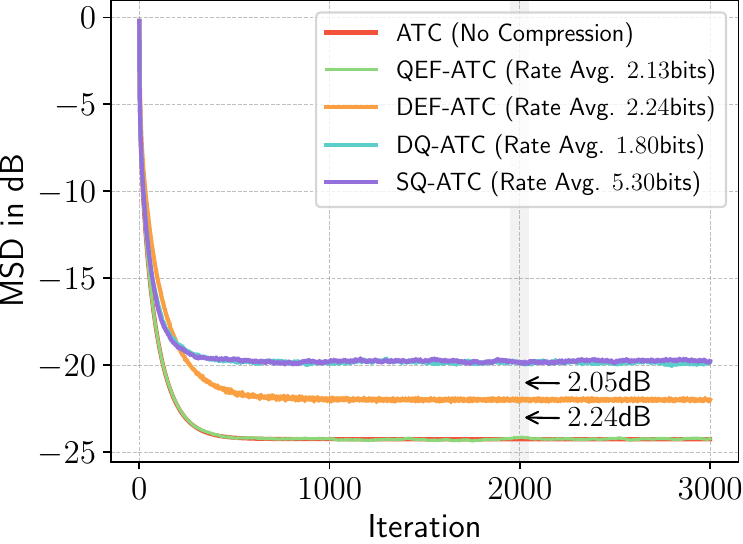}
  \caption{MSD versus iteration of quantized regression over a network, comparing QEF-ATC (the proposed QEF with differential quantization), DEF-ATC\cite{def} (differential error feedback), DQ-ATC \cite{Rouladiff} (differential quantization), and SQ-ATC (full state quantization). The quantizers employ uniform probabilistic quantization with a variable rate coding scheme. The Rate metric (average number (Avg.) of bits per node per component) follows the definition in~\cite{Rouladiff}.}
  \label{fig:layout}
\end{figure}

Fig. \ref{fig:layout} shows the optimization performance under quantization, evaluated through 50 Monte Carlo trials. The results demonstrate that QEF-ATC is the most effective strategy among those tested. By integrating QEF with differential quantization, QEF-ATC almost eliminates the effects of quantization, closely matching the performance of the uncompressed ATC baseline at a modest rate of $ 2.13$ bits. This improvement can be further understood by the significant reduction in noise gain, which stretches from $10^3$ to $10^{-1}$. Moreover, QEF-ATC achieves a $2.24$dB MSD improvement over DEF-ATC \cite{def} (rate: $2.24$ bits, damping factor: $0.6$), suggesting DEF-ATC's hyperparameter tuning for error feedback may not always optimize the performance. While DQ-ATC \cite{Rouladiff} attains the lowest rate ($1.80$ bits), its uncompensated error propagation results in relatively higher MSD. SQ-ATC, on the other hand, suffers from both the highest MSD and the largest rate ($5.30$ bits), highlighting the inefficiency of directly quantizing the full state without differential or error feedback mechanisms.

\section{Conclusion}
In this paper, we proposed an error feedback framework to mitigate quantization noise in distributed graph filtering, leveraging principles from state-space digital filters to systematically quantify and compensate for errors. By deriving closed-form optimal feedback coefficients, the framework achieves precise noise reduction across deterministic, stochastic (random graphs), and node-asynchronous scenarios. Theoretical analysis confirmed significant noise suppression, while integration with decentralized systems demonstrated reduced error floors compared to conventional quantization methods. Numerical experiments validated superior accuracy and robustness in communication-constrained settings. Future work will extend the framework to more general decentralized optimization and large-scale networks. This approach bridges graph signal processing and quantization theory, offering a practical solution for emerging distributed systems where precision and efficiency are critical.

\IEEEtriggeratref{61} % adjust this number
% \bibliographystyle{IEEEtran}
% \bibliography{ref}

\vfill

\appendices
\onecolumn
\section{Mathematical Proofs}
In this section, we provide the proofs of our main technical results. These include Theorem \ref{Th1} from Section \ref{s3a}; Proposition \ref{prop2} and Remark \ref{Rm3} from Section \ref{s3b}; Proposition \ref{prop3} and Remark \ref{Rm5} from Section \ref{s4a}; Proposition \ref{prop4} and Lemma \ref{lm1} from Section \ref{s4b}; and Lemma \ref{lm2} and Remark \ref{Rm7} from Section \ref{s4c}.
% \titleformat{\section}
%   {\normalfont\Large\bfseries}{Appendix \thesection:}{1em}{}
\subsection{Proof of Theorem 1}
\label{app:t1}
With $\mathbf{D}_{t-1}=\mathrm{diag}(\alpha_{1, t-1},\dots, \alpha_{N, t-1})$, we have the mitigation term
\begin{equation}
\begin{aligned}
    I_{t-1}=-2\sum_{i=1}^{N}\Big[\mathbf{G}_{\mathbf{n}_{t-1}}(\mathbf{S}) \mathbf{S}\Big]_{ii} \alpha_{i, t-1}+ \sum_{i=1}^{N}\Big[\mathbf{G}_{\mathbf{n}_{t-1}}(\mathbf{S})\Big]_{ii}\alpha_{i, t-1}^{2}.  
\end{aligned}
\label{deqn_ex11}
\end{equation}
Let the gradient of $I_{t-1}$ be zero, i.e.,
\begin{equation}
\begin{aligned}
\frac{\partial I_{t-1}}{\partial \alpha_{i, t-1}} =\Big[\mathbf{G}_{\mathbf{n}_{t-1}}(\mathbf{S})\Big]_{ii}\alpha_{i, t-1}-\Big[\mathbf{G}_{\mathbf{n}_{t-1}}(\mathbf{S}) \mathbf{S}\Big]_{ii} = 0,
\end{aligned}
\label{deqn_ex12}
\end{equation}
we have
\begin{equation}
\begin{aligned}
\alpha_{i,t-1} = \frac{\Big [\mathbf{G}_{\mathbf{n}_{t-1}}(\mathbf{S})\mathbf{S}\Big ]_{ii}}{\Big [ \mathbf{G}_{\mathbf{n}_{t-1}}(\mathbf{S})\Big ]_{ii}}, I_{t-1} =\sum_{i=1}^{N}{\frac{\Big [\mathbf{G}_{\mathbf{n}_{t-1}}(\mathbf{S})\mathbf{S}\Big ]^{2}_{ii}}{\Big [ \mathbf{G}_{\mathbf{n}_{t-1}}(\mathbf{S})\Big ]_{ii}}}.
\end{aligned}
\label{deqn_ex13}
\end{equation}
The proof is completed.

\subsection{Proof of Proposition 2}
\label{app:p2}
  By considering all prior noises $\mathbf{n}^{(k)}_{\tau}(\tau<t)$, we can explicitly express 
  \begin{equation}
\begin{aligned}
\widetilde{\mathbf{w}}^{(k)}_{t} = \sum_{\tau=0}^{t-1}(\psi_{k}\mathbf{S})^{t-\tau-1}(\psi_{k}\mathbf{S}-\mathbf{D}^{(k)})\mathbf{n}^{(k)}_{\tau}
  \end{aligned}
\label{p211}
\end{equation}
as $\widetilde{\mathbf{w}}^{(k)}_{0}=\mathbf{0}$. Then, it can be derived 
  \begin{equation}
\begin{aligned}
\mathbb{E} [\widetilde{\mathbf{w}}^{(k)}_{t} (\widetilde{\mathbf{w}}^{(k)}_{t})^\mathrm{H} ] = \mathbb{E}\bigg[\sum_{\tau_1=0}^{t-1}\sum_{\tau_2=0}^{t-1}(\psi_{k}\mathbf{S})^{t-\tau_1-1}(\psi_{k}\mathbf{S}-\mathbf{D}^{(k)})\mathbf{n}^{(k)}_{\tau_1}(\mathbf{n}^{(k)}_{\tau_2})^\mathrm{H} (\psi_{k}\mathbf{S}-\mathbf{D}^{(k)})^\mathrm{H}\big((\psi_k\mathbf{S})^\mathrm{H}\big)^{t-\tau_2-1}\bigg].
  \end{aligned}
\label{p212}
\end{equation}
According to the noise of the i.i.d. $\mathbb{E}[\mathbf{n}^{(k)}_{\tau_1}(\mathbf{n}^{(k)}_{\tau_2})^\mathrm{H}] = (\sigma^{(k)})^2 \mathbf{I}$ if $\tau_1=\tau_2$ and $\mathbf{0}$ if $\tau_1\neq\tau_2$, the trace cyclic property \(\mathrm{tr}(\mathbf{XY}) = \mathrm{tr}(\mathbf{YX})\) and the Hermitian transpose \((\psi_k\mathbf{S})^\mathrm{H}=\psi^{*}_k\mathbf{S}^\top\), it shows
  \begin{equation}
\begin{aligned}
\mathrm{tr}\bigg(\mathbb{E}[\widetilde{\mathbf{w}}^{(k)}_{t}(\widetilde{\mathbf{w}}^{(k)}_{t})^\mathrm{H}]\bigg) = (\sigma^{(k)})^2\mathrm{tr}\bigg(\sum_{\tau=0}^{t-1}(\psi_{k}\mathbf{S}-\mathbf{D}^{(k)})^\mathrm{H}(\psi^{*}_{k}\mathbf{S}^\top)^{t-\tau-1}(\psi_{k}\mathbf{S})^{t-\tau-1}(\psi_{k}\mathbf{S}-\mathbf{D}^{(k)})\bigg).
  \end{aligned}
\label{p213}
\end{equation}
 As $t\to \infty$, one finds $\sum_{\tau=0}^{t-1}(\psi^{*}_{k}\mathbf{S}^\top)^{t-\tau-1}(\psi_{k}\mathbf{S})^{t-\tau-1}\to\sum_{\tau=0}^{\infty}(\psi^{*}_{k}\mathbf{S}^\top)^{\tau}(\psi_{k}\mathbf{S})^{\tau}$. Let $\mathbf{W}^{(k)}_0=\sum_{\tau=0}^{\infty}(\psi^{*}_{k}\mathbf{S}^\top)^{\tau}(\psi_{k}\mathbf{S})^{\tau}$, then $\mathbf{W}^{(k)}_0 $ satisfies Lyapunov equation $\mathbf{W}^{(k)}_0 = \vert\psi_{k}\vert^{2}\mathbf{S}^\top\mathbf{W}^{(k)}_0 \mathbf{S}+\mathbf{I}$, and the limit of (\ref{p213}) can be written as 
   \begin{equation}
\begin{aligned}
\lim_{t\to \infty}\mathrm{tr}\bigg(\mathbb{E}[\widetilde{\mathbf{w}}^{(k)}_{t}(\widetilde{\mathbf{w}}^{(k)}_{t})^\mathrm{H}]\bigg) \to (\sigma^{(k)})^2\mathrm{tr}\bigg((\psi_{k}\mathbf{S}-\mathbf{D}^{(k)})^\mathrm{H}\mathbf{W}^{(k)}_0(\psi_{k}\mathbf{S}-\mathbf{D}^{(k)})\bigg).
  \end{aligned}
\label{p214}
\end{equation}
By expanding R.H.S of (\ref{p214}) with the trace cyclic property \(\mathrm{tr}(\mathbf{XY}) = \mathrm{tr}(\mathbf{YX})\) and the trace-Hermitian property $\mathrm{tr}(\mathbf{X}^\mathrm{H})=\mathrm{tr}(\mathbf{X})^{*}$, we get 
   \begin{equation}
\begin{aligned}
\mathrm{tr}\bigg((\psi_{k}\mathbf{S}-\mathbf{D}^{(k)})^\mathrm{H}\mathbf{W}^{(k)}_0(\psi_{k}\mathbf{S}-\mathbf{D}^{(k)})\bigg)=\mathrm{tr}(\vert\psi_{k}\vert^{2}\mathbf{W}^{(k)}_0 \mathbf{S}\mathbf{S}^\top)+\mathrm{tr}(\mathbf{W}^{(k)}_0\vert\mathbf{D}^{(k)}\vert^{2})-\psi_{k}\mathrm{tr}(\mathbf{W}^{(k)}_0\mathbf{S}\mathbf{D}^{(k)*})-\psi_{k}^{*}\mathrm{tr}(\mathbf{W}^{(k)}_0\mathbf{S}\mathbf{D}^{(k)*})^{*}.
  \end{aligned}
\label{p215}
\end{equation}
Finally, by using conjugate pairing $z_1z_2+z^{*}_1z^{*}_2=2\Re[z_1z_2]$ with $z_1=\psi_{k},z_2=\mathrm{tr}(\mathbf{W}^{(k)}_0\mathbf{S}\mathbf{D}^{(k)*})$ , (\ref{p2}) then follows. The proof is completed.

\subsection{Proof of Remark 3}
\label{app:r3}
With the vectorization property $\mathrm{vec}(\mathbf{X}+\mathbf{Y}) = \mathrm{vec}(\mathbf{X}) +\mathrm{vec}(\mathbf{Y})$ and $\mathrm{vec}(\mathbf{XYZ})=(\mathbf{Z}^\top \otimes\mathbf{X})\mathrm{vec}(\mathbf{Y})$, we can firstly derive the solution of Lyapunov equation from (\ref{p2}) as
\begin{equation}
\begin{aligned}
\mathrm{vec}(\mathbf{W}^{(k)}_0) 
& = |\psi_{k}|^{2}(\mathbf{S}^\top \otimes \mathbf{S}^\top)\mathrm{vec}(\mathbf{W}^{(k)}_0) + \vec{\mathbf{1}}\\
&=(\mathbf{I}-|\psi_{k}|^{2}\mathbf{S}^\top \otimes \mathbf{S}^\top)^{-1}\vec{\mathbf{1}},
\end{aligned}
\label{reg_iir}
\end{equation}
where $\vec{\mathbf{1}}=[\mathbf{e}_1^\top,\dots,\mathbf{e}_N^\top]^\top$. Then, by using the trace property \(\mathrm{tr}(\mathbf{XY}) = \mathrm{tr}(\mathbf{YX})\), Lyapunov equation $|\psi_{k}|^{2}\mathbf{S}^\top\mathbf{W}^{(k)}_0 \mathbf{S} = \mathbf{W}^{(k)}_0-\mathbf{I}$, the noise gain from (\ref{p2}) can be rewritten as 
\begin{equation}
\begin{aligned}
G^{(k)} = \mathrm{tr}\big(\mathbf{I}\times(\mathbf{W}^{(k)}_0-\mathbf{I})\big), 
\end{aligned}
\label{h_iir}
\end{equation}
which can be further simplified as
\begin{equation}
\begin{aligned}
G^{(k)} & =\mathrm{vec}(\mathbf{I})^\top\mathrm{vec}(\mathbf{W}^{(k)}_0-\mathbf{I})\\
& =\vec{\mathbf{1}}^\top\big((\mathbf{I}-|\psi_{k}|^{2}\mathbf{S}^\top \otimes \mathbf{S}^\top)^{-1}-\mathbf{I}\big)\vec{\mathbf{1}}
\end{aligned}
\label{rreg_iir}
\end{equation}
by using $\mathrm{tr}(\mathbf{XY})=\mathrm{vec}(\mathbf{X}^\top)^\top\mathrm{vec}(\mathbf{Y})$ and substituting the solution of $\mathrm{vec}(\mathbf{W}^{(k)}_0) $ from (\ref{reg_iir}). By aggregating $G^{(k)}$ and omitting the constant, the regularizer $\mathcal{R}(\boldsymbol{\psi})$ is derived as
\begin{equation}
\begin{aligned}
&\mathcal{R}(\boldsymbol{\psi})=\vec{\mathbf{1}}^\top\left(\sum^{K}_{k=1}(\mathbf{I}-|\psi_{k}|^{2}\mathbf{S}^\top \otimes \mathbf{S}^\top)^{-1}\right)\vec{\mathbf{1}}.
\end{aligned}
\label{rrrerrg_iir}
\end{equation}
By the mixed-product property $(\mathbf{X}\otimes\mathbf{Y})(\mathbf{Z}\otimes\mathbf{W})=(\mathbf{X}\mathbf{Z}\otimes\mathbf{Y}\mathbf{W})$ and $\mathbf{S} =\mathbf{U}\mathbf{\Lambda}\mathbf{U}^{-1}$, it can be obtained $\mathbf{S}^\top \otimes \mathbf{S}^\top = \left( \mathbf{U}^{-1} \otimes \mathbf{U}^{-1} \right)^\top(\mathbf{\Lambda} \otimes \mathbf{\Lambda})\left( \mathbf{U} \otimes \mathbf{U} \right)^\top$. Then (\ref{rrrerrg_iir}) is derived as
\begin{equation}
\begin{aligned}
\mathcal{R}(\boldsymbol{\psi})
&=\vec{\mathbf{1}}^\top\left( \mathbf{U}^{-1} \otimes \mathbf{U}^{-1} \right)^\top\left( \sum_{k=1}^{K} (\mathbf{I} - |\psi_{k}|^{2} \mathbf{\Lambda} \otimes \mathbf{\Lambda})^{-1} \right)\left( \mathbf{U} \otimes \mathbf{U} \right)^\top\vec{\mathbf{1}}.
\end{aligned}
\label{rrreg_iir}
\end{equation}
Since $\mathbf{\Lambda}=\mathrm{diag}(\lambda_{1}, \dots, \lambda_{N})$, then we have   
\begin{equation}
\begin{aligned}
\left[\left( \sum_{k=1}^{K} (\mathbf{I} - |\psi_{k}|^{2} \mathbf{\Lambda} \otimes \mathbf{\Lambda})^{-1}\right) \right]_{i'i'}=\sum_{k=1}^{K}\frac{1}{1 - |\psi_{k}|^{2}\lambda_{i}\lambda_{j}},
\end{aligned}
\label{rrrssseg_iirr}
\end{equation}
where $i'=(i-1)N+j$ and the other entries are all zero. By using the diagonal‐vector product identity $\mathbf{a}^\top\mathrm{diag}(\mathbf{b})\mathbf{c}=\mathbf{b}(\mathbf{a}\odot\mathbf{c})$ with  
\begin{equation}
\begin{aligned}
\mathbf{a}^\top  = \vec{\mathbf{1}}^\top\left( \mathbf{U}^{-1} \otimes \mathbf{U}^{-1} \right)^\top, \mathbf{c} = \left( \mathbf{U} \otimes \mathbf{U} \right)^\top\vec{\mathbf{1}}\\
\mathrm{diag}(\mathbf{b})  = \left( \sum_{k=1}^{K} (\mathbf{I} - |\psi_{k}|^{2} \mathbf{\Lambda} \otimes \mathbf{\Lambda})^{-1}\right),
\end{aligned}
\label{rrreg_iirr}
\end{equation}
(\ref{r3_final}) follows. The proof is completed.

\subsection{Proof of Proposition 3}
\label{app:p3}
 By substitution and total expectation property \(\mathbb{E}[\mathbf{X}] = \mathbb{E}[\mathbb{E}[\mathbf{X}|\mathbf{Y}]]\), it shows
    \begin{equation}
\begin{aligned}
\mathbb{E}[\widetilde{\mathbf{y}}_{\mathbf{n}_{t-1}}\widetilde{\mathbf{y}}^\mathrm{H}_{\mathbf{n}_{t-1}}]=\mathbb{E}\Big[\mathbf{H}_{\mathbf{n}_{t-1}} (\mathcal{\Phi}_{t:T-1})(\mathbf{S}_{t-1}-\mathbf{D}_{t-1})\mathbb{E}[\mathbf{n}_{t-1}\mathbf{n}^\mathrm{H}_{t-1}](\mathbf{S}_{t-1}-\mathbf{D}_{t-1})^\mathrm{H}\mathbf{H}^\mathrm{H}_{\mathbf{n}_{t-1}} (\mathcal{\Phi}_{t:T-1})\Big].
  \end{aligned}
\label{p311}
\end{equation}
As \(\mathbb{E}[\mathbf{n}_{t-1}\mathbf{n}^\mathrm{H}_{t-1}]=\sigma_{t-1}^2\mathbf{I}\), we can further obtain 
    \begin{equation}
\begin{aligned}
\mathbb{E}[\widetilde{\mathbf{y}}_{\mathbf{n}_{t-1}}\widetilde{\mathbf{y}}^\mathrm{H}_{\mathbf{n}_{t-1}}]=\sigma_{t-1}^2\mathbb{E}\Big[\mathbf{H}_{\mathbf{n}_{t-1}} (\mathcal{\Phi}_{t:T-1})(\mathbf{S}_{t-1}-\mathbf{D}_{t-1})(\mathbf{S}_{t-1}-\mathbf{D}_{t-1})^\mathrm{H}\mathbf{H}^\mathrm{H}_{\mathbf{n}_{t-1}} (\mathcal{\Phi}_{t:T-1})\Big].
  \end{aligned}
\label{p312}
\end{equation}
By using real-valued \(\mathbf{X^\mathrm{H}}=\mathbf{X^\top}\), \(\mathrm{tr}(\mathbb{E}[\mathbf{X}])=\mathbb{E}[\mathrm{tr}(\mathbf{X})]\), \(\mathrm{tr}(\mathbf{XY})=\mathrm{tr}(\mathbf{YX})\), we cyclically reorder the terms to obtain 
    \begin{equation}
\begin{aligned}
\mathrm{tr}\bigg(\mathbb{E}[\widetilde{\mathbf{y}}_{\mathbf{n}_{t-1}}\widetilde{\mathbf{y}}^\mathrm{H}_{\mathbf{n}_{t-1}}]\bigg)=\sigma_{t-1}^2\mathbb{E}\bigg[\mathrm{tr}\Big((\mathbf{S}_{t-1}-\mathbf{D}_{t-1})^\mathrm{H}\mathbf{H}^\mathrm{H}_{\mathbf{n}_{t-1}} (\mathcal{\Phi}_{t:T-1})\mathbf{H}_{\mathbf{n}_{t-1}} (\mathcal{\Phi}_{t:T-1})(\mathbf{S}_{t-1}-\mathbf{D}_{t-1})\Big)\bigg].
  \end{aligned}
\label{p313}
\end{equation}
As $\mathbf{G}_{\mathbf{n}_{t-1}}(\mathcal{\Phi}_{t:T-1})=\mathbf{H}^\top_{\mathbf{n}_{t-1}} (\mathcal{\Phi}_{t:T-1})\mathbf{H}_{\mathbf{n}_{t-1}} (\mathcal{\Phi}_{t:T-1})$, \(\mathbb{E}[\mathrm{tr}(\mathbf{X})]=\mathrm{tr}(\mathbb{E}[\mathbf{X}]) \) and \(\mathbb{E}[\mathbf{S}_{t-1}]=\overline{\mathbf{S}}\), (\ref{p3}) follows. The proof is completed.

\subsection{Proof of Remark 5}
\label{app:r5}
By applying $\mathrm{tr}(\mathbf{XY})\le\Vert\mathbf{X} \Vert_{2}\mathrm{tr}(\mathbf{Y})$,  \(\mathrm{tr}(\mathbb{E}[\mathbf{X}]) = \mathbb{E}[\mathrm{tr}(\mathbf{X})]\), and $\mathrm{tr}(\mathbf{XX^\top})\le \mathrm{rank}(\mathbf{X})\Vert\mathbf{X}\Vert^{2}_{2}$, $G_{t-1}$ from~(\ref{p3}) can be bounded, satisfies the following chain: 
\begin{equation}
\begin{aligned}
G_{t-1} 
& = \mathrm{tr}\Big(\mathbb{E}\big[\mathbf{G}_{\mathbf{n}_{t-1}}(\mathcal{\Phi}_{t:T-1}) \mathbf{S}_{t-1}\mathbf{S}^\top_{t-1}\big]\Big) \\
& \le \Big\Vert\mathbb{E}\big[\mathbf{G}_{\mathbf{n}_{t-1}}(\mathcal{\Phi}_{t:T-1})\big] \Big\Vert_{2}\mathrm{tr}\Big(\mathbb{E}\big[\mathbf{S}_{t-1}\mathbf{S}^\top_{t-1}\big]\Big)\\
& \le N\rho^{2}_{\star}\Big\Vert\mathbb{E}\big[\mathbf{G}_{\mathbf{n}_{t-1}}(\mathcal{\Phi}_{t:T-1})\big] \Big\Vert_{2},
\end{aligned}
\label{r5}
\end{equation}
with $\mathrm{rank}(\mathbf{S}_{t-1})\le N$ and $\Vert\mathbf{S}_{t-1}\Vert_{2}\le \rho_{\star}$. Then 
 based on the spectral norm subadditivity $\Vert\mathbf{X}+\mathbf{Y} \Vert_{2} \le \Vert\mathbf{X} \Vert_{2}+\Vert\mathbf{Y} \Vert_{2}$, Jensen’s inequality $\big\Vert\mathbb{E}[\mathbf{X}]\big\Vert_{2}\le \mathbb{E}\big[\Vert\mathbf{X}\Vert_{2}\big]$, and the expression in (\ref{ker}), (\ref{r5}) can be further bounded as 
\begin{equation}
\begin{aligned}
G_{t-1}& \le N\rho^{2}_{\star}\left(\sum_{\tau_1 =t}^{T}\sum_{\tau_2 =t}^{T}\Big\Vert\phi_{\tau_1}\phi_{\tau_2}\mathbb{E}\big[\mathcal{\Phi}_{t:\tau_1-1}^\top\mathcal{\Phi}_{t:\tau_2-1}\big]\Big\Vert_{2}\right)\\
& \le N\rho^{2}_{\star}\left(\sum_{\tau_1 =t}^{T}\sum_{\tau_2 =t}^{T}|\phi_{\tau_1}\phi_{\tau_2}|\mathbb{E}\Big[\big\Vert\mathcal{\Phi}_{t:\tau_1-1}^\top\mathcal{\Phi}_{t:\tau_2-1}\big\Vert_{2}\Big]\right).
\end{aligned}
\label{r5_}
\end{equation}
From (\ref{ker})-(\ref{vec}) and the sub‐multiplicativity of the spectral norm \(\|\mathbf{XY}\|_{2} \leq \|\mathbf{X}\|_{2} \|\mathbf{Y}\|_{2}\), we find a new bound for (\ref{r5_}) 
\begin{equation}
\begin{aligned}
G_{t-1} & \le N\rho^{2}_{\star}\left(\sum_{\tau_1 =t}^{T}\sum_{\tau_2 =t}^{T}|\phi_{\tau_1}\phi_{\tau_2}|\Vert\mathbf{S}_{t}\Vert^{2l}_{2} \Vert\overline{\mathbf{S}}^{|\tau_2-\tau_1|}\Vert_{2}\right)\\
& \le N\rho^{2}_{\star}\left(\sum_{\tau_1 =t}^{T}\sum_{\tau_2 =t}^{T}|\phi_{\tau_1}\phi_{\tau_2}|p^{|\tau_2-\tau_1|}\rho^{\tau_2+\tau_1-2t}_{\star}\right)
\end{aligned}
\label{r555}
\end{equation}
by $l= \textnormal{min} \{\tau_1,\tau_2\}-t$ and $\Vert\overline{\mathbf{S}}\Vert_{2}=p\rho_{\star}$. Then vector form of (\ref{r555}) can be easily written as
\begin{equation}
\begin{aligned}
G_{t-1} \le N\rho^{2}_{\star}\left(\boldsymbol{\phi'}^\top \mathbf{V'}_{t-1}\boldsymbol{\phi'}\right),
\end{aligned}
\label{r55555}
\end{equation}
owing to the fact that $[\boldsymbol{\phi'}]_{\tau_1+1}=\vert\phi_{\tau_1}\vert,[\boldsymbol{\phi'}]_{\tau_2+1}=\vert\phi_{\tau_2}\vert$, and $\big[\mathbf{V'}_{t-1}\big]_{(\tau_1+1)(\tau_2+1)}=p^{|\tau_2-\tau_1|}\rho^{\tau_2+\tau_1-2t}_{\star}$. As $\tau_1,\tau_2>t$, all remaining entries in $\mathbf{V'}_{t-1}$ are zero. Then after the summation over $t$, (\ref{r5_final}) follows. The proof is completed.

\subsection{Proof of Proposition 4}
\label{app:p4}
With $\widetilde{\mathbf{w}}^{(k)}_{0}=\mathbf{0}$, analogous to (\ref{p211}), $\widetilde{\mathbf{w}}^{(k)}_{t}$ satisfies
\begin{equation}
\begin{aligned}
\widetilde{\mathbf{w}}^{(k)}_{t} = \sum_{\tau=0}^{t-1}\psi_{k}^{t-\tau-1}\mathcal{\Phi}_{\tau+1:t-1}(\psi_{k}\mathbf{S}_{\tau}-\mathbf{D}^{(k)})\mathbf{n}^{(k)}_{\tau}.
 \end{aligned}
\label{pp4}
\end{equation}
Then  $\mathbb{E} [\widetilde{\mathbf{w}}^{(k)}_{t} (\widetilde{\mathbf{w}}^{(k)}_{t})^\mathrm{H} ]$ is given by 
\begin{equation}
\begin{aligned}
&\mathbb{E} [\widetilde{\mathbf{w}}^{(k)}_{t} (\widetilde{\mathbf{w}}^{(k)}_{t})^\mathrm{H} ] = \mathbb{E}\bigg[\sum_{\tau_1=0}^{t-1}\sum_{\tau_2=0}^{t-1}\psi_{k}^{t-\tau_1-1}\mathcal{\Phi}_{\tau_1+1:t-1}(\psi_{k}\mathbf{S}_{\tau_1}-\mathbf{D}^{(k)})\mathbf{n}^{(k)}_{\tau_1}(\mathbf{n}^{(k)}_{\tau_2})^\mathrm{H} (\psi_{k}\mathbf{S}_{\tau_2}-\mathbf{D}^{(k)})^\mathrm{H}\mathcal{\Phi}^\mathrm{H}_{\tau_2+1:t-1}(\psi^{*}_{k})^{t-\tau_2-1}\bigg].
 \end{aligned}
\label{pp4ss}
\end{equation}
Since \(\mathbb{E}[\mathbf{X}] = \mathbb{E}[\mathbb{E}[\mathbf{X}|\mathbf{Y}]]\), $\mathbb{E}[\mathbf{n}^{(k)}_{\tau_1}(\mathbf{n}^{(k)}_{\tau_2})^\mathrm{H}] = (\sigma^{(k)})^2 \mathbf{I}$ if $\tau_1=\tau_2$ and $\mathbf{0}$ if $\tau_1\neq\tau_2$, and real-valued \(\mathbf{X^\mathrm{H}}=\mathbf{X^\top}\), (\ref{pp4ss}) can be rewritten as 
\begin{equation}
\begin{aligned}
\mathbb{E} [\widetilde{\mathbf{w}}^{(k)}_{t} (\widetilde{\mathbf{w}}^{(k)}_{t})^\mathrm{H} ] &= (\sigma^{(k)})^2\sum_{\tau=0}^{t-1}|\psi_{k}|^{2(t-\tau-1)}\mathbb{E}\Big[\mathcal{\Phi}_{\tau+1:t-1} (\psi_{k}\mathbf{S}_{\tau}-\mathbf{D}^{(k)})(\psi_{k}\mathbf{S}_{\tau}-\mathbf{D}^{(k)})^\mathrm{H}\mathcal{\Phi}^\top_{\tau+1:t-1}\Big].
 \end{aligned}
\label{pp4sss}
\end{equation}
By \(\mathrm{tr}(\mathbb{E}[\mathbf{X}]) = \mathbb{E}[\mathrm{tr}(\mathbf{X})]\), \(\mathrm{tr}(\mathbf{XY}) = \mathrm{tr}(\mathbf{YX})\), \(\mathbb{E}[\mathrm{tr}(\mathbf{X})]=\mathrm{tr}(\mathbb{E}[\mathbf{X}]) \), we can write 
\begin{equation}
\begin{aligned}
\mathrm{tr}&\bigg(\mathbb{E} [\widetilde{\mathbf{w}}^{(k)}_{t} (\widetilde{\mathbf{w}}^{(k)}_{t})^\mathrm{H} ]\bigg) = (\sigma^{(k)})^2\mathrm{tr}\bigg(\sum_{\tau=0}^{t-1}|\psi_{k}|^{2(t-\tau-1)}\mathbb{E}\Big[(\psi_{k}\mathbf{S}_{\tau}-\mathbf{D}^{(k)})^\mathrm{H}\mathcal{\Phi}^\top_{\tau+1:t-1}\mathcal{\Phi}_{\tau+1:t-1}(\psi_{k}\mathbf{S}_{\tau}-\mathbf{D}^{(k)})\Big]\bigg).
 \end{aligned}
\label{pp4sss44}
\end{equation}
Since \(\mathcal{\Phi}_{\tau+1:t-1}\) and \(\mathbf{S}_{\tau}\) are independent in the summation for every $\tau$, we can rewrite (\ref{pp4sss44}) using total expectation \(\mathbb{E}[\mathbf{X}] = \mathbb{E}[\mathbb{E}[\mathbf{X}|\mathbf{Y}]]\) and linearity $\mathbb{E}[\mathbf{X}+\mathbf{Y}] = \mathbb{E}[\mathbf{X}]+\mathbb{E}[\mathbf{Y}]$ as
\begin{equation}
\begin{aligned}
&\mathrm{tr}\bigg(\mathbb{E} [\widetilde{\mathbf{w}}^{(k)}_{t} (\widetilde{\mathbf{w}}^{(k)}_{t})^\mathrm{H} ]\bigg) = (\sigma^{(k)})^2\mathrm{tr}\bigg(\mathbb{E}\Big[(\psi_{k}\mathbf{S}_{\star}-\mathbf{D}^{(k)})^\mathrm{H}\mathbb{E}\Big[\sum_{\tau=0}^{t-1}|\psi_{k}|^{2(t-\tau-1)}\mathcal{\Phi}^\top_{\tau+1:t-1}\mathcal{\Phi}_{\tau+1:t-1}\Big](\psi_{k}\mathbf{S}_{\star}-\mathbf{D}^{(k)})\Big]\bigg),
 \end{aligned}
\label{pp4sww44}
\end{equation}
where \(\mathbf{S}_\star\) denotes a random graph shift operator that is not associated with any specific time stamp but is defined solely by its independence properties. Then, let $t\to \infty$, it holds
\begin{equation}
\begin{aligned}
\lim_{t\to \infty}\mathbb{E}\Big[\sum_{\tau=0}^{t-1}|\psi_{k}|^{2(t-\tau-1)}\mathcal{\Phi}^\top_{\tau+1:t-1}\mathcal{\Phi}_{\tau+1:t-1}\Big]\to \mathbf{W}^{(k)}_{\mathcal{\Phi}}=\sum_{\tau=0}^{\infty}\mathbb{E}\big[|\psi_{k}|^{2\tau}\mathcal{\Phi}_{0:\tau-1}^\top\mathcal{\Phi}_{0:\tau-1}\big].
 \end{aligned}
\label{pp}
\end{equation}
Finally, let $\mathbf{S}_{\star}$ be instantiated as $\mathbf{S}_{t}$ and follow the expansion rule (\ref{p215}). Then (\ref{p4}) follows. The proof is completed.

\subsection{Proof of Lemma 1}
\label{app:l1}
We shall prove the convergence of \(\mathbf{W}^{(k)}_{\mathcal{\Phi}}\) from the spectral norm analysis. By applying Jensen's inequality $\big\Vert\mathbb{E}[\mathbf{X}]\big\Vert_{2}\le \mathbb{E}\big[\Vert\mathbf{X}\Vert_{2}\big]$, any subterm of \(\mathbf{W}^{(k)}_{\mathcal{\Phi}}\) satisfies
\begin{equation}
\begin{aligned}
\Big\Vert \mathbb{E} \big[|\psi_{k}|^{2\tau} \mathcal{\Phi}_{0:\tau-1}^\top \mathcal{\Phi}_{0:\tau-1}\big] \Big\Vert_{2} \leq \mathbb{E} \Big[\big\Vert |\psi_{k}|^{2\tau} \mathcal{\Phi}_{0:\tau-1}^\top\mathcal{\Phi}_{0:\tau-1} \big\Vert_{2}\Big].
\end{aligned}
\end{equation}
From the sub‐multiplicativity of the spectral norm \(\|\mathbf{XY}\|_{2} \leq \|\mathbf{X}\|_{2} \|\mathbf{Y}\|_{2}\), we can obtain  
\begin{equation}
\begin{aligned}
\mathbb{E} \Big[\big\Vert |\psi_{k}|^{2\tau} \mathcal{\Phi}_{0:\tau-1}^\top\mathcal{\Phi}_{0:\tau-1} \big\Vert_{2}\Big] \leq \mathbb{E} \Big[(\|\psi_{k} \mathbf{S}_{t}\|_{2})^{2\tau}\Big].
\end{aligned}
\end{equation}
Since \(\|\psi_k \mathbf{S}_{t}\|_{2} \leq |\psi_k \rho_{\star}| < 1\), it follows that
\begin{equation}
\begin{aligned}
\Big\Vert \mathbb{E} \big[|\psi_{k}|^{2\tau} \mathcal{\Phi}_{0:\tau-1}^\top \mathcal{\Phi}_{0:\tau-1}\big] \Big\Vert_{2} \leq \mathbb{E} \Big[(\|\psi_{k} \mathbf{S}_{t}\|_{2})^{2\tau}\Big] \leq |\psi_k \rho_{\star}|^{2\tau}.
\end{aligned}
\end{equation}
Since \(0 < |\psi_k \rho_{\star}| < 1\), the term \(|\psi_k \rho_{\star}|^{2\tau}\) converges. Then, $\mathbf{W}^{(k)}_{\mathcal{\Phi}}  = \sum_{\tau=0}^{\infty} \mathbb{E} \left[|\psi_{k}|^{2\tau} \mathcal{\Phi}_{0:\tau-1}^\top  \mathcal{\Phi}_{0:\tau-1}\right]$ converges. Finally, by applying total expectation \(\mathbb{E}[\mathbf{X}] = \mathbb{E}[\mathbb{E}[\mathbf{X}|\mathbf{Y}]]\), (\ref{l1}) follows. The proof is completed.

\subsection{Proof of Lemma 2}
\label{app:l2}
Leveraging total expectation \(\mathbb{E}[\mathbf{X}] = \mathbb{E}[\mathbb{E}[\mathbf{X}|\mathbf{Y}]]\), the term 
\begin{equation}
\begin{aligned}
& \mathbb{E}\bigg[\prod_{t'=1}^{\tau\leftarrow}\big(\mathbf{I}+(\psi^{*}_k\mathbf{S}^\top - \mathbf{I})\mathbf{P}_{\mathcal{T}_{t'}}\big)\times \prod_{t'=1}^{\rightarrow\tau}\big(\mathbf{I}+\mathbf{P}_{\mathcal{T}_{t'}} (\psi_k\mathbf{S} - \mathbf{I})\big)\bigg]\\
& = \mathbb{E}\bigg[ \prod_{t'=2}^{\tau\leftarrow}\big(\mathbf{I}+(\psi^{*}_k\mathbf{S}^\top - \mathbf{I})\mathbf{P}_{\mathcal{T}_{t'}}\big)\times \mathbb{E}\Big[\big(\mathbf{I}+(\psi^{*}_k\mathbf{S}^\top - \mathbf{I})\mathbf{P}_{\mathcal{T}_{1}}\big)\\ 
&\times\big(\mathbf{I}+\mathbf{P}_{\mathcal{T}_{1}} (\psi_k\mathbf{S} - \mathbf{I})\big)\Big]
\prod_{t'=2}^{\rightarrow\tau}\big(\mathbf{I}+\mathbf{P}_{\mathcal{T}_{t'}} (\psi_k\mathbf{S} - \mathbf{I})\big)\bigg].
\end{aligned}
\label{TERM}
\end{equation}
Then the middle expectation can be written as 
\begin{equation}
\begin{aligned}
 &\mathbb{E}\Big[\big(\mathbf{I}+(\psi^{*}_k\mathbf{S}^\top - \mathbf{I})\mathbf{P}_{\mathcal{T}_{1}}\big)\big(\mathbf{I}+\mathbf{P}_{\mathcal{T}_{1}} (\psi_k\mathbf{S} - \mathbf{I})\big)\Big]\\
 & = \mathbb{E}\Big[\mathbf{I}+|\psi_{k}|^{2}\mathbf{S}^\top\mathbf{P}_{\mathcal{T}_{1}}\mathbf{S}-\mathbf{P}_{\mathcal{T}_{1}}\Big]\\
 & = \mathbf{I}+|\psi_{k}|^{2}\mathbf{S}^\top\mathbf{P}\mathbf{S}-\mathbf{P},
\end{aligned}
\end{equation}
with the fact $\mathbf{P}_{\mathcal{T}_{t}}\mathbf{P}_{\mathcal{T}_{t}} = \mathbf{P}_{\mathcal{T}_{t}}$ and $\mathbb{E}[\mathbf{P}_{\mathcal{T}_t}] = \mathbf{P}$. Since $\mathbf{P} = p\mathbf{I}$ and \(\Vert \psi_{k}\mathbf{S}\Vert_{2}<1 \to |\psi_{k}|^{2}\mathbf{S}^\top\mathbf{S}\prec \mathbf{I}\), we can have 
\begin{equation}
\begin{aligned}
 &\mathbf{I}+|\psi_{k}|^{2}\mathbf{S}^\top\mathbf{P}\mathbf{S}-\mathbf{P}=\mathbf{I}+p|\psi_{k}|^{2}\mathbf{S}^\top\mathbf{S}-p\mathbf{I}\prec\mathbf{I}.
\end{aligned}
\label{loose}
\end{equation}
This means $\exists \epsilon \in (0,1)$ makes (\ref{loose}) tight
\begin{equation}
\begin{aligned}
 \mathbf{I}+p|\psi_{k}|^{2}\mathbf{S}^\top\mathbf{S}-p\mathbf{I}\preceq\epsilon\mathbf{I}.
\end{aligned}
\label{tight}
\end{equation}
By substituting (\ref{tight}) into (\ref{TERM}) and iteratively applying total expectation \(\mathbb{E}[\mathbf{X}] = \mathbb{E}[\mathbb{E}[\mathbf{X}|\mathbf{Y}]]\), we have 
\begin{equation}
\begin{aligned}
& \mathbb{E}\bigg[\prod_{t'=1}^{\tau\leftarrow}\big(\mathbf{I}+(\psi^{*}_k\mathbf{S}^\top - \mathbf{I})\mathbf{P}_{\mathcal{T}_{t'}}\big) \times \prod_{t'=1}^{\rightarrow\tau}\big(\mathbf{I}+\mathbf{P}_{\mathcal{T}_{t'}} (\psi_k\mathbf{S} - \mathbf{I})\big)\bigg] \preceq\epsilon^{\tau}\mathbf{I}.
\end{aligned}
\label{upterm}
\end{equation}
As $\sum^{\infty}_{\tau=0}\epsilon^{\tau}\mathbf{I}$ is a geometric series that absolutely converges, then $\mathbf{W}^{(k)}_{\mathbf{P}}$ converges. Then by total expectation \(\mathbb{E}[\mathbf{X}] = \mathbb{E}[\mathbb{E}[\mathbf{X}|\mathbf{Y}]]\) again, (\ref{l2}) follows. The proof is completed. 
%{\appendices
%\section*{Proof of the First Zonklar Equation}
%Appendix one text goes here.
% You can choose not to have a title for an appendix if you want by leaving the argument blank
%\section*{Proof of the Second Zonklar Equation}
%Appendix two text goes here.}
\subsection{Proof of Remark 7}
\label{app:r7}
With the symmetry, $\Vert\mathbf{S}\Vert_{2}=\rho_{\star}$ holds. Then we can specify $\epsilon$ in (\ref{tight}) as
\begin{equation}
\begin{aligned}
\epsilon=\rho(\mathbf{I}+p|\psi_{k}|^{2}\mathbf{S}^\top\mathbf{S}-p\mathbf{I})= 1+p|\psi_{k}|^{2}\rho^{2}_{\star}-p.
\end{aligned}
\label{epss}
\end{equation}
By substituting (\ref{epss}) into (\ref{upterm}), we have
\begin{equation}
\begin{aligned}
\mathbf{W}^{(k)}_{\mathbf{P}} \preceq \sum^{\infty}_{\tau=0}(1+p|\psi_{k}|^{2}\rho^{2}_{\star}-p)^{\tau}\mathbf{I}=\frac{1}{p-p|\psi_{k}|^{2}\rho^{2}_{\star}}\mathbf{I}.
\end{aligned}
\label{upespp}
\end{equation}
Then, it shows 
\begin{equation}
\begin{aligned}
\mathbb{E}[\mathbf{P}_{\mathcal{T}_t}\mathbf{W}^{(k)}_{\mathbf{P}}\mathbf{P}_{\mathcal{T}_t}] \preceq \frac{1}{p-p|\psi_{k}|^{2}\rho^{2}_{\star}}\mathbb{E}[\mathbf{P}_{\mathcal{T}_t}\mathbf{P}_{\mathcal{T}_t}]=\frac{1}{1-|\psi_{k}|^{2}\rho^{2}_{\star}}\mathbf{I},
\end{aligned}
\label{upespp_2}
\end{equation}
with $\mathbf{P}_{\mathcal{T}_{t}}\mathbf{P}_{\mathcal{T}_{t}} = \mathbf{P}_{\mathcal{T}_{t}}$ and $\mathbb{E}[\mathbf{P}_{\mathcal{T}_t}] = \mathbf{P} =p\mathbf{I}$. Then by taking the spectral norm of (\ref{upespp_2}), (\ref{r7_final}) follows. The proof is completed. 

\section{Detailed Derivation of QEF-ATC for Regression over Networks}
In this section, we provide the detailed derivations of QEF-ATC for Section \ref{s5b}.
\subsection{Corollary of Theorem 2}
\label{Appb1}
Based on the noise propagation model by (\ref{Noiseppp}), its error feedback version
\begin{equation}
\begin{aligned}
\boldsymbol{\widetilde{\xi}}_{t} &= \mathcal{B}\mathcal{A}\boldsymbol{\widetilde{\xi}}_{t-1} + \mathcal{B}(\mathcal{A}-\mathcal{D})\boldsymbol{n}_{t-1},
\end{aligned}
\label{qefr1}
\end{equation}
where $\mathcal{D}=\mathrm{blkdiag}(\mathbf{D}_{1},\dots,\mathbf{D}_{N})$ and $\mathbf{D}_i\in\mathbb{R}^{M\times M}$. Analogous to Proposition \ref{prop2}, the total noise gain
\begin{equation}
    \begin{aligned}
     G_{\xi}+I_{\xi} & = \mathrm{tr} \Big( (\mathcal{A}-\mathcal{D})^\mathrm{H}\mathcal{B}^\mathrm{H}\mathbf{W}_{\xi}\mathcal{B}(\mathcal{A}-\mathcal{D})\Big)
    \end{aligned}
    \label{gig}
\end{equation}
with $G_{\xi}$ is the original noise gain and $I_{\xi}$ is the mitigation, respectively. By parameterizing $\mathbf{D}_i=\alpha_{i}\mathbf{I}$, $\mathcal{D}$ becomes a diagonal matrix and is linked to Theorem \ref{Th2}. Therefore, we can directly have the Corollary:
\begin{corollary}
    Given the propagation of noise (\ref{qefr1}) and the gain (\ref{gig}), the feedback coefficient $\alpha_i$ from $\mathbf{D}_i=\alpha_{i}\mathbf{I}$ satisfies 
    \begin{equation}
    \begin{aligned}
     \alpha_i & = \frac{\sum_{j=1}^{M}[\mathcal{B}^\mathrm{H}\mathbf{W}_{\xi}\mathcal{B}\mathcal{A}]_{\big((i-1)M +j\big)\big((i-1)M +j\big)}}{\sum_{j=1}^{M}[\mathcal{B}^\mathrm{H}\mathbf{W}_{\xi}\mathcal{B}]_{\big((i-1)M +j\big)\big((i-1)M +j\big)}}.
    \end{aligned}
    \label{gigss}
\end{equation}
\end{corollary}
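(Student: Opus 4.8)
The plan is to regard the total noise gain (\ref{gig}) as a quadratic function of the real scalars $\alpha_1,\dots,\alpha_N$ and minimize it coordinate-wise, exactly mirroring the route from Proposition~\ref{prop2} to Theorem~\ref{Th2}. First I would note that $\mathcal{A}$, $\mathcal{B}$, and the Gramian $\mathbf{W}_{\xi}$ are all real, so $(\cdot)^\mathrm{H}=(\cdot)^\top$ throughout; moreover $\mathbf{W}_{\xi}$ is symmetric and positive semi-definite, since unrolling its Lyapunov equation in (\ref{Noiseppp}) gives the convergent sum $\mathbf{W}_{\xi}=\sum_{\tau\ge 0}\big((\mathcal{B}\mathcal{A})^{\tau}\big)^{\top}(\mathcal{B}\mathcal{A})^{\tau}$. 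Abbreviating $\mathbf{M}:=\mathcal{B}^\top\mathbf{W}_{\xi}\mathcal{B}$ (real symmetric and PSD), I would expand (\ref{gig}) into $\mathrm{tr}(\mathcal{A}^\top\mathbf{M}\mathcal{A})-2\,\mathrm{tr}(\mathcal{D}\mathbf{M}\mathcal{A})+\mathrm{tr}(\mathcal{D}\mathbf{M}\mathcal{D})$, where the two cross terms merge into $-2\,\mathrm{tr}(\mathcal{D}\mathbf{M}\mathcal{A})$ after applying $\mathrm{tr}(\mathbf{X})=\mathrm{tr}(\mathbf{X}^\top)$ together with $\mathbf{M}^\top=\mathbf{M}$ and $\mathcal{D}^\top=\mathcal{D}$. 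The first term is the $\mathcal{D}$-independent gain $G_{\xi}$, so it suffices to minimize the remaining mitigation $I_{\xi}$.

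The crucial observation is that, although $\mathcal{D}=\mathrm{blkdiag}(\alpha_1\mathbf{I},\dots,\alpha_N\mathbf{I})$ is nominally block-diagonal, each block $\alpha_i\mathbf{I}$ is itself diagonal, so $\mathcal{D}$ is a genuine diagonal matrix with entries $d_k=\alpha_{i(k)}$ for $i(k)=\lceil k/M\rceil$. This makes the two surviving terms separable: $\mathrm{tr}(\mathcal{D}\mathbf{M}\mathcal{D})=\sum_k d_k^2[\mathbf{M}]_{kk}$ and $\mathrm{tr}(\mathcal{D}\mathbf{M}\mathcal{A})=\sum_k d_k[\mathbf{M}\mathcal{A}]_{kk}$, with no surviving cross-block coupling. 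Grouping $k=(i-1)M+j$ into blocks then writes the mitigation as $N$ independent scalar quadratics,
\begin{equation}
I_{\xi}=\sum_{i=1}^{N}\Big(\alpha_i^2\sum_{j=1}^{M}[\mathbf{M}]_{\kappa\kappa}-2\,\alpha_i\sum_{j=1}^{M}[\mathbf{M}\mathcal{A}]_{\kappa\kappa}\Big),\qquad \kappa=(i-1)M+j.
\end{equation}

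Finally I would set $\partial I_{\xi}/\partial\alpha_i=0$ for each $i$, which directly yields (\ref{gigss}) upon restoring $\mathbf{M}=\mathcal{B}^\mathrm{H}\mathbf{W}_{\xi}\mathcal{B}$; the PSD property of $\mathbf{M}$ ensures each diagonal sum $\sum_j[\mathbf{M}]_{\kappa\kappa}$ is nonnegative, so the second derivative $2\sum_j[\mathbf{M}]_{\kappa\kappa}\ge 0$ certifies a minimizer. I expect the only delicate point to be justifying that the block-diagonal $\mathcal{D}$ behaves as an ordinary diagonal matrix, so that the cross-block quadratic contributions vanish and the per-node decoupling---hence the closed form with the inner sum over $j$---holds; the rest is the same trace-algebra bookkeeping as in Appendices~\ref{app:t1} and~\ref{app:p2}.
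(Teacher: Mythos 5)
Your proposal is correct and follows essentially the same route as the paper: the paper obtains the Corollary by noting that $\mathcal{D}=\mathrm{blkdiag}(\alpha_1\mathbf{I},\dots,\alpha_N\mathbf{I})$ is an ordinary diagonal matrix and then reusing the quadratic trace minimization underlying Theorems~\ref{Th1} and~\ref{Th2}, exactly as you do, with the tied coefficients within each $M\times M$ block producing the sums over $j$ in (\ref{gigss}). Your additional details (merging the cross terms via symmetry of $\mathcal{B}^\top\mathbf{W}_{\xi}\mathcal{B}$, the per-coordinate decoupling of the trace, and the PSD-based second-order check from unrolling the Lyapunov equation in (\ref{Noiseppp})) are all sound and simply spell out what the paper leaves implicit.
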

\subsection{Equivalent Form of Algorithm}
\label{Appb2}
Here we show the QEF-ATC with differential quantization for the regression task. Based on (\ref{qefr1})-(\ref{gigss}), the explicit form of the quantized version of (\ref{ATC}) with error feedback is given by:
\begin{equation}
    \begin{aligned}
    \xi_{i,t} &= \mathbf{x}_{i,t-1} - \mu\nabla f_{i}(\mathbf{x}_{i,t-1})\\
    \hat{\xi}_{i,t}&=\hat{\xi}_{i,t-1}+\mathcal{Q}[\xi_{i,t}-\hat{\xi}_{i,t-1}] \\
    \mathbf{x}_{i,t} &= \sum_{j\in\mathcal{N}_{i}}w_{ij}\hat{\xi}_{i,t} - \alpha_i\mathbf{n}_{i,t-1}  
    \end{aligned}
\end{equation}
where the quantization noise is defined as $\mathbf{n}_{i,t-1} = (\xi_{i,t}-\hat{\xi}_{i,t-1})-\mathcal{Q}[\xi_{i,t}-\hat{\xi}_{i,t-1}]$. Notably, we apply error feedback during the combination step, as opposed to the more common approach of gradient compensation. This demonstrates that the benefits of error feedback are not limited to gradient compression.  
\end{document}